\documentclass[11pt,oneside]{article}

\usepackage[numbers]{natbib}

\usepackage[utf8]{inputenc}
\usepackage[T1]{fontenc}
\usepackage{amsmath}
\usepackage{amssymb,amsfonts}
\usepackage{bm}

\usepackage[paper=a4paper, hscale=0.75, vscale=0.79]{geometry}

\usepackage[dvipsnames]{xcolor}

\usepackage{graphicx}
\graphicspath{{./}}
\DeclareGraphicsExtensions{.pdf,.png,.jpg,.eps}
\usepackage{subfig}


\newcommand*\dif{\mathop{}\!\mathrm{d}} 

\DeclareMathOperator{\atan}{atan}

\DeclareMathOperator{\exx}{\mathbf{E}}

\DeclareMathOperator{\prr}{\mathbf{P}}

\DeclareMathOperator{\vaa}{var}
\DeclareMathOperator*{\argmin}{arg\,min}

\def\FF{\mathcal{F}}

\def\HH{\mathcal{H}}

\def\LL{\mathcal{L}}

\def\OO{\mathcal{O}}

\def\RR{\mathbb{R}}

\def\VV{\mathcal{V}}

\def\XX{\mathcal{X}}
\def\YY{\mathcal{Y}}
\def\ZZ{\mathcal{Z}}

\newcommand{\overbar}[1]{\mkern 1.5mu\overline{\mkern-1.5mu#1\mkern-1.5mu}\mkern 1.5mu}

\newcommand*{\defeq}{\mathrel{\vcenter{\baselineskip0.5ex \lineskiplimit0pt     
                     \hbox{\scriptsize.}\hbox{\scriptsize.}}}
                     =}

\newcommand{\term}[1]{\textcolor{BlueViolet}{\textit{{#1}}}}

\def\clarke{\textup{C}} 
\def\cond{\,|\,} 
\def\ddist{\mu} 
\def\dev{\rho} 
\DeclareMathOperator{\dist}{dist} 
\DeclareMathOperator{\dom}{dom} 
\DeclareMathOperator{\env}{env} 
\def\frechet{\textup{F}} 
\DeclareMathOperator{\inter}{int} 
\DeclareMathOperator{\jrisk}{r} 
\DeclareMathOperator{\jsmooth}{\widetilde{r}} 
\DeclareMathOperator{\loss}{L} 
\DeclareMathOperator{\mpara}{M} 
\DeclareMathOperator{\proj}{\Pi} 
\DeclareMathOperator{\prox}{prox} 
\DeclareMathOperator{\risk}{R} 

\usepackage{amsthm}
\theoremstyle{definition} \newtheorem{defn}{Definition}
\theoremstyle{plain} \newtheorem{prop}[defn]{Proposition}
\theoremstyle{plain} \newtheorem{thm}[defn]{Theorem}
\theoremstyle{plain} \newtheorem{lem}[defn]{Lemma}
\theoremstyle{plain} 
\theoremstyle{remark} \newtheorem{rmk}[defn]{Remark}
\theoremstyle{remark} \newtheorem{ex}[defn]{Example}

\usepackage[pdfauthor={Matthew J. Holland},%
colorlinks=true,%
linkcolor=blue,%
citecolor=blue]{hyperref}
\hypersetup{pdftitle={Learning with risks based on M-location}}

\usepackage{url}
\usepackage{booktabs}
\usepackage{amsfonts}
\usepackage{microtype}
\usepackage{algorithm}
\usepackage{algpseudocode}
\usepackage{enumitem}
\makeatletter
\def\namedlabel#1#2{\begingroup
    #2%
    \def\@currentlabel{#2}%
    \phantomsection\label{#1}\endgroup
}
\makeatother

\title{\textbf{Learning with risks based on M-location}}
\author{
  Matthew J.~Holland\thanks{Please direct correspondence to \texttt{matthew-h@ar.sanken.osaka-u.ac.jp}.}\\
  Osaka University
}
\date{} 

\begin{document}

\maketitle

\begin{abstract}
In this work, we study a new class of risks defined in terms of the location and deviation of the loss distribution, generalizing far beyond classical mean-variance risk functions. The class is easily implemented as a wrapper around any smooth loss, it admits finite-sample stationarity guarantees for stochastic gradient methods, it is straightforward to interpret and adjust, with close links to M-estimators of the loss location, and has a salient effect on the test loss distribution.
\end{abstract}

\tableofcontents

\section{Introduction}\label{sec:intro}

In machine learning, the important yet ambiguous notion of ``good off-sample generalization'' (or ``small test error'') is typically formalized in terms of minimizing the expected value of a random loss $\exx_{\ddist}\loss(h)$, where $h$ is a candidate decision rule and $\loss(h)$ is a random variable on an underlying probability space $(\Omega,\FF,\ddist)$. This setup based on \emph{average} off-sample performance has been famously called the ``general setting of the learning problem'' by \citet{vapnik1999NSLT}, and is central to the decision-theoretic formulation of learning in the influential work of \citet{haussler1992a}. This is by no means a purely theoretical concern; when average performance dictates the ultimate objective of learning, the data-driven \emph{feedback} used for training in practice will naturally be designed to prioritize average performance \citep{bottou2016a,johnson2014a,leroux2013a,shalev2013a}. Take the default optimizers in popular software libraries such as PyTorch or TensorFlow; virtually without exception, these methods amount to efficient implementations of empirical risk minimization. While the minimal expected loss formulation is clearly an intuitive choice, the tacit emphasis on average performance represents an important and non-trivial value judgment, which may or may not be appropriate for any given real-world learning task.

To make this value judgment an explicit part of the machine learning workflow, in this work we consider a generalized class of risk functions, designed to give the user much greater flexibility in terms of how they choose to evaluate performance, while still allowing for theoretical performance guarantees. One core statistical concept is that of the \term{M-location} of the loss distribution under a candidate $h$, defined by
\begin{align}
\mpara(h) \defeq \argmin_{\theta \in \RR} \, \exx_{\ddist}\dev\left(\frac{\loss(h)-\theta}{\sigma}\right).
\end{align}
Here $\dev:\RR \to [0,\infty)$ is a function controlling how we measure deviations, and $\sigma > 0$ is a scaling parameter. Since the loss distribution $\ddist$ is unknown, clearly $\mpara(h)$ is an ideal, unobservable quantity. If we replace $\ddist$ with the empirical distribution induced by a sample $(\loss_1,\ldots,\loss_n)$, then for certain special cases of $\dev$ we get an \term{M-estimator} of the location of $\loss(h)$, a classical notion dating back to \citet{huber1964a}, which justifies our naming. Ignoring integrability concerns for the moment, note that in the special case of $\dev(u) = u^{2}$, we get the classical risk $\mpara(h) = \exx_{\ddist}\loss(h)$, and in the case of $\dev(u) = |u|$, we get $\mpara(h) = \inf\{u: \ddist\{ \loss(h) \leq u \} \geq 0.5 \}$, namely the median of the loss distribution. This rich spectrum of evaluation metrics makes the notion of casting learning problems in terms of minimization of M-locations (via their corresponding M-estimators) very conceptually appealing. However, while the statistical properties of the minima of M-estimators in special cases are understood \citep{brownlees2015a}, the optimization involved is both costly and difficult, making the task of designing and studying $\mpara(\cdot)$-minimizing learning \emph{algorithms} highly intractable. With these issues in mind, we study a closely related alternative which retains the conceptual appeal of raw M-locations, but is more computationally congenial.

\paragraph{Our approach}
With $\sigma$ and $\dev$ as before, our generalized risks will be defined implicitly by
\begin{align}\label{eqn:mrisk_intuitive}
\min_{\theta \in \RR} \left[ \theta + \eta\exx_{\ddist}\dev\left(\frac{\loss(h)-\theta}{\sigma}\right) \right],
\end{align}
where $\eta > 0$ is a weighting parameter that controls the balance of priority between location and deviation. A more formal definition will be given in section \ref{sec:mrisk} (see equations (\ref{eqn:dev_handy})--(\ref{eqn:mrisk_defn_general})), including concrete forms for $\dev(\cdot)$ that are conducive to both fast computation and meaningful learning guarantees. In addition, we will see (cf.~Proposition \ref{prop:risk_props_axioms}) that after minimization, this objective can be written as
\begin{align*}
[\mpara(h) - c_{\mpara}] + \eta \exx_{\ddist}\dev\left(\frac{\loss(h)-[\mpara(h) - c_{\mpara}]}{\sigma}\right),
\end{align*}
where the shift term $c_{\mpara} > 0$ can be simply characterized by the equality
\begin{align*}
\exx_{\ddist}\dev^{\prime}\left(\frac{\loss(h)-[\mpara(h) - c_{\mpara}]}{\sigma}\right) = \frac{\sigma}{\eta},
\end{align*}
noting that $c_{\mpara} \to 0_{+}$ as $\eta \to \infty$. By utilizing smoothness properties of loss functions typical to machine learning problems (e.g., squared error, cross-entropy, etc.), even though the generalized risks need not be convex, they can be shown to satisfy weak notions of convexity, which still admit finite-sample guarantees of near-stationary for stochastic gradient-based learning algorithms (details in section \ref{sec:learning}). This approach has the additional benefit that implementation only requires a single wrapper around any given loss which can be set prior to training, making for easy integration with frameworks such as PyTorch and TensorFlow, while incurring negligible computational overhead.

\paragraph{Our contributions}

The key contribution here is a new concrete class of risk functions, defined and analyzed in section \ref{sec:mrisk}. These risks are statistically easy to interpret, their empirical counterparts are simple to implement in practice, and as we prove in section \ref{sec:learning}, their design allows for standard stochastic gradient-based algorithms to be given competitive finite-sample excess risk guarantees. We also verify empirically (section \ref{sec:empirical}) that the proposed feedback generation scheme has a demonstrable effect on the test distribution of the loss, which as a side-effect can be easily leveraged to outperform traditional ERM implementations, a result which is in line with early insights from \citet{breiman1999a} and \citet{reyzin2006a} regarding the impact of the loss distribution on generalization. More broadly, the choice of \emph{which risk to use} plays a central role in pursuit of increased transparency in machine learning, and our results represent a first step towards formalizing this process.

\paragraph{Relation to existing literature}

With respect to alternative notions of ``risk'' in machine learning, perhaps the most salient example is conditional value-at-risk (CVaR) \citep{prashanth2020a,holland2021c}, namely the expected loss conditioned on it exceeding a quantile at a pre-specified probability level. CVaR allows one to encode a sensitivity to extremely large losses, and admits convexity when the underlying loss is convex, though the conditioning often leaves the effective sample size very small. Other notions such as cumulative prospect theory (CPT) scores have also been considered \citep{bhat2020a,leqi2019a,khim2020a}, but the technical difficulties involved with computation and analysis arguably outweigh the conceptual benefits of learning using such scores. These proposals can all be interpreted as ``location'' parameters of the underlying loss distribution; our risks take the form of a sum of a location and a deviation term, where the location is a shifted M-location, as described above. The basic notion of combining location and deviation information in evaluation is a familiar concept; the mean-variance objective $\exx_{\ddist}\loss(\cdot)+\vaa_{\ddist}\loss(\cdot)$ dates back to classical work by \citet{markowitz1952a}; our proposed class includes this as a special case, but generalizes far beyond it. Mean-variance and other risk function classes are studied by \citet{ruszczynski2006a,ruszczynski2006b}, who give minimizers a useful dual characterization, though our proposed class is not captured by their work (see also Remark \ref{rmk:non_monotone}). We note also that the recent (and independent) work of \citet{lee2020a} considers a form which is similar to (\ref{eqn:mrisk_intuitive}) in the context of empirical risk minimizers; the critical technical difference is that their formulation is restricted to $\dev$ which is monotonic, an assumption which enforces convexity. The special case of mean-variance is also treated in depth in more recent work by \citet{duchi2019a}, who consider stochastic learning algorithms for doing empirical risk minimization with variance-based regularization. Finally, we note that our technical analysis in section \ref{sec:learning} makes crucial use of weak convexity properties of function compositions, an area of active research in recent years \citep{duchi2018a,drusvyatskiy2019a,davis2019b}. Since our proposed objective can be naturally cast as a composition taking us from parameter space to a Banach space and finally to $\RR$, leveraging the insights of these previous works, we extend the existing machinery to handle learning over Banach spaces, and give finite-sample guarantees for arbitrary Hilbert spaces. More details are provided in section \ref{sec:learning}, plus the appendix.

\paragraph{Notation and terminology}

To give the reader an approximate idea of the technical level of this paper, we assume some familiarity with probability spaces, the notion of sub-gradients and the sub-differential of convex functions, as well as special classes of vector spaces like Banach and Hilbert spaces, although the main text is written with a wide audience in mind. Strictly speaking, we will also deal with sub-differentials of non-convex functions, but these technical concepts are relegated to the appendix, where all formal proofs are given. In the main text, to improve readability, we write $\partial f(x)$ to denote the sub-differential of $f$ at $x$, regardless of the convexity of $f$. When we refer to a function being $\lambda$-\term{smooth}, this refers to its gradient being $\lambda$-Lipschitz continuous, and \emph{weak} smoothness just requires such continuity on directional derivatives; all these concepts are given a detailed introduction in the appendix. Throughout this paper, we use $\exx[\cdot]$ for taking expectation, and $\prr$ as a general-purpose probability function. For indexing, we will write $[k] \defeq \{1,\ldots,k\}$. Distance of a vector $v$ from a set $A$ will be denoted by $\dist(v;A) \defeq \inf\{\|v-v^{\prime}\|: v^{\prime} \in A \}$.

\section{A concrete class of risk functions}\label{sec:mrisk}

The risks described by (\ref{eqn:mrisk_intuitive}) are fairly intuitive as-is, but a bit more structure is needed to ensure they are well-defined and useful in practice. To make things more concrete, let us fix $\dev$ as
\begin{align}\label{eqn:dev_handy}
\dev(u) \defeq u \atan(u) - \frac{\log(1+u^2)}{2}, \quad u \in \RR.
\end{align}
This function is handy in that it behaves approximately quadratically around zero, and it is both $\pi/2$-Lipschitz and strictly convex on the real line.\footnote{To see this, note that $\dev^{\prime}(u) = \atan(u)$ and $\dev^{\prime}(\RR) \subset (-\pi/2,\pi/2)$, and $\dev^{\prime\prime}(u) = 1/(1+u^2) > 0$.} Fixing this particular choice of $\dev$ and letting $Z$ be a random variable (any $\FF$-measurable function), we interpolate between mean- and median-centric quantities via the following class of functions, indexed by $\sigma \in [0,\infty]$:
\begin{align}\label{eqn:jrisk_defn}
\jrisk_{\sigma}(Z,\theta) \defeq \theta + \eta \exx_{\ddist} \dev_{\sigma}\left( Z-\theta \right), \text{ where } \dev_{\sigma}(u) \defeq
\begin{cases}
\left|u\right|, & \text{if } \sigma = 0\\
\dev\left(u/\sigma\right), & \text{if } 0 < \sigma < \infty\\
u^2, & \text{if } \sigma = \infty.
\end{cases}
\end{align}
With this class of ancillary functions in hand, it is natural to define
\begin{align}\label{eqn:mrisk_defn_general}
\risk_{\sigma}(Z) \defeq \inf_{\theta \in \RR} \, \jrisk_{\sigma}(Z,\theta)
\end{align}
to construct a class of \term{risk functions}. In the context of learning, we will use this risk function to derive a generalized \term{risk}, namely the composite function $h \mapsto \risk_{\sigma}(\loss(h))$. As a special case, clearly this includes risks of the form (\ref{eqn:mrisk_intuitive}) given earlier. Visualizations of these functions are given in the supplementary appendix. Minimizing $\risk_{\sigma}(\loss(h))$ in $h$ is our formal learning problem of interest.

Before considering learning algorithms, we briefly cover the basic properties of the functions $\jrisk_{\sigma}$ and $\risk_{\sigma}$. Without restricting ourselves to the specialized context of ``losses,'' note that if $Z$ is any square-$\ddist$-integrable random variable, this immediately implies that $|\jrisk_{\sigma}(Z,\theta)| < \infty$ for all $\theta \in \RR$, and thus $\risk_{\sigma}(Z) < \infty$. Furthermore, the following result shows that it is straightforward to set the weight $\eta$ to ensure $\risk_{\sigma}(Z) > -\infty$ also holds, and a minimum exists.
\begin{prop}[Well-defined risks]\label{prop:well_defined}
Assuming that $\exx_{\ddist}Z^2 < \infty$, set $\eta$ based on $\sigma \in [0,\infty]$ as follows: if $\sigma = 0$, take $\eta > 1$; if $0 < \sigma < \infty$, take $\eta > 2\sigma/\pi$; if $\sigma = \infty$, take any $\eta > 0$. Under these settings, the function $\theta \mapsto \jrisk_{\sigma}(Z,\theta)$ is bounded below and takes its minimum on $\RR$. Thus, for each square-$\ddist$-integrable $Z$, there always exists a (non-random) $\theta_{Z} \in \RR$ such that
\begin{align}\label{eqn:mrisk_loc_dev}
\risk_{\sigma}(Z) = \theta_{Z} + \eta \exx_{\ddist}\dev_{\sigma}\left(Z-\theta_{Z}\right).
\end{align}
Furthermore, when $\sigma > 0$, this minimum $\theta_{Z}$ is unique.
\end{prop}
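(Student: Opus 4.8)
The plan is to treat the three regimes for $\sigma$ separately, since the function $\dev_{\sigma}$ has a different form in each, but in all cases the strategy is the same: show that $\theta \mapsto \jrisk_{\sigma}(Z,\theta)$ is continuous and coercive (i.e.\ tends to $+\infty$ as $|\theta|\to\infty$), so that a minimizer on $\RR$ exists by Weierstrass's theorem, and then use strict convexity to get uniqueness when $\sigma > 0$. Continuity in $\theta$ follows in each case from dominated convergence together with the square-integrability of $Z$ (for $\sigma = \infty$ one uses that $\exx_{\ddist}(Z-\theta)^2$ is a polynomial in $\theta$ with finite coefficients; for $0<\sigma<\infty$ the Lipschitz bound on $\dev$ gives an integrable envelope; for $\sigma=0$ the map $\theta\mapsto|Z-\theta|$ is $1$-Lipschitz uniformly). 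So the crux is coercivity, and this is where the stated lower bounds on $\eta$ enter.

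For $\sigma = \infty$: expand $\jrisk_{\infty}(Z,\theta) = \theta + \eta\exx_{\ddist}(Z-\theta)^2 = \eta\theta^2 + (1-2\eta\exx_{\ddist}Z)\theta + \eta\exx_{\ddist}Z^2$, a strictly convex quadratic in $\theta$ for any $\eta > 0$, which is manifestly coercive and has a unique minimizer; this case is immediate. For $0 < \sigma < \infty$: the idea is that $\dev(u) = u\atan(u) - \tfrac12\log(1+u^2)$ grows linearly at infinity with slope $\pi/2$, more precisely $\dev_{\sigma}(u) = \dev(u/\sigma) \ge (\pi/2)|u|/\sigma - C_{\sigma}$ for a constant $C_{\sigma}$ depending only on $\sigma$ (this follows by integrating $\dev^{\prime}(u)=\atan(u)$ and comparing to $(\pi/2)|u|$). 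Hence $\jrisk_{\sigma}(Z,\theta) \ge \theta + \eta\big((\pi/2)\exx_{\ddist}|Z-\theta|/\sigma - C_{\sigma}\big) \ge \theta + (\eta\pi/(2\sigma))\,|\theta - \exx_{\ddist}Z| - \eta C_{\sigma}$ using Jensen. The coefficient of the linear growth is $\eta\pi/(2\sigma) > 1$ exactly under the hypothesis $\eta > 2\sigma/\pi$, so the $+\theta$ term cannot overwhelm it as $\theta \to -\infty$, and as $\theta \to +\infty$ both terms push to $+\infty$; thus $\jrisk_{\sigma}(Z,\cdot)$ is coercive and bounded below. For $\sigma = 0$: here $\jrisk_{0}(Z,\theta) = \theta + \eta\exx_{\ddist}|Z-\theta| \ge \theta + \eta|\theta - \exx_{\ddist}Z|$ by Jensen, and $\eta > 1$ again kills the $+\theta$ term as $\theta\to-\infty$, giving coercivity and a lower bound.

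Finally, for uniqueness when $\sigma > 0$: when $0 < \sigma < \infty$, $\dev$ is strictly convex on $\RR$ (as noted in the footnote, $\dev^{\prime\prime} > 0$), so $\theta\mapsto\dev_{\sigma}(Z(\omega)-\theta)$ is strictly convex for each $\omega$, and taking expectation preserves strict convexity (a convex combination of a strictly convex integrand stays strictly convex as long as the expectation is finite, which it is); adding the linear term $\theta$ keeps strict convexity, so the minimizer is unique. When $\sigma = \infty$ the objective is a strictly convex quadratic, already handled. I expect the main obstacle to be the $0 < \sigma < \infty$ case: one must carefully establish the linear lower bound $\dev(u) \ge (\pi/2)|u| - C$ with an explicit (or at least finite) constant and make sure the resulting bound on $\eta$ matches $2\sigma/\pi$ sharply — a slightly loose estimate would force a worse threshold — and one must confirm that square-integrability (rather than mere integrability) is genuinely what is needed for the continuity/finiteness steps to go through uniformly across the three regimes.
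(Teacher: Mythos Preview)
Your overall architecture matches the paper's: convexity of $\theta\mapsto\jrisk_{\sigma}(Z,\theta)$ plus coercivity gives existence of a minimizer, and strict convexity gives uniqueness for $\sigma>0$. The $\sigma=0$ and $\sigma=\infty$ cases are handled essentially as in the paper.

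There is, however, a genuine gap in your treatment of $0<\sigma<\infty$. The linear lower bound you claim, $\dev(u)\ge(\pi/2)|u|-C$ for some finite constant $C$, is \emph{false}. Writing (for $u>0$)
\[
\frac{\pi}{2}u-\dev(u)=u\left(\frac{\pi}{2}-\atan(u)\right)+\frac{1}{2}\log(1+u^{2}),
\]
the first term tends to $1$ but the logarithmic term diverges, so $(\pi/2)|u|-\dev(u)\to\infty$ and no uniform constant $C$ can work. You correctly flagged this step as the main obstacle; the resolution is that for any $\varepsilon>0$ one does have $\dev(u)\ge(\pi/2-\varepsilon)|u|-C_{\varepsilon}$ (since $\dev^{\prime}(u)=\atan(u)>\pi/2-\varepsilon$ for all large $|u|$), and because the hypothesis $\eta>2\sigma/\pi$ is strict, you can choose $\varepsilon>0$ small enough that $\eta(\pi/2-\varepsilon)/\sigma>1$ still holds, and your Jensen argument then goes through.

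The paper avoids this issue by a different route: it expands $\jrisk_{\sigma}(Z,\theta)$ explicitly as
\[
\theta\left(1-\frac{\eta}{\sigma}\exx_{\ddist}\atan\Big(\frac{Z-\theta}{\sigma}\Big)\right)+\frac{\eta}{\sigma}\exx_{\ddist}Z\atan\Big(\frac{Z-\theta}{\sigma}\Big)-\frac{\eta}{2}\exx_{\ddist}\log\Big(1+\Big(\frac{Z-\theta}{\sigma}\Big)^{2}\Big),
\]
uses monotone convergence to show $\exx_{\ddist}\atan((Z-\theta)/\sigma)\to\pi/2$ as $\theta\to-\infty$, and then observes that the first term eventually grows linearly while the logarithmic term is subordinate. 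This keeps the logarithm visible rather than trying to absorb it into a constant. Your Jensen-based approach (once repaired with the $\varepsilon$-slack) is arguably more elementary and does not require the explicit decomposition or the limit interchange. For uniqueness, the paper differentiates twice under the integral via a Leibniz lemma to get $\partial_{\theta}^{2}\jrisk_{\sigma}>0$; your argument that expectation preserves pointwise strict convexity is shorter and equally valid.
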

\begin{rmk}[Mean-median interpolation]\label{rmk:mean_median}
In order to ensure that risk modulation via $\sigma \in [0,\infty]$ smoothly transitions from a median-centric ($\sigma = 0$ case) to a mean-centric ($\sigma = \infty$ case) location, the parameter $\eta$ plays a key role. Noting that for any $u \in \RR$, for $\dev$ defined by (\ref{eqn:dev_handy}) we have $2\sigma^2 \dev(u/\sigma) \to u^2$ as $\sigma \to \infty$, and thus for large values of $\sigma > 0$ it is natural to set $\eta = 2\sigma^2$. On the other end of the spectrum, since $\sigma \log(1+(u/\sigma)^2) \to 0_{+}$ whenever $\sigma \to 0_{+}$, it is thus natural to set $\eta = \sigma/\atan(\infty) = 2\sigma/\pi$ when $\sigma > 0$ is small. Strictly speaking, in light of the conditions in Proposition \ref{prop:well_defined}, to ensure $\risk_{\sigma}$ is finite one should take $\eta > 2\sigma/\pi$.
\end{rmk}
What can we say about our risk functions $\risk_{\sigma}$ in terms of more traditional statistical risk properties? The form of $\risk_{\sigma}$ given in (\ref{eqn:mrisk_loc_dev}) has a simple interpretation as a weighted sum of ``location'' and ``deviation'' terms. In the statistical risk literature, the seminal work of \citet{artzner1999a} gives an axiomatic characterization of location-based risk functions that can be considered \term{coherent}, while \citet{rockafellar2006a} characterize functions which capture the intuitive notion of ``deviation,'' and establish a lucid relationship between coherent risks and their deviation class. The following result describes key properties of the proposed risk functions, in particular highlighting the fact that while our location terms are monotonic, our risk functions are non-traditional in that they are non-monotonic.
\begin{prop}[Non-monotonic risk functions]\label{prop:risk_props_axioms}
Let $\ZZ$ be a Banach space of square-$\ddist$-integrable functions. For any $\sigma \in [0,\infty]$, let $\eta > 0$ be set as in Proposition \ref{prop:well_defined}. Then, the functions $\jrisk_{\sigma}:\ZZ \times \RR \to \RR$ and $\risk_{\sigma}:\ZZ \to \RR$ satisfy the following properties:
\begin{itemize}
\item Both $\jrisk_{\sigma}$ and $\risk_{\sigma}$ are continuous, convex, and sub-differentiable.

\item The location in (\ref{eqn:mrisk_loc_dev}) is monotonic (i.e., $Z_1 \leq Z_2$ implies $\theta_{Z_1} \leq \theta_{Z_2}$) and translation-equivariant (i.e., $\theta_{Z+a} = \theta_{Z} + a$ for any $a \in \RR$), for any $0 < \sigma \leq \infty$.

\item The deviation in (\ref{eqn:mrisk_loc_dev}) is non-negative and translation-invariant, namely for any $a \in \RR$, we have $\exx_{\ddist}\dev_{\sigma}(Z+a-\theta_{Z+a}) = \exx_{\ddist}\dev_{\sigma}(Z-\theta_{Z})$, for any $0 < \sigma \leq \infty$.

\item The risk $\risk_{\sigma}$ is not monotonic, i.e., $\ddist\{Z_1 \leq Z_2\}=1$ need not imply $\risk_{\sigma}(Z_1) \leq \risk_{\sigma}(Z_2)$.
\end{itemize}
In particular, the risk $h \mapsto \risk_{\sigma}(\loss(h))$ need not be convex, even if $\loss(\cdot)$ is.
\end{prop}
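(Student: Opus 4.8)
The plan is to verify the four listed bullets in turn and then read off the closing ``need not be convex'' remark. Throughout, $\eta$ is fixed as in Proposition~\ref{prop:well_defined}, which also guarantees the infimum defining $\risk_\sigma$ is finite and attained, with a unique minimizer $\theta_Z$ when $\sigma>0$.

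\textbf{Convexity, continuity, sub-differentiability.} For each outcome $\omega$, the map $(Z,\theta)\mapsto Z(\omega)-\theta$ is linear and $\dev_\sigma$ is convex in all three cases ($|u|$; $\dev(u/\sigma)$, which is convex since $\dev''(u)=1/(1+u^2)>0$ as noted in Section~\ref{sec:mrisk}; and $u^2$), so $(Z,\theta)\mapsto\dev_\sigma(Z(\omega)-\theta)$ is convex. Integrating in $\omega$ (legitimate by square-integrability) and adding the linear term $\theta$ shows $\jrisk_\sigma$ is convex on $\ZZ\times\RR$, whence $\risk_\sigma=\inf_\theta\jrisk_\sigma(\cdot,\theta)$ is convex by partial minimization of a jointly convex function. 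For continuity I would use the Lipschitz structure: when $\sigma<\infty$, $\dev_\sigma$ is globally Lipschitz (constant $1$ or $\pi/(2\sigma)$), so $\jrisk_\sigma$ is Lipschitz with respect to the $L^1(\ddist)$-seminorm and hence with respect to the norm of $\ZZ$; when $\sigma=\infty$, $\exx_\ddist(Z-\theta)^2$ is a continuous quadratic form in $(Z,\theta)$. A finite convex function that is continuous on a Banach space has nonempty sub-differential at every point, which gives the last part of the first bullet for both $\jrisk_\sigma$ and $\risk_\sigma$.

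\textbf{Location and deviation terms.} Translation-equivariance is a change of variables: $\jrisk_\sigma(Z+a,\theta)=a+\jrisk_\sigma(Z,\theta-a)$, so the (unique, for $\sigma>0$) minimizer over $\theta$ shifts by $a$, i.e.\ $\theta_{Z+a}=\theta_Z+a$. For monotonicity, fix $\ddist\{Z_1\le Z_2\}=1$. If $\sigma=\infty$, minimizing the quadratic $\theta+\eta\exx_\ddist(Z-\theta)^2$ gives $\theta_Z=\exx_\ddist Z-1/(2\eta)$ in closed form and the claim follows from $\exx_\ddist Z_1\le\exx_\ddist Z_2$. If $0<\sigma<\infty$, $\jrisk_\sigma(Z,\cdot)$ is strictly convex and smooth, so $\theta_Z$ is the unique root of $\theta\mapsto g_Z(\theta)-1/\eta$ where $g_Z(\theta)=\exx_\ddist\dev_\sigma'(Z-\theta)$ (differentiation under the integral is valid since $|\dev_\sigma'|\le\pi/(2\sigma)$); since $\dev_\sigma'(u)=\atan(u/\sigma)/\sigma$ is strictly increasing, $g_Z$ is strictly decreasing, and $Z_1\le Z_2$ a.s.\ gives $g_{Z_1}\le g_{Z_2}$ pointwise, so $g_{Z_2}(\theta_{Z_1})\ge g_{Z_1}(\theta_{Z_1})=1/\eta=g_{Z_2}(\theta_{Z_2})$ forces $\theta_{Z_1}\le\theta_{Z_2}$. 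The deviation bullet is then short: $\exx_\ddist\dev_\sigma(Z-\theta_Z)\ge0$ because $\dev_\sigma\ge0$, and translation-invariance follows from equivariance via $\exx_\ddist\dev_\sigma(Z+a-\theta_{Z+a})=\exx_\ddist\dev_\sigma(Z-\theta_Z)$.

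\textbf{Non-monotonicity and the composite remark.} It suffices to exhibit one counterexample, and the $\sigma=\infty$ (mean--variance) case is cleanest: minimizing in $\theta$ yields $\risk_\infty(Z)=\exx_\ddist Z+\eta\vaa_\ddist(Z)-1/(4\eta)$, so with $Z_2\equiv1$ and $Z_1$ equal to $0$ or $1$ with probability $1/2$ each one has $\ddist\{Z_1\le Z_2\}=1$ yet $\risk_\infty(Z_1)-\risk_\infty(Z_2)=\eta/4-1/2>0$ once $\eta>2$ (the same pair works at $\sigma=0$, and, by a comparison argument, for $0<\sigma<\infty$ with $\eta$ large enough). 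For the final statement, the key point is that $\risk_\sigma$, though convex, is \emph{not} monotone, so the rule ``convex $\circ$ convex is convex'' breaks. Concretely, over a two-point probability space with equiprobable outcomes and per-outcome losses $\phi_1(h)=h$, $\phi_2(h)=\max(h-1,0)$ (both convex and non-negative in $h$), at $\sigma=\infty$ one finds $\risk_\infty(\loss(h))=\tfrac12\big(h+\max(h-1,0)\big)+\tfrac\eta4\min(h,1)^2-\tfrac1{4\eta}$, whose deviation part $\tfrac\eta4\min(h,1)^2$ is non-convex on $[0,2]$ and, for $\eta$ large enough, is not cancelled by the convex location part; hence $h\mapsto\risk_\sigma(\loss(h))$ is non-convex although $\loss(\cdot)$ is.

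\textbf{Main obstacle.} The only step needing genuine care is the continuity (hence sub-differentiability) claim on an infinite-dimensional Banach space, where finiteness and convexity alone do not suffice; this is exactly where the global Lipschitzness of $\dev_\sigma$ for $\sigma<\infty$, the explicit quadratic form for $\sigma=\infty$, and the assumption that the norm of $\ZZ$ dominates the relevant $L^p(\ddist)$-seminorm must be used. Everything else reduces to a change of variables, a monotone first-order-condition comparison, or the explicit computations above, with the heavier formal machinery (including the non-convex sub-differential calculus) deferred to the appendix as the paper indicates.
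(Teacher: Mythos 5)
Your proposal is correct, and on the second, third, and fourth bullets it essentially mirrors the paper: joint convexity of $\jrisk_{\sigma}$ plus partial minimization for convexity of $\risk_{\sigma}$, the first-order condition $\exx_{\ddist}\dev_{\sigma}^{\prime}(Z-\theta_Z)=1/\eta$ (closed form $\theta_Z=\exx_{\ddist}Z-1/(2\eta)$ at $\sigma=\infty$) combined with monotone decrease in $\theta$ for location monotonicity, a change of variables for translation equivariance/invariance, and a mean--variance counterexample at $\sigma=\infty$ using $\risk_{\infty}(Z)=\exx_{\ddist}Z+\eta\vaa_{\ddist}Z-1/(4\eta)$. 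Where you genuinely diverge is the continuity/sub-differentiability bullet and the closing remark. The paper proves lower semi-continuity of $(Z,\theta)\mapsto\exx_{\ddist}\dev_{\sigma}(Z-\theta)$ via Fatou, invokes the theorem that a proper convex LSC function on a Banach space is continuous and sub-differentiable on the interior of its effective domain, and handles $\risk_{\sigma}$ through an auxiliary upper semi-continuity argument; you instead get continuity directly from the global Lipschitz property of $\dev_{\sigma}$ ($\sigma<\infty$) and the explicit quadratic form ($\sigma=\infty$), then use that a continuous finite convex function is sub-differentiable. Your route is more elementary and is in fact the same Lipschitz computation the paper performs later (proof of Proposition \ref{prop:learn_weak_convexity}), but as you note it leans on the norm of $\ZZ$ dominating the $\exx_{\ddist}|\cdot|$ and $(\exx_{\ddist}|\cdot|^{2})^{1/2}$ seminorms, an assumption the paper leaves implicit (its own LSC lemma is similarly informal about pointwise versus norm convergence). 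Two small points: your two-point counterexample at $\sigma=\infty$ needs $\eta>2$, while Proposition \ref{prop:well_defined} permits any $\eta>0$ there; rescaling the gap (e.g.\ $Z_1\in\{0,M\}$, $Z_2\equiv M$ with $M>2/\eta$) fixes this, and the paper's width-parameterized family plays the same role. Finally, you actually exhibit a concrete convex loss whose composite risk $h\mapsto\risk_{\infty}(\loss(h))$ is non-convex (the kink at $h=1$ violates convexity once $\eta>1$), whereas the paper leaves the ``in particular'' remark as an implicit consequence of non-monotonicity; your version is a useful strengthening, and the spelled-out continuity of $\risk_{\sigma}$ (infimum over $\theta$ of uniformly Lipschitz functions, or the explicit mean--variance formula) is the only detail left slightly implicit in your sketch.
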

\begin{rmk}\label{rmk:non_monotone}
Since our risk function $\risk_{\sigma}$ is not monotonic, standard results in the literature on optimizing generalized risks do not apply here. We remark that our proposed risk class does not appear among the comprehensive list of examples given in the works of \citet{ruszczynski2006a,ruszczynski2006b}, aside from the special case of $\sigma = \infty$ with $\eta = 1$. While the continuity and sub-differentiability of any risk function which is convex \emph{and} monotonic is well-known for a large class of Banach spaces \citep[Sec.~3]{ruszczynski2006a}, in Proposition \ref{prop:risk_props_axioms} we obtain such properties without monotonicity by using square-$\ddist$-integrability combined with properties of our function class $\dev_{\sigma}$.
\end{rmk}
Since our principal interest is the case where $Z = \loss(h)$, the key takeaways from this section are that while the proposed risk $h \mapsto \risk_{\sigma}(\loss(h))$ is well-defined and easy to estimate given a random sample $\loss_1(h),\ldots,\loss_n(h)$, the learning task is non-trivial since $\risk_{\sigma}(\loss(\cdot))$ is not differentiable (and thus non-smooth) when $\sigma = 0$, and for any $\sigma \in [0,\infty]$ need not be convex, even when the underlying loss is both smooth and convex. Fortunately, smoothness properties of the losses typically used in machine learning can be leveraged to overcome these technical barriers, opening a path towards learning guarantees for practical algorithms. This is the topic of the next section.

\section{Learning algorithm analysis}\label{sec:learning}

\begin{algorithm}[t!]
\caption{Projected sub-gradient method with randomized output.}
\label{algo:sgd}
\begin{algorithmic}
\State \textbf{inputs:} initial point $(h_0,\theta_0) \in C \subset \HH \times \RR$, step sizes $(\alpha_t)$, and max iterations $n$.
\medskip
\For{$t \in \{0,\ldots,n-1\}$}
 \State Sample $G_t$ via (\ref{eqn:sgd_feedback}).
 \State Update $(h_t,\theta_t) \mapsto (h_{t+1},\theta_{t+1})$ via (\ref{eqn:sgd_routine}).
\EndFor
\medskip
\State Sample $T \in \{0,\ldots,n-1\}$ with probabilities $\prr\{T=t\} = \alpha_{t}/(\sum_{k=0}^{n-1}\alpha_{k})$, $t \in [n-1]$.
\medskip
\State \textbf{return:} $\displaystyle (\overbar{h}_{[n]},\overbar{\theta}_{[n]}) \defeq (h_{T},\theta_{T})$.
\end{algorithmic}
\end{algorithm}

Thus far, we have only been concerned with \emph{ideal} quantities $\risk_{\sigma}$ and $\jrisk_{\sigma}$ used to define the ultimate formal goal of learning. In practice, the learner will only have access to noisy, incomplete information. In this work, we focus on iterative algorithms based on stochastic gradients, largely motivated by their practical utility and ubiquity in modern machine learning applications. For the rest of the paper, we overload our risk definitions to enhance readability, writing $\jrisk_{\sigma}(h,\theta) \defeq \jrisk_{\sigma}(\loss(h),\theta)$ and $\risk_{\sigma}(h) \defeq \risk_{\sigma}(\loss(h))$. First note that we can break down the underlying joint objective as $\jrisk_{\sigma}(h,\theta) = \exx_{\ddist}(f_2 \circ F_1)(h,\theta)$, where we have defined
\begin{align}\label{eqn:jrisk_composition}
F_1(h,\theta) \defeq (\loss(h),\theta), \qquad f_2(u,\theta) \defeq \theta + \eta \dev_{\sigma}(u-\theta).
\end{align}
From the point of view of the probability space $(\Omega,\FF,\ddist)$, the function $F_1$ is random, whereas $f_2$ is deterministic; our use of upper- and lower-case letters is just meant to emphasize this. Given some initial value $(h_0,\theta_0) \in \HH \times \RR$, one naively hopes to construct an efficient stochastic gradient algorithm using the update
\begin{align}\label{eqn:sgd_routine}
(h_{t+1},\theta_{t+1}) & = \proj_{C} \left[ (h_t,\theta_t) - \alpha_{t} G_t \right],
\end{align}
where $\alpha_t > 0$ is a step-size parameter, $\proj_{C}[\cdot]$ denotes projection to some set $C \subset \HH \times \RR$, and the stochastic feedback $G_t$ is just a composition of sub-gradients, namely
\begin{align}\label{eqn:sgd_feedback}
G_t & \in \partial f_2(\loss(h_t),\theta_t) \circ \partial F_1(h_t,\theta_t).
\end{align}
We call this approach ``naive'' since it is exactly what we would do if we knew \textit{a priori} that the underlying objective was convex and/or smooth.\footnote{The convex and smooth case is the classical setting \citep{nemirovsky1983a}; see \citet{shalev2014a} for a modern textbook introduction. When convex but non-smooth, see \citet{shamir2013a}. When smooth but non-convex, see \citet{ghadimi2013a}.} The precise learning algorithm studied here is summarized in Algorithm \ref{algo:sgd}. Fortunately, as we describe below, this naive procedure actually enjoys lucid non-asymptotic guarantees, on par with the smooth case.

\paragraph{How to measure algorithm performance?}

Before stating any formal results, we briefly discuss the means by which we evaluate learning algorithm performance. Since the sequence $(\risk_{\sigma}(h_t))$ cannot be controlled in general, a more tractable problem is that of finding a \term{stationary point} of $\jrisk_{\sigma}$, namely any $(h^{\ast},\theta^{\ast})$ such that $0 \in \partial \jrisk_{\sigma}(h^{\ast},\theta^{\ast})$. However, it is not practical to analyze $\dist(0;\partial \jrisk_{\sigma}(h_t,\theta_t))$ directly, due to a lack of continuity. Instead, we consider a smoothed version of $\jrisk_{\sigma}$:
\begin{align}
\jsmooth_{\sigma,\beta}(h,\theta) \defeq \inf_{h^{\prime},\theta^{\prime}} \left[ \jrisk_{\sigma}(h^{\prime},\theta^{\prime}) + \frac{1}{2\beta}\|(h,\theta)-(h^{\prime},\theta^{\prime})\|^{2} \right].
\end{align}
This is none other than the \term{Moreau envelope} of $\jrisk_{\sigma}$, with weighting parameter $\beta > 0$. A familiar concept from convex analysis on Hilbert spaces \citep[Ch.~12 and 24]{bauschke2017CMH}, the Moreau envelope of non-smooth functions satisfying weak convexity properties has recently been shown to be a very useful metric for evaluating stochastic optimizers \citep{davis2019b,drusvyatskiy2019a}. Our basic performance guarantees will first be stated in terms of the gradient of the smoothed function $\jsmooth_{\sigma,\beta}$. We will then relate this to the joint risk $\jrisk_{\sigma}$ and subsequently the risk $\risk_{\sigma}$.

\paragraph{Guarantees based on joint risk minimization}

Within the context of the stochastic updates characterized by (\ref{eqn:sgd_routine})--(\ref{eqn:sgd_feedback}), we consider the case in which $\HH$ is any Hilbert space. All Hilbert spaces are reflexive Banach spaces, and the stochastic sub-gradient $G_t \in (\HH \times \RR)^{\ast}$ (the dual of $\HH \times \RR$) can be uniquely identified with an element of $\HH \times \RR$, for which we use the same notation $G_t$. Denoting the partial sequence $G_{[t]} \defeq (G_0,\ldots,G_t)$, we formalize our assumptions as follows:
\begin{itemize}
\item[\namedlabel{asmp:background}{\textup{A1}}.] For all $h \in \HH$, the random loss $\loss(h)$ is square-$\ddist$-integrable, locally Lipschitz, and weakly $\lambda$-smooth, with a gradient satisfying $\exx_{\ddist}|\loss^{\prime}(h;\cdot)|^{2} < \infty$.

\item[\namedlabel{asmp:hilbert}{\textup{A2}}.] $\HH$ is a Hilbert space, and $C \subset \HH \times \RR$ is a closed convex set.

\item[\namedlabel{asmp:feedback_cond_indep}{\textup{A3}}.] The feedback (\ref{eqn:sgd_feedback}) satisfies $\exx[G_t \cond G_{[t-1]}] = \exx_{\ddist}G_t$ for all $t > 0$.\footnote{The expectation on the left-hand side is with respect to the joint distribution of $G_{[t]}$ conditioned on $G_{[t-1]}$.}

\item[\namedlabel{asmp:feedback_mnt_bd}{\textup{A4}}.] For some $0 < \kappa < \infty$, the second moments are bounded as $\exx_{\ddist}\|G_t\|^{2} \leq \kappa^{2}$ for all $t$.
\end{itemize}
The following is a performance guarantee for Algorithm \ref{algo:sgd} in terms of the smoothed joint risk.
\begin{thm}[Nearly-stationary point of smoothed objective]\label{thm:learn_stationary}
If $0 < \sigma < \infty$, set smoothing parameter $\gamma = (1+\eta\pi/(2\sigma))\max\{1,\lambda\}$. Otherwise, if $\sigma = 0$, set $\gamma = (1+\eta)\max\{1,\lambda\}$. Under these $\sigma$-dependent settings and assumptions \ref{asmp:background}--\ref{asmp:feedback_mnt_bd}, let $(\overbar{h}_{[n]},\overbar{\theta}_{[n]})$ denote the output of Algorithm \ref{algo:sgd}, with $\jrisk_{C}^{\ast} \defeq \inf\{\jrisk_{\sigma}(h,\theta): (h,\theta) \in C\}$ denoting the minimum over the feasible set and $\Delta_0 \defeq \jsmooth_{\sigma,\beta}(h_0,\theta_0) - \jrisk_{C}^{\ast}$ denoting the initialization error. Then, for any choice of $n>1$, $\eta > 0$, and $\beta < 1/\gamma$, we have that
\begin{align*}
\exx\|\jsmooth_{\sigma,\beta}^{\prime}(\overbar{h}_{[n]},\overbar{\theta}_{[n]})\|^{2} \leq \left(\frac{1}{1-\beta\gamma}\right) \frac{\Delta_0 + \gamma\kappa^{2}\sum_{t=0}^{n-1}\alpha_{t}^{2}/2}{\sum_{t=0}^{n-1}\alpha_t},
\end{align*}
where expectation is taken over all the feedback $G_{[n-1]}$.
\end{thm}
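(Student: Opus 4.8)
The plan is to follow the now-standard template for analyzing stochastic subgradient methods on weakly convex functions over a Hilbert space, as developed by \citet{davis2019b,drusvyatskiy2019a}, but adapted to the composite structure $\jrisk_{\sigma} = \exx_{\ddist}(f_2 \circ F_1)$ laid out in (\ref{eqn:jrisk_composition}). The first step is to establish that $\jrisk_{\sigma}(\cdot,\cdot)$ is $\gamma$-weakly convex on $C$ with exactly the $\gamma$ stated in the theorem. Here one would use the composition rule for weak convexity: $f_2(u,\theta) = \theta + \eta\dev_{\sigma}(u-\theta)$ is convex and $(1,-1)$-wise $(\eta\pi/(2\sigma))$-Lipschitz in $u$ (since $\dev_\sigma'$ ranges in $(-\pi/2,\pi/2)$ scaled by $1/\sigma$; in the $\sigma=0$ case it is $\eta$-Lipschitz in $u$, and $\sigma=\infty$ is excluded from the bound), while $F_1(h,\theta) = (\loss(h),\theta)$ has a component $\loss(h)$ that is weakly $\lambda$-smooth by \ref{asmp:background} and a linear component $\theta$. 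Composing a convex Lipschitz outer function with a weakly smooth inner map yields a weakly convex composite, and tracking the Lipschitz constant of $f_2$ against the weak-smoothness modulus of $\loss$ gives the product $(1+\eta\pi/(2\sigma))\max\{1,\lambda\}$ (the $\max\{1,\lambda\}$ and the additive $1$ absorbing the linear $\theta$-directions and the case $\lambda<1$). Since $C$ is closed and convex by \ref{asmp:hilbert}, the Moreau envelope $\jsmooth_{\sigma,\beta}$ of the restricted function is well-defined, $C^1$ with $\frac{1}{\beta}$-Lipschitz gradient for $\beta<1/\gamma$, and its gradient norm is a legitimate near-stationarity measure.

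The second step is the per-iteration descent inequality for the Moreau envelope. Fix the proximal point $(\hat h_t,\hat\theta_t) \defeq \prox_{\beta\jrisk_\sigma}(h_t,\theta_t)$, the argmin defining $\jsmooth_{\sigma,\beta}(h_t,\theta_t)$. Using nonexpansiveness of $\proj_C$, expanding $\|(h_{t+1},\theta_{t+1})-(\hat h_t,\hat\theta_t)\|^2$, taking the conditional expectation and invoking the unbiasedness \ref{asmp:feedback_cond_indep} and the second-moment bound \ref{asmp:feedback_mnt_bd}, one gets a bound involving the inner product $\exx_{\ddist}\langle G_t, (h_t,\theta_t)-(\hat h_t,\hat\theta_t)\rangle$. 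The crucial point is that $G_t$ is a composite subgradient lying in $\partial f_2 \circ \partial F_1$, and by the subgradient inequality applied through the weak convexity of $\jrisk_\sigma$ together with the strong convexity of $w \mapsto \jrisk_\sigma(w) + \frac{1}{2\beta}\|(h_t,\theta_t)-w\|^2$, this inner product is bounded below by a multiple of $\jsmooth_{\sigma,\beta}(h_t,\theta_t) - \jrisk_C^\ast$ minus the gradient-norm term; the standard manipulation then produces
\begin{align*}
\exx[\jsmooth_{\sigma,\beta}(h_{t+1},\theta_{t+1}) \cond G_{[t-1]}] \leq \jsmooth_{\sigma,\beta}(h_t,\theta_t) - \alpha_t\frac{1-\beta\gamma}{\beta}\bigl(\jsmooth_{\sigma,\beta}(h_t,\theta_t) - \jrisk_C^\ast\bigr) + \frac{\gamma\kappa^2\alpha_t^2}{2},
\end{align*}
or equivalently, since $\|\jsmooth_{\sigma,\beta}'(h_t,\theta_t)\|^2 = \beta^{-2}\|(h_t,\theta_t)-(\hat h_t,\hat\theta_t)\|^2$ and $\jsmooth_{\sigma,\beta}(h_t,\theta_t)-\jrisk_C^\ast \geq \frac{\beta}{2}\|\jsmooth'_{\sigma,\beta}\|^2$-type relations hold, a bound of the form $\frac{(1-\beta\gamma)\alpha_t}{2}\exx\|\jsmooth_{\sigma,\beta}'(h_t,\theta_t)\|^2 \leq \exx[\jsmooth_{\sigma,\beta}(h_t,\theta_t)] - \exx[\jsmooth_{\sigma,\beta}(h_{t+1},\theta_{t+1})] + \frac{\gamma\kappa^2\alpha_t^2}{2}$.

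The third step is the telescoping and randomization argument. Summing the per-iteration inequality over $t=0,\ldots,n-1$, the Moreau-envelope values telescope, leaving $\jsmooth_{\sigma,\beta}(h_0,\theta_0) - \exx[\jsmooth_{\sigma,\beta}(h_n,\theta_n)] \leq \jsmooth_{\sigma,\beta}(h_0,\theta_0) - \jrisk_C^\ast = \Delta_0$ on top, plus $\frac{\gamma\kappa^2}{2}\sum_t \alpha_t^2$. Dividing by $(1-\beta\gamma)\sum_t \alpha_t$ and recognizing that the left-hand side, after using the randomized output rule $\prr\{T=t\} = \alpha_t/\sum_k\alpha_k$ from Algorithm \ref{algo:sgd}, equals $\exx\|\jsmooth_{\sigma,\beta}'(\overbar h_{[n]},\overbar\theta_{[n]})\|^2$, delivers exactly the claimed bound. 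The main obstacle I anticipate is the first step: rigorously justifying the weak convexity of the composite over a general (infinite-dimensional) Hilbert space and pinning down the constant $\gamma$, since the chain-rule / composition results for weak convexity in the cited literature are typically stated in $\RR^d$, and here $F_1$ maps into the Banach space $\ZZ$ before $f_2$ brings us to $\RR$ — extending the composite weak-convexity calculus to this Banach-to-Hilbert setting (and handling the non-smooth $\sigma=0$ case, where $\dev_0(u)=|u|$ is merely Lipschitz) is where the real work lies; the descent and telescoping steps are then essentially the routine \citet{davis2019b}-style argument.
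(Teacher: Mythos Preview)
Your overall architecture is correct and matches the paper's: establish $\gamma$-weak convexity of $\jrisk_{\sigma}$ via the Lipschitz-outer/smooth-inner composition rule (this is the paper's Proposition~\ref{prop:learn_weak_convexity}, and you have the constants right), then run the \citet{davis2019b} descent-and-telescope argument on the Moreau envelope, extended from $\RR^d$ to a Hilbert space. You also correctly flag the Banach-space composition calculus as the place where genuine work is needed.

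There is, however, a second supporting result you have glossed over. In your second step you write ``invoking the unbiasedness \ref{asmp:feedback_cond_indep}'' and then immediately apply ``the subgradient inequality \ldots\ through the weak convexity of $\jrisk_\sigma$.'' But assumption~\ref{asmp:feedback_cond_indep} only gives $\exx[G_t \mid G_{[t-1]}] = \exx_{\ddist} G_t$; it says nothing about whether $\exx_{\ddist} G_t$ actually lies in $\partial_{\clarke}\jrisk_\sigma(h_t,\theta_t)$. Recall that $G_t$ is a subgradient of the \emph{random} composite $f_2 \circ F_1$ at a single $\omega$, whereas the weak-convexity inequality you want to invoke is for the \emph{deterministic} function $\jrisk_\sigma = \exx_{\ddist}(f_2 \circ F_1)$. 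Passing from one to the other requires an interchange of expectation and (Clarke) subdifferentiation, which is not automatic: it needs a chain rule $\partial_{\clarke}(f_2 \circ F_1) = \partial f_2 \circ F_1'$ (valid here because $F_1'$ is surjective and $F_1$ strictly differentiable), a domination argument to pass $\exx_{\ddist}$ through the Clarke directional derivative, and then the conclusion $\exx_{\ddist} G_t \in \partial_{\clarke}\jrisk_\sigma$. This is exactly the content of the paper's Proposition~\ref{prop:learn_unbiased}, and it is the second pillar on which the application of \citet[Thm.~3.1]{davis2019b} rests. Without it your per-iteration inequality has no footing, since you cannot lower-bound $\langle \exx_{\ddist} G_t, (h_t,\theta_t)-(\hat h_t,\hat\theta_t)\rangle$ via the weak-convexity inequality for $\jrisk_\sigma$.
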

\begin{rmk}[Sample complexity]\label{rmk:sample_complexity}
Let us briefly describe a direct take-away from Theorem \ref{thm:learn_stationary}. If $\Delta_0$, $\gamma$, and $\kappa$ are known (upper bounds will of course suffice), then constructing step sizes as $\alpha_t^2 \geq \Delta_0/(n\gamma\kappa^2)$, if we set $\beta = 1/(2\gamma)$, it follows immediately that
\begin{align*}
\exx\|\jsmooth_{\sigma,\beta}^{\prime}(\overbar{h}_{[n]},\overbar{\theta}_{[n]})\|^2 \leq \sqrt{\frac{2\gamma\kappa^{2}\Delta_0}{n}}.
\end{align*}
Fixing some desired precision level of $\sqrt{\exx\|\jsmooth_{\sigma,\beta}^{\prime}(\overbar{h}_{[n]},\overbar{\theta}_{[n]})\|^2} \leq \varepsilon$, the sample complexity is $\OO(\varepsilon^{-4})$. This matches guarantees available in the smooth (but non-convex) case \citep{ghadimi2013a}, and suggests that the ``naive'' strategy implemented by Algorithm \ref{algo:sgd} in fact comes with a clear theoretical justification.
\end{rmk}

\paragraph{Implications in terms of the original objective}

The results described in Theorem \ref{thm:learn_stationary} and Remark \ref{rmk:sample_complexity} are with respect to a smoothed version of the joint risk function $\jrisk_{\sigma}$. Linking these facts to insights in terms of the original proposed risk $\risk_{\sigma}$ can be done as follows. Assuming we take $n \geq 2\gamma\kappa^{2}\Delta_0/\varepsilon^{4}$ to achieve the $\varepsilon$-precision discussed in Remark \ref{rmk:sample_complexity}, the immediate conclusion is that  the algorithm output is $(\varepsilon/2\gamma)$-close to a $\varepsilon$-nearly stationary point of $\jrisk_{\sigma}$. More precisely, we have that there exists an ideal point $(h_{n}^{\ast},\theta_{n}^{\ast})$ such that
\begin{align}
\exx\left[ \dist\left(0;\partial\jrisk_{\sigma}(h_{n}^{\ast},\theta_{n}^{\ast})\right) \right] \leq \varepsilon, \text{ and } \exx\left\| (\overbar{h}_{[n]},\overbar{\theta}_{[n]})-(h_{n}^{\ast},\theta_{n}^{\ast}) \right\| = \frac{\varepsilon}{2\gamma}.
\end{align}
The above fact follows from basic properties of the Moreau envelope (cf.~Appendix \ref{sec:bg_prox}). These non-asymptotic guarantees of being close to a ``good'' point extend to the function values of the risk $\risk_{\sigma}$ since we are close to a candidate $h_{n}^{\ast}$ whose risk value can be no worse than
\begin{align*}
\exx\left[\risk_{\sigma}(h_{n}^{\ast})\right] \leq \exx\left[\jrisk_{\sigma}(h_{n}^{\ast},\theta_{n}^{\ast})\right] \leq \exx\left[\jrisk_{\sigma}(\overbar{h}_{[n]},\overbar{\theta}_{[n]})\right].
\end{align*}
We remark that these learning guarantees hold for a class of risks that are in general non-convex and need not even be differentiable, let alone satisfy smoothness requirements.

\paragraph{Key points in the proof of Theorem \ref{thm:learn_stationary}}

Here we briefly highlight the key sub-results involved in proving Theorem \ref{thm:learn_stationary}; please see Appendix \ref{sec:proofs_learning} for all the details. The key structure that we require is a smooth loss, reflected in assumption \ref{asmp:background}. This along with the Lipschitz property of our function $\dev_{\sigma}$ for all $0 \leq \sigma < \infty$ allows us to prove that the underlying objective $\jrisk_{\sigma}$ is \emph{weakly convex}, where $\HH$ can be any Banach space (Proposition \ref{prop:learn_weak_convexity}); this generalizes a result of \citet[Lem.~4.2]{drusvyatskiy2019a} from Euclidean space to any Banach space. This alone is not enough to obtain the desired result. Note that the assumption \ref{asmp:feedback_cond_indep} is very weak, and trivially satisfied in most traditional machine learning settings (e.g., where losses are based on a sequence of iid data points). The question of whether the feedback is \emph{unbiased} or not, i.e., whether $\exx_{\ddist}G_t$ is in the sub-differential of $\jrisk_{\sigma}$ at step $t$ or not, is something that needs to be formally verified. In Proposition \ref{prop:learn_unbiased} we show that as long as the gradient has a finite expectation, this indeed holds for the feedback generated by (\ref{eqn:sgd_feedback}), when $\HH$ is any Banach space. With the two key properties of a weakly convex objective and unbiased random feedback in hand, we can leverage the techniques used in \citet[Thm.~3.1]{davis2019b} for proximal stochastic gradient methods applied to weakly convex functions on $\RR^{d}$, extending their core argument to the case of any Hilbert space. Combining this technique with the proof of weak convexity and unbiasedness lets us obtain Theorem \ref{thm:learn_stationary}.

\section{Empirical analysis}\label{sec:empirical}

In this section we introduce representative results for a series of experiments designed to investigate the quantitative and qualitative repercussions of modulating the underlying risk function class.\footnote{Repository with software and demos: \url{https://github.com/feedbackward/mrisk}}

\paragraph{Sanity check in one dimension}

As a natural starting point, we use a toy example to ensure that Algorithm \ref{algo:sgd} takes us where we expect to go for a particular risk setting. Consider a loss on $\RR$ with the form $\loss(h) = h \loss_{\text{wide}} + (1-h)\loss_{\text{thin}}$, where $\loss_{\text{wide}}$ and $\loss_{\text{thin}}$ are random variables independent of $h$ and each other. As a simple example, we use a folded Normal distribution for both, namely $\loss_{\ast}=|\text{Normal}(a_{\ast},b_{\ast}^{2})|$, where $a_{\text{wide}}=0$, $a_{\text{thin}}=2.0$, $b_{\text{wide}}=1.0$, and $b_{\text{thin}}=0.1$. For simplicity, we fix $\alpha_t = 0.001$ throughout, and each step uses a mini-batch of size $8$. Regarding the risk settings, we look in particular at the case of $\sigma = \infty$, where we modify the setting of $\eta = 2^k$ over $k=0,1,\ldots,7$. Results averaged over 100 trials are given in Figure \ref{fig:narrow_wide}. By modifying $\eta$, we can control whether the learning algorithm ``prefers'' candidates whose losses have a high degree of dispersion centered around a good location, or those whose losses are well-concentrated near a weaker location.

\begin{figure}[t]
\centering
\includegraphics[width=0.65\textwidth]{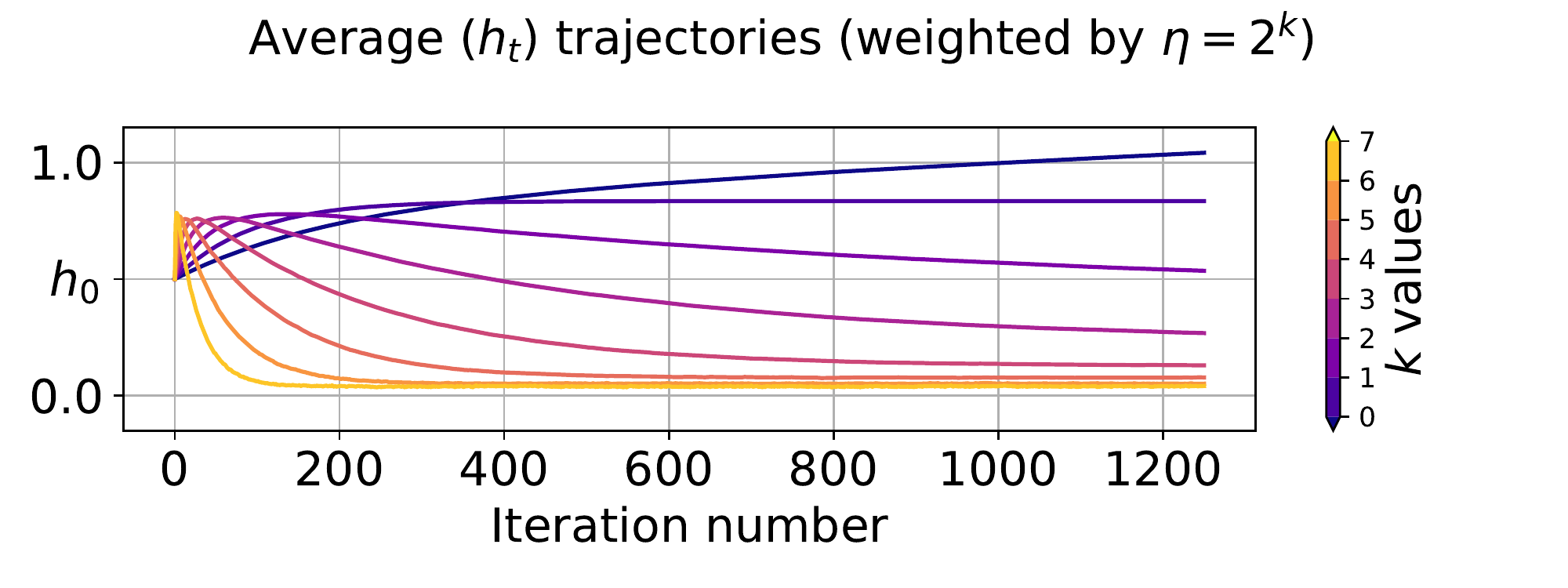}\,\includegraphics[width=0.35\textwidth]{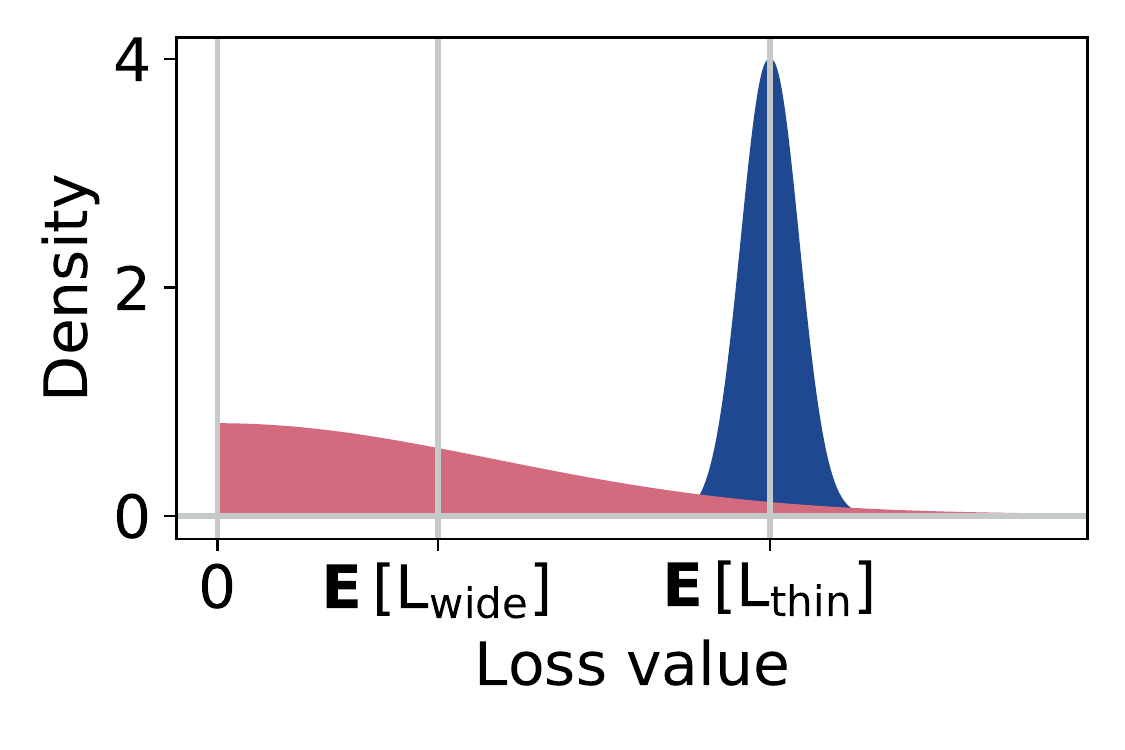}
\caption{A simple toy example using $\loss(h) = h \loss_{\text{wide}} + (1-h)\loss_{\text{thin}}$. Trajectories shown are the sequence $(h_t)$ generated by running (\ref{eqn:sgd_routine}) on $\RR^{2}$, with $h_0 = 0.5$ and $\theta_0 = 0.5$, averaged over all trials. Densities of $\loss_{\text{wide}}$ (red) and $\loss_{\text{thin}}$ (blue) are also plotted, with additional details in the main text.}
\label{fig:narrow_wide}
\end{figure}

\paragraph{Impact of risk choice on linear regression}

Next we consider how the key choice of $\sigma$ (and thus the underlying risk $\risk_{\sigma}$) plays a role on the behavior of Algorithm \ref{algo:sgd}. As another simple, yet more traditional example, consider linear regression in one dimension, where $Y = w_{0}^{\ast} + w_{1}^{\ast}X + \epsilon$, where $X$ and $\epsilon$ are independent zero-mean random variables, and $(w_{0}^{\ast},w_{1}^{\ast}) \in \RR^{2}$ are unknown to the learner. Using squared error $\loss(h) = (Y - h(X))^{2}$, we run Algorithm \ref{algo:sgd} again with mini-batches of size $8$ and $\alpha_t = 0.001$ fixed throughout, over a range of $\sigma \in [0, \infty]$ settings, for the same number of iterations as in the previous experiment. The initial value $(h_0,\theta_0)$ is initialized at zero plus uniform noise on $[-0.05,0.05]$. We also consider multiple noise distributions; as a concrete example, letting $N = \text{Normal}(0,(0.8)^2)$, we consider both $\varepsilon = N$ (Normal case) and $\varepsilon = \mathrm{e}^{N} - \exx\mathrm{e}^{N}$ (log-Normal case). In Figure \ref{fig:linreg_1d}, we plot the learned regression lines (averaged over 100 trials) for each choice of $\sigma$ and each noise setting. By modulating the target risk function, we can effectively choose between a self-imposed bias (smaller slope, lower intercept here), and a sensitivity to outlying values.

\begin{figure}[t]
\centering
\includegraphics[width=0.5\textwidth]{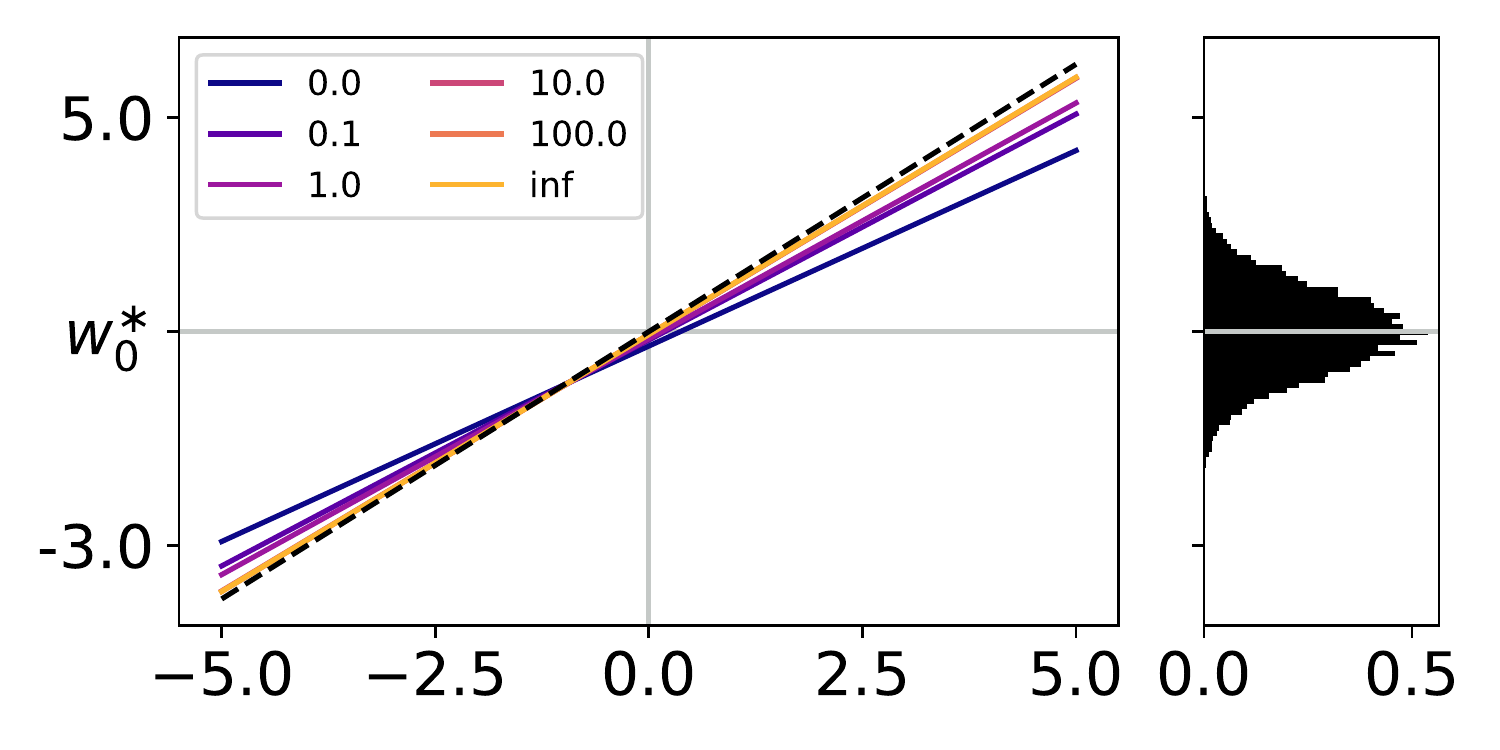}\includegraphics[width=0.5\textwidth]{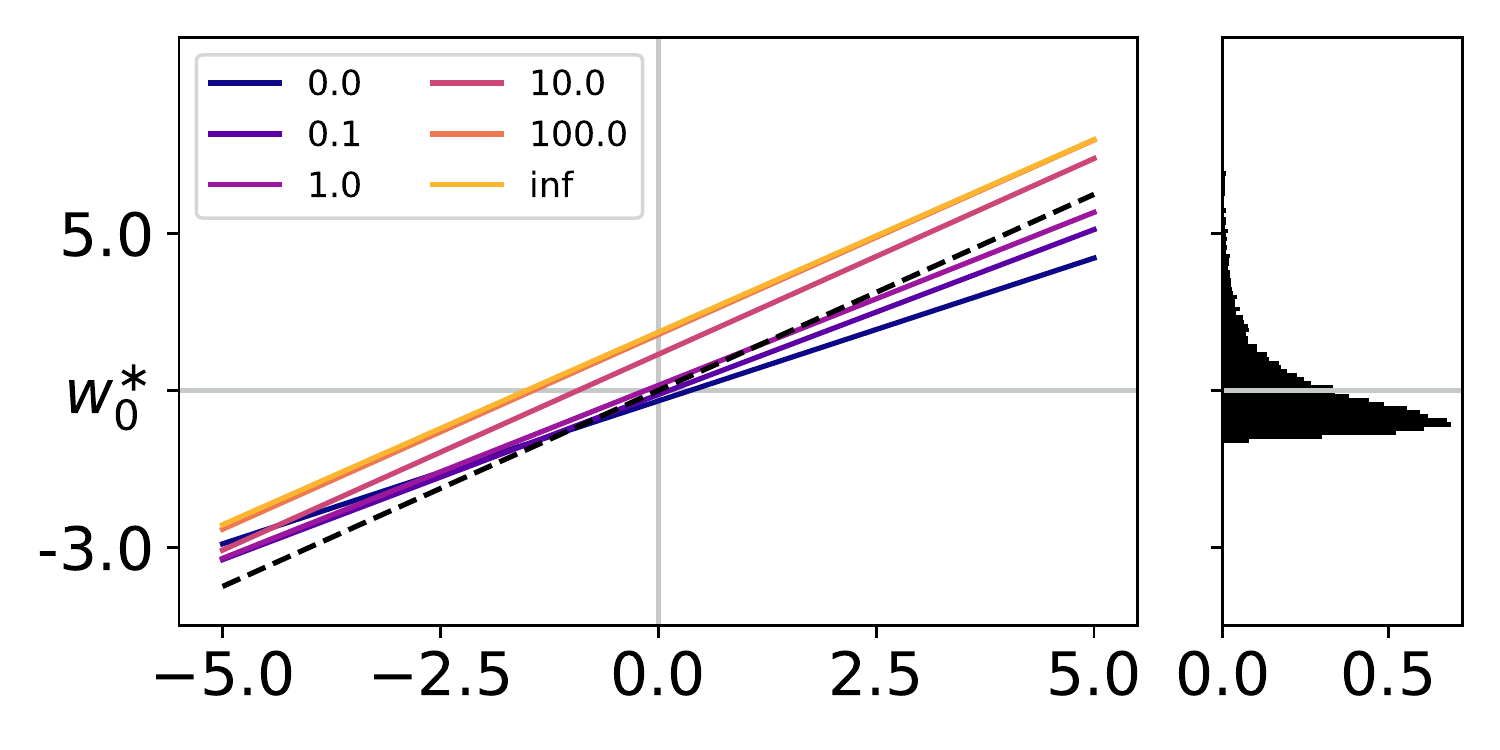}
\caption{Learned regression lines (solid; colors denote $\sigma \in [0,\infty]$) are plotted along with the true model $(w_{0}^{\ast},w_{1}^{\ast}) = (1.0,1.0)$ (dashed; black). Histograms are of independent samples of $w_{0}^{\ast} + \varepsilon$. The left plots are the Normal case, and right plots are the log-Normal case.}
\label{fig:linreg_1d}
\end{figure}

\paragraph{Tests using real-world data}

Finally, we consider an application to some well-known benchmark datasets for classification. At a high level, we run Algorithm \ref{algo:sgd} for multi-class logistic regression for 10 independent trials, where in each trial we randomly shuffle and re-split each full dataset (88\% training, 12\% testing), and randomly re-initialize the model weights identically to the previous paragraph, again with mini-batches of size $8$, and step sizes fixed to $\alpha_t = 0.01/\sqrt{d}$, where $d$ is the number of free parameters. Additional background on the datasets is given in appendix \ref{sec:empirical_supp}. The key question of interest is \emph{how the test loss distribution changes} as we modify the learner's feedback to optimize a range of risks $\risk_{\sigma}$. In Figure \ref{fig:real_data}, we see a stark difference between doing traditional empirical risk minimization (ERM, denoted ``off'') and using $\risk_{\sigma}$-based feedback, particularly for moderately large values of $\sigma$. The logistic losses are concentrated much more tightly (visible in the bottom row histograms), and this also leads to a better classification error (visible in the top row plots), an interesting trend that we observed across many distinct datasets.

\begin{figure}[t]
\centering
\includegraphics[width=0.25\textwidth]{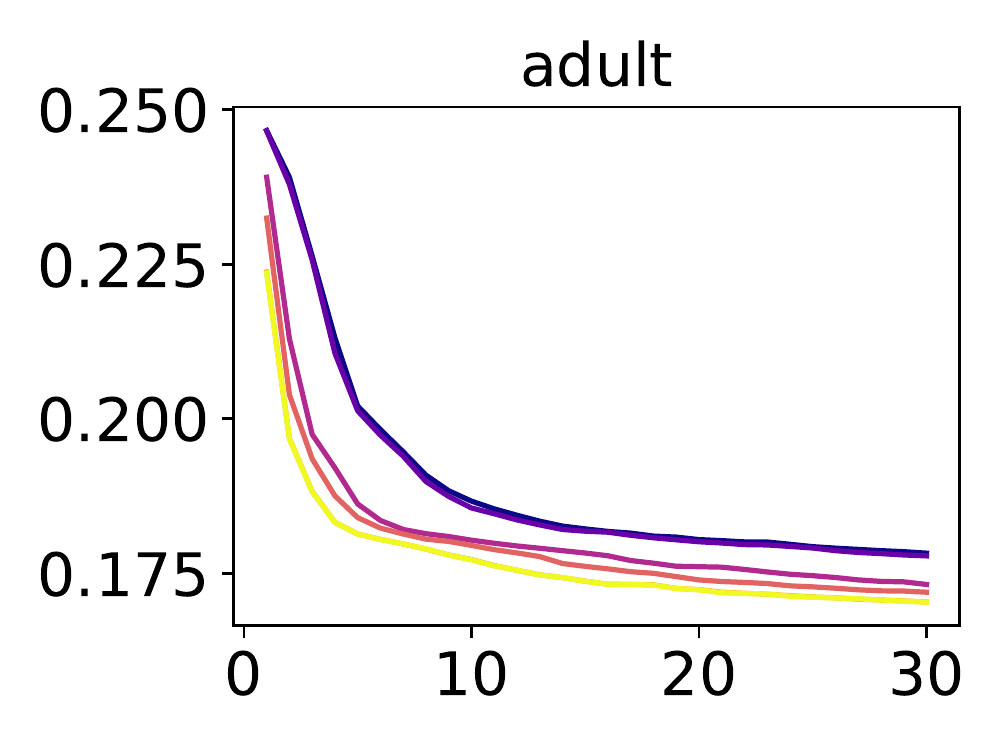}\includegraphics[width=0.25\textwidth]{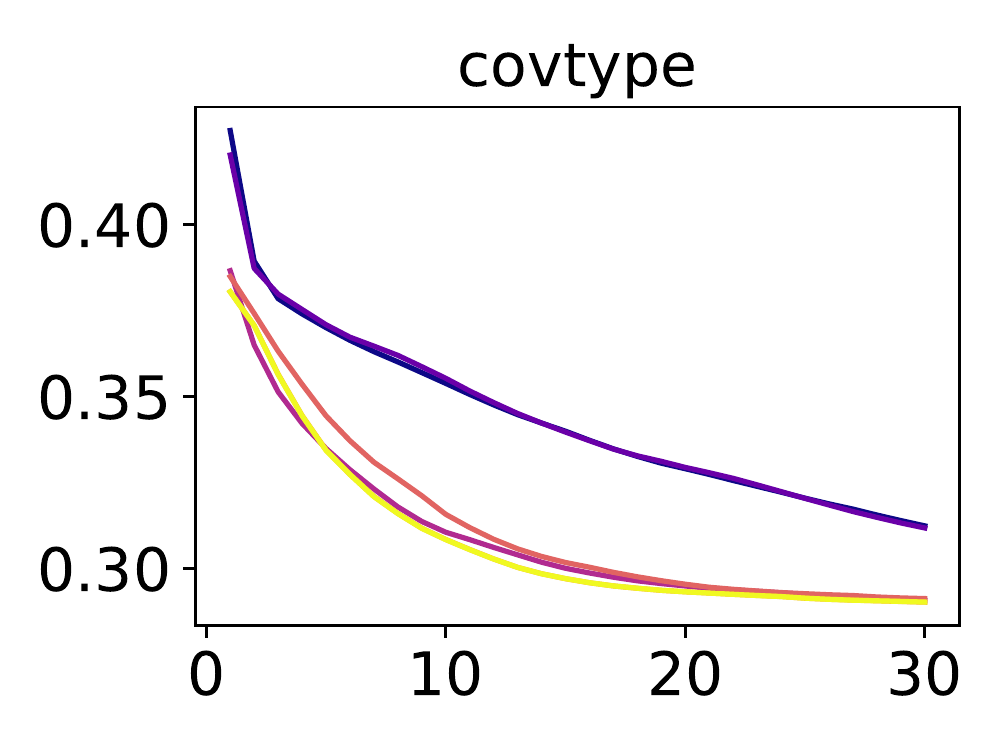}\includegraphics[width=0.25\textwidth]{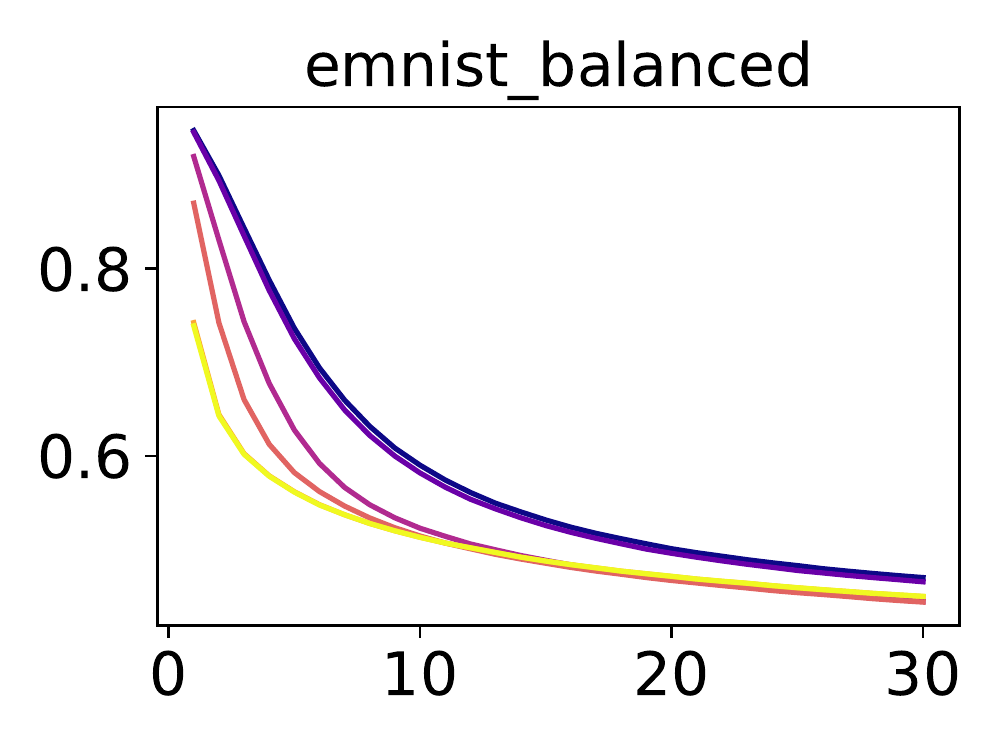}\includegraphics[width=0.25\textwidth]{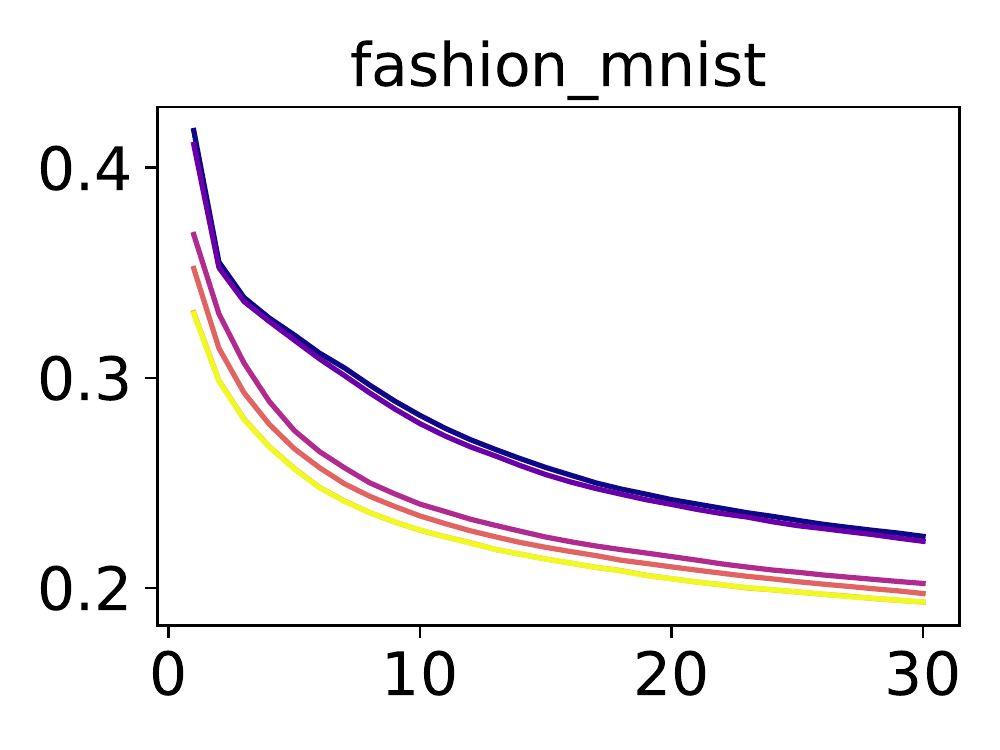}\\
\includegraphics[width=0.7\textwidth]{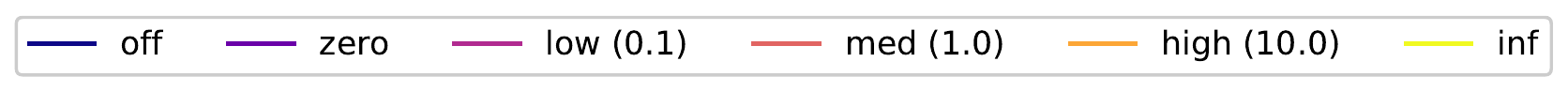}\\
\includegraphics[width=1.0\textwidth]{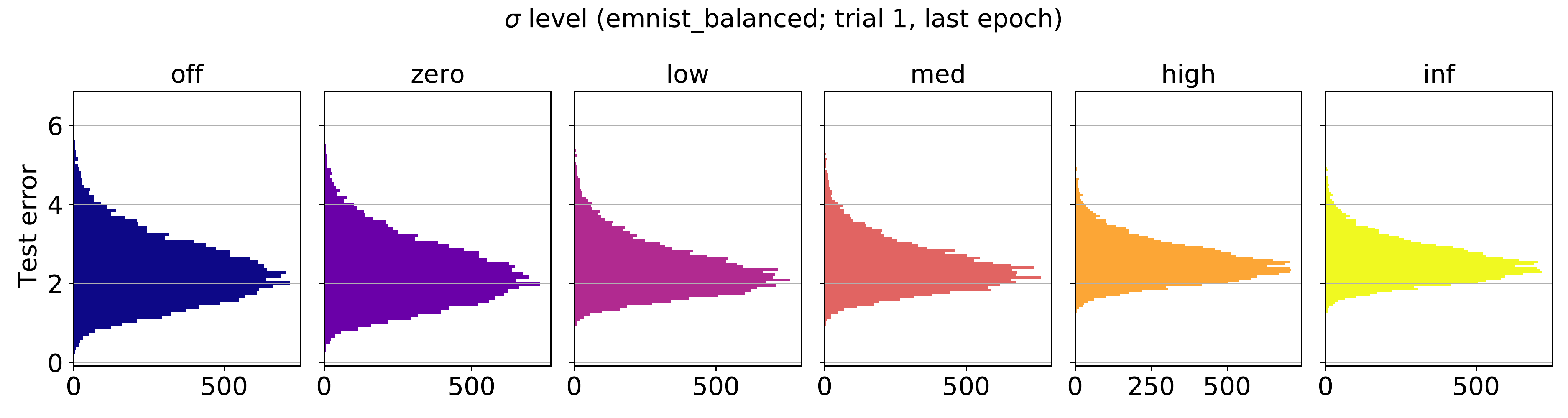}
\caption{Top row: average test error (zero-one loss) as a function of epochs, for four datasets and five $\sigma$ levels, plus traditional ERM (denoted ``off''). Bottom row: histograms of the test error (logistic loss) incurred after the final epoch for one trial under the \texttt{emnist\_balanced} dataset.}
\label{fig:real_data}
\end{figure}

\pagebreak

\appendix

\section{Overview of appendix contents}

Our appendix is comprised of several sections, ordered as follows:
\begin{itemize}
\item[\ref{sec:bg}] Background and setup
\item[\ref{sec:proofs}] Detailed proofs
\item[\ref{sec:helpers}] Helper results
\item[\ref{sec:empirical_supp}] Empirical supplement
\end{itemize}
As with the main paper, we handle theoretical topics before diving into empirical topics. Section \ref{sec:bg} gives a very detailed background including numerous formal definitions, supporting lemmas, and discussion on results used later in the detailed proofs (section \ref{sec:proofs}) for the main paper's results. Additional numerical test results are at the very end of section \ref{sec:empirical_supp}.

To provide additional visual intuition for the reader, we include at the start of this appendix several figures related to $\dev$ defined in (\ref{eqn:dev_handy}), $\dev_{\sigma}$ defined in (\ref{eqn:jrisk_defn}), and the resulting risk functions. In Figure \ref{fig:dev_examples} we plot $\dev$ and its derivatives, plus $\dev_{\sigma}$ for a wide variety of $\sigma \in [0,\infty]$ values. Additional details are given in the figure caption. In Figure \ref{fig:composition}, we show how specific choices of standard loss functions lead two different forms of the function composition $h \mapsto \loss(h) \mapsto \eta \dev_{\sigma}(\loss(h)-\theta)$.

\begin{figure}[t]
\centering
\includegraphics[width=0.33\textwidth]{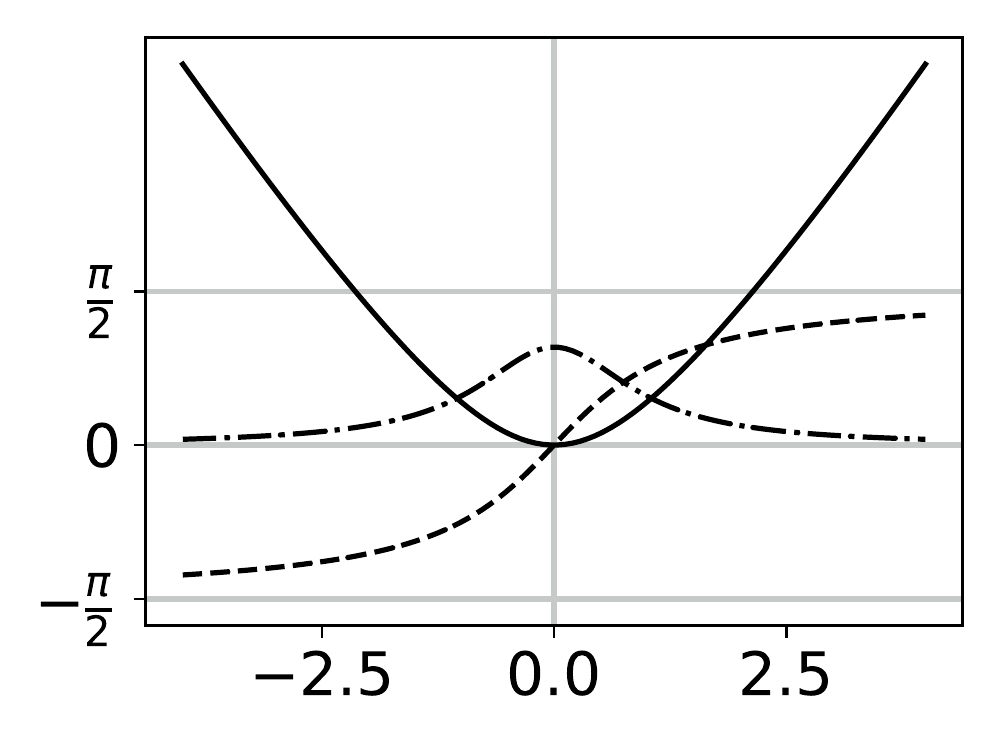}\includegraphics[width=0.33\textwidth]{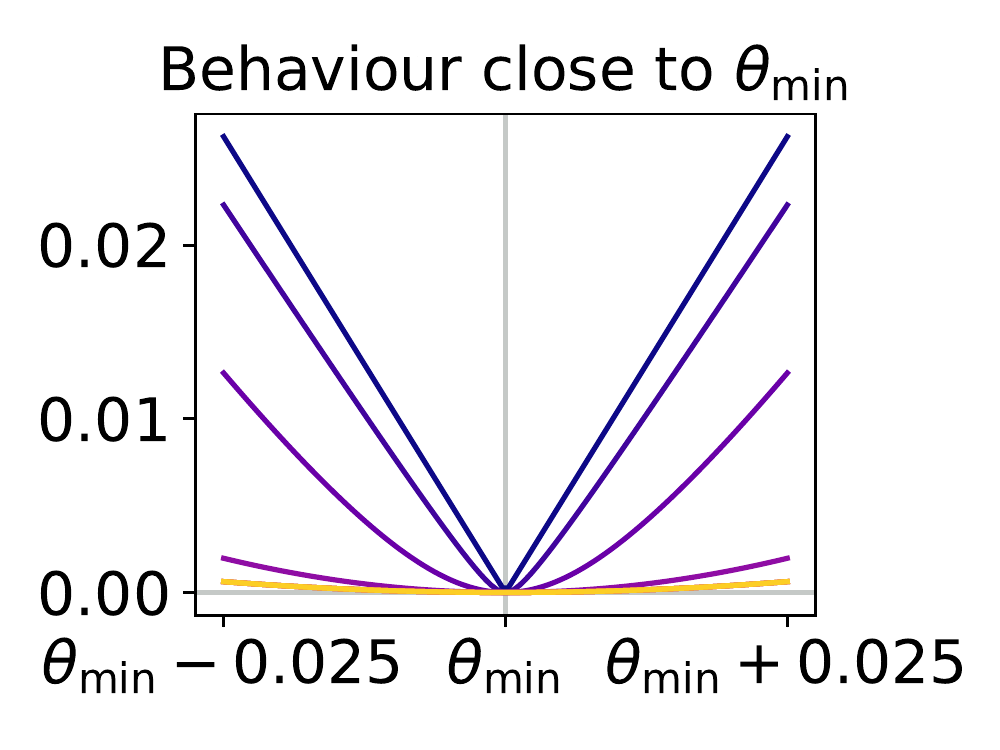}\includegraphics[width=0.33\textwidth]{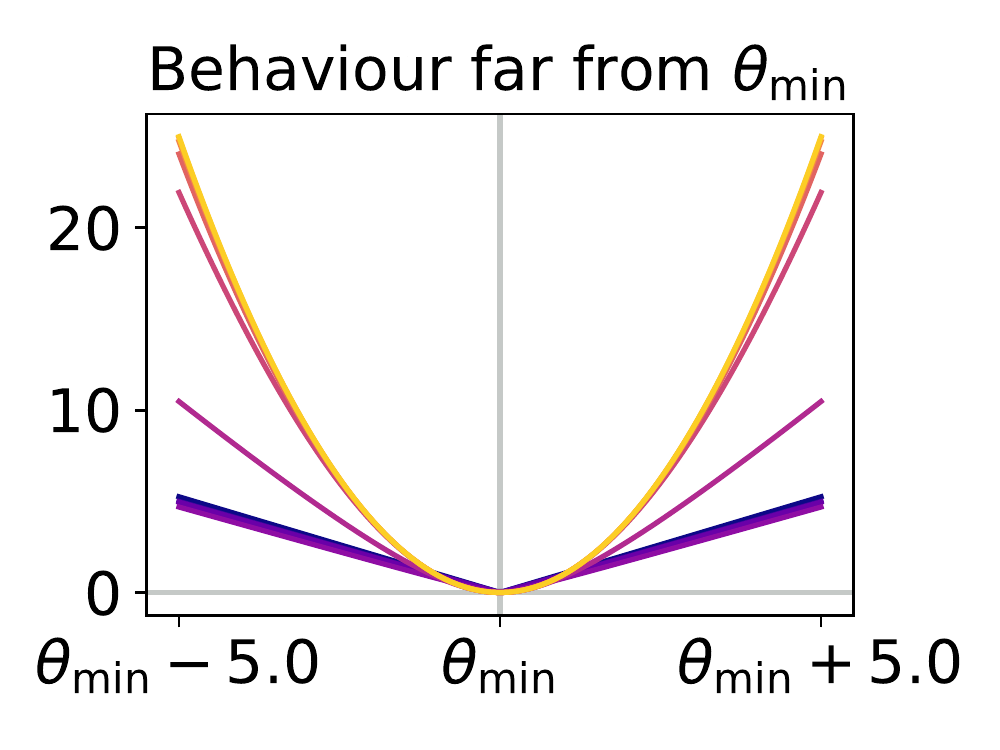}\\
\hspace{4.75cm}\includegraphics[width=0.6\textwidth]{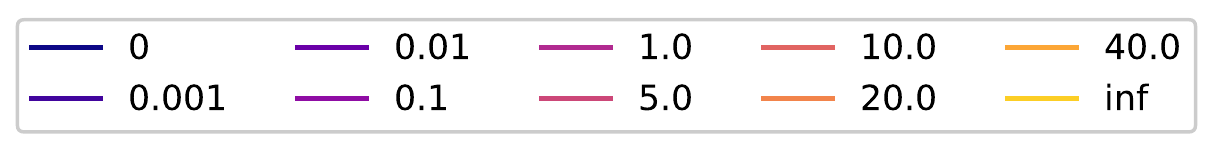}
\caption{Left: graphs of $\dev$ defined in (\ref{eqn:dev_handy}) (solid line), $\dev^{\prime}$ (dashed line), and $\dev^{\prime\prime}$ (dot-dash line). Middle--right: these are graphs of $\theta \mapsto \eta\dev_{\sigma}(1.0-\theta)$, where the minimizer is $\theta_{\min}=1.0$, the colors correspond to different $\sigma$ values, and $\eta$ is set following Remark \ref{rmk:mean_median}. That is, for $0 < \sigma < 1.0$, set $\eta = \sigma/\atan(\infty)$. For $\sigma = 0$, set $\eta = 1.05$. For $1.0 \leq \sigma < \infty$, set $\eta = 2\sigma^{2}$. For $\sigma = \infty$, set $\eta = 1.0$.}
\label{fig:dev_examples}
\end{figure}

\begin{figure}[t]
\centering
\includegraphics[width=0.5\textwidth]{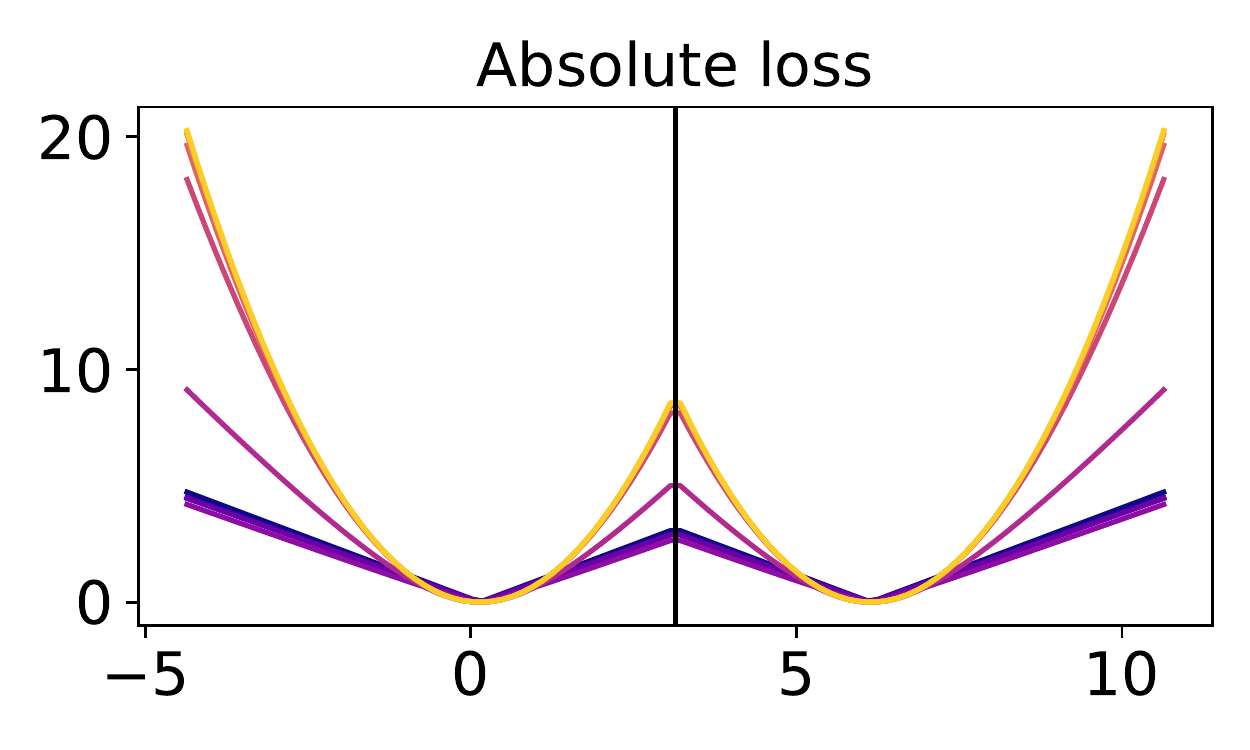}\includegraphics[width=0.5\textwidth]{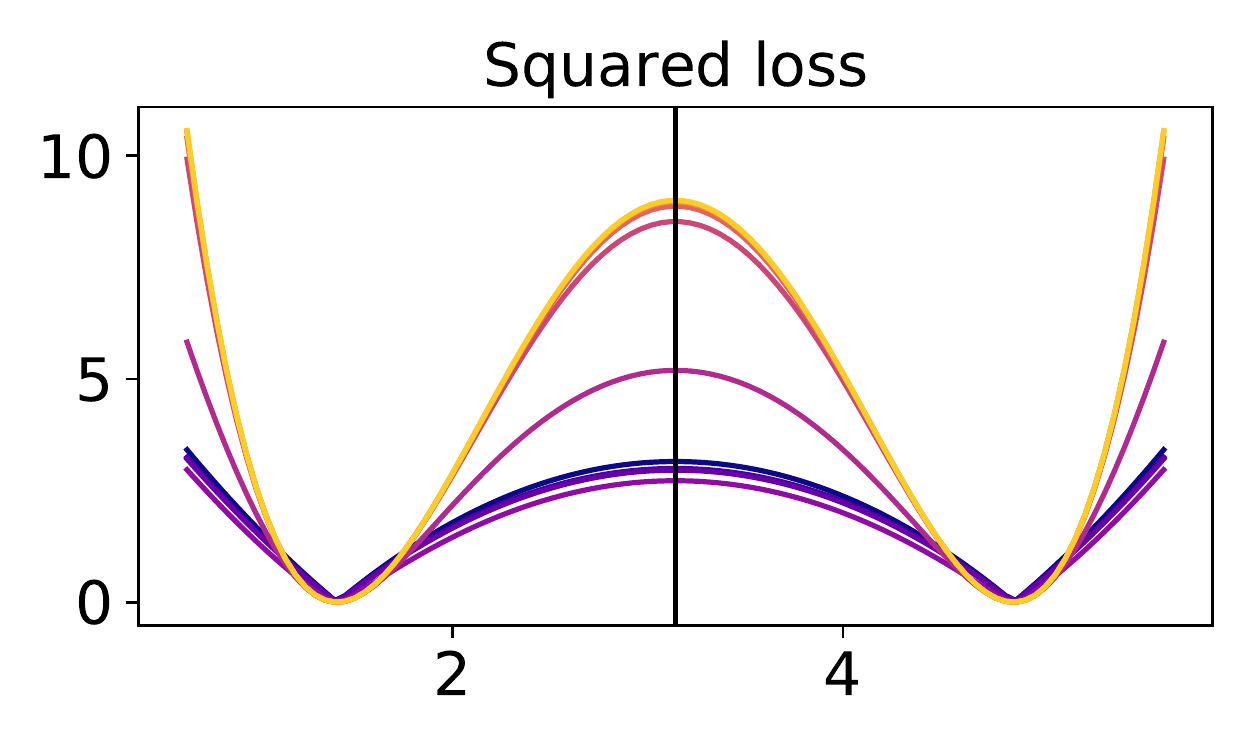}\\
\includegraphics[width=0.5\textwidth]{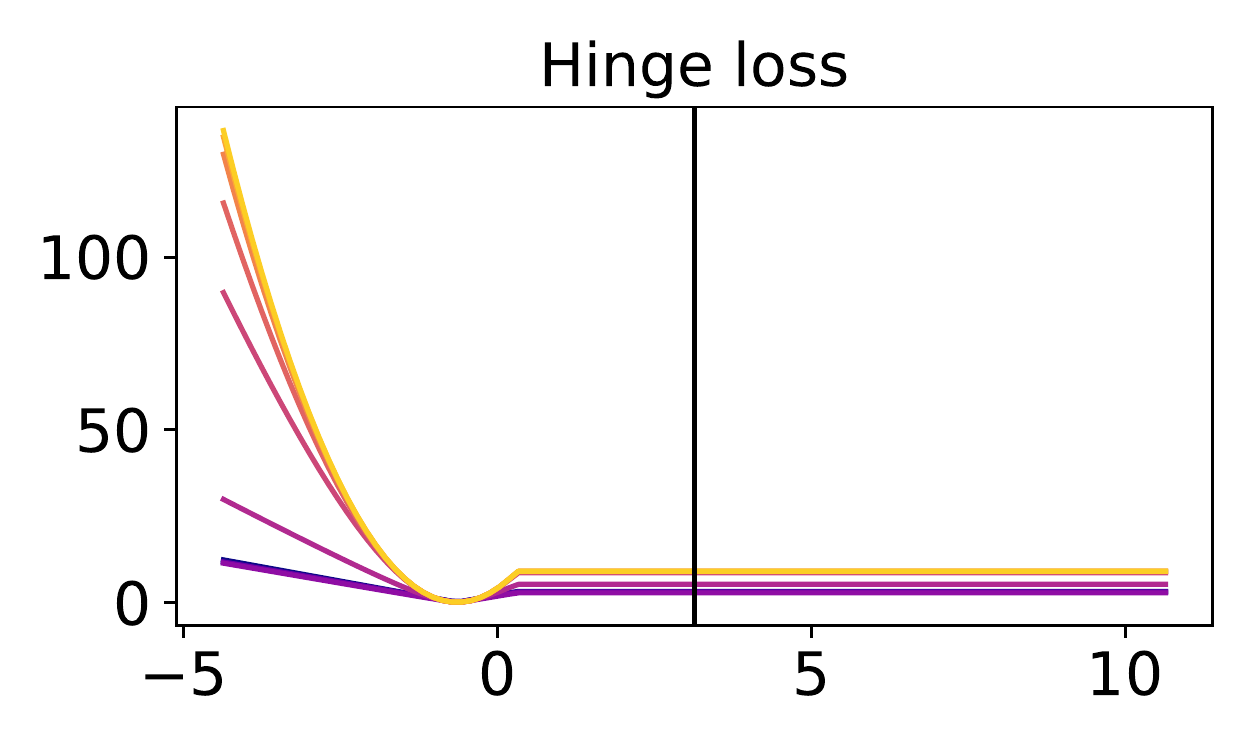}\includegraphics[width=0.5\textwidth]{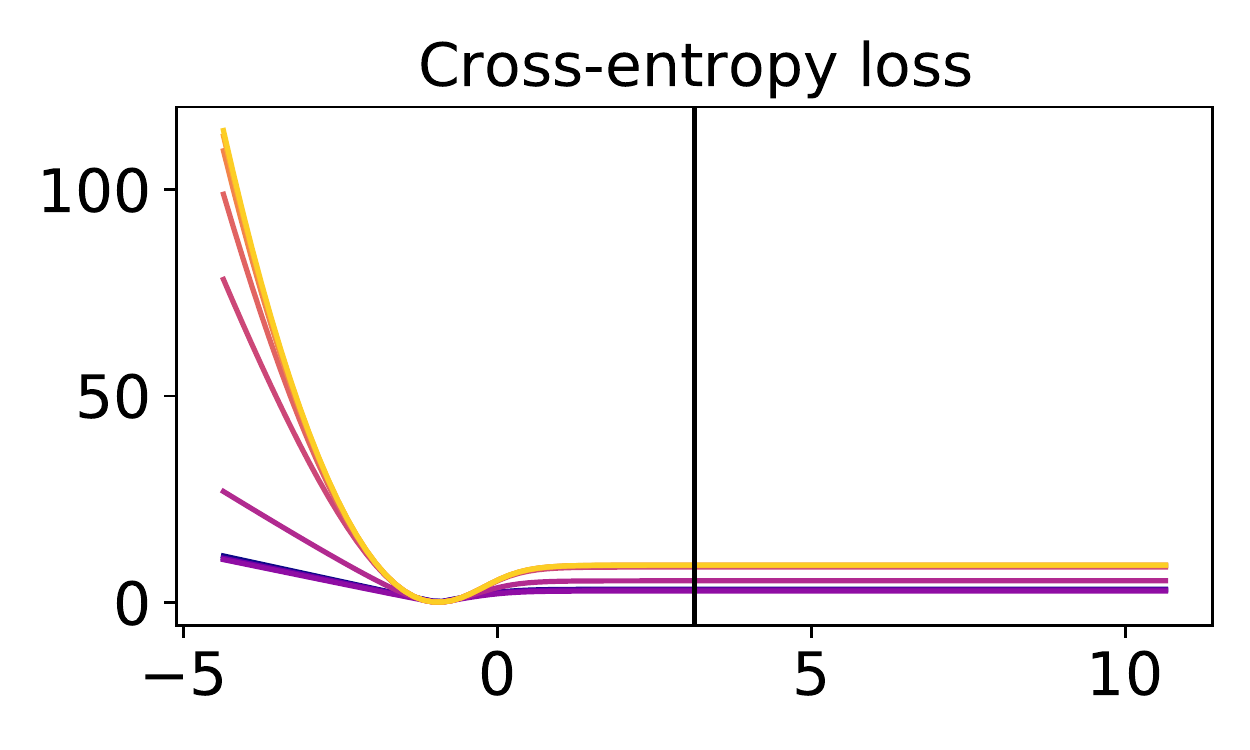}\\
\includegraphics[width=0.6\textwidth]{legend_dev}
\caption{Graphs of $h \mapsto \eta \dev_{\sigma}(\loss(h)-\theta)$, with $h \in \RR$, over different choices of $\sigma$ and $\loss$, with $\theta = 3.0$ fixed, and $\eta$ set as in Figure \ref{fig:dev_examples}. Colors correspond to $\sigma$, and each plot corresponds to a choice of $\loss(\cdot)$. Absolute: $\loss(h) = |h-h^{\ast}|$. Squared: $\loss(h) = (h-h^{\ast})^{2}$. Hinge: $\loss(h) = \max\{0,1-hh^{\ast}\}$. Cross-entropy: $\loss(h) = \log(1+\exp(-hh^{\ast}))$. In all cases, we have fixed $h^{\ast} = \pi$.}
\label{fig:composition}
\end{figure}

\section{Background and setup}\label{sec:bg}

\subsection{Preliminaries}\label{prop:bg_prelim}

\paragraph{General notation (probability)}
Underlying all our analysis is a probability space $(\Omega,\FF,\ddist)$.\footnote{For basic measure-theoretical facts supporting our main arguments, we use \citet{ash2000a} as a well-established and accessible reference. We will cite the exact results that pertain to our arguments in the main text as they become necessary.} All random variables, unless otherwise specified, will be assumed to be $\FF$-measurable functions with domain $\Omega$. Integration using $\ddist$ will be denoted by $\exx_{\ddist}Z \defeq \int_{\Omega}Z(\omega)\,\ddist(\dif \omega)$, and $\prr$ will be used as a generic probability function, typically representing $\ddist$ itself, or the product measure induced by a sample of random variables on $(\Omega,\FF,\ddist)$. We use $\LL_{2} \defeq \LL_{2}(\Omega,\FF,\ddist)$ to denote the set of all square-$\ddist$-integrable functions.\footnote{Strictly speaking, this is the set of all equivalence classes of square-$\ddist$-integrable functions, where $f \in \LL_{2}(\Omega,\FF,\ddist)$ represents all functions that are equal $\ddist$-almost everywhere.}

\paragraph{General notation (normed spaces)}
Let $\VV$ denote an arbitrary vector space. When we call $\VV$ a normed (linear) space, we are referring to $(\VV,\|\cdot\|)$, where $\|\cdot\|: \VV \to \RR$ denotes the relevant norm. For any normed space $\VV$, we shall denote by $\VV^{\ast}$ the usual dual space of $\VV$, namely all continuous linear functionals defined on $\VV$. The space $\VV^{\ast}$ is equipped with the norm $\|v^{\ast}\| \defeq \sup\{ v^{\ast}(u): \forall\,u \in \VV, \|u\| \leq 1 \}$. We shall use the notation $\langle \cdot,\cdot \rangle$ to represent the ``coupling'' function between $\VV$ and $\VV^{\ast}$, that is for any $u \in \VV$ and $v^{\ast} \in \VV^{\ast}$, we will write $\langle u, v^{\ast} \rangle \defeq v^{\ast}(u)$. For any sequence $(x_n)$ of elements $x_1,x_2,\ldots \in \VV$, we denote convergence of $(x_n)$ to some element $x^{\prime}$ by $x_n \to x^{\prime}$. When we take limits and do not specify a particular sequence, for example writing $x \to x^{\prime}$, then this refers to any sequence (of elements from $\VV$) that converges to $x^{\prime}$. In the special case of real-valued sequences (where $\VV \subset \RR$), if we write $x_n \to x^{\prime}_{+}$ (respectively $x_n \to x^{\prime}_{-}$), this refers to all sequences from above (resp.~below), i.e., any convergent sequence such that $x_n \geq x^{\prime}$ (resp.~$x_n \leq x^{\prime}$) for all $n$. We denote the open ball of radius $r>0$ centered at $x_0 \in \VV$ by $B(x_0;r) \defeq \{x \in \VV: \|x_0-x\| < r\}$. We denote the extended real line by $\overbar{\RR}$. On normed space $\VV$, we denote the interior of a set $U \subset \VV$ by $\inter U$ (all $x \in U$ such that $B(x;\delta) \subset U$ for some $\delta$).

\paragraph{General terminology}
On any normed linear space $\VV$, a set $A \subset \VV$ is said to be \term{compact} if for any sequence of elements in $A$, there exists a sub-sequence which converges on $A$. We denote the \term{effective domain} of an extended real-valued function $f$ by $\dom f \defeq \{x: f(x) < \infty\}$. We call a convex function $f:\VV \to \overbar{\RR}$ \term{proper} if $f > -\infty$ and $\dom f \neq \emptyset$. We say that $f$ is \term{coercive} if $\|x\| \to \infty$ implies $f(x) \to \infty$.\footnote{For example, the function $f(x)=x^{2}$ is coercive, but $f(x)=\exp(x)$ is not.} For a function $f: \XX \to \YY$, with $\XX$ and $\YY$ being normed spaces, we say $f$ is (locally) \term{Lipschitz} at $x_0 \in \XX$ if there exists $\delta > 0$ and $\lambda > 0$ such that $x_1,x_2 \in B(x_0;\delta)$ implies $\|f(x_1)-f(x_2)\| \leq \lambda\|x_1-x_2\|$. We say $f$ is $\lambda$-Lipschitz on $\XX$ if this property holds with a common coefficient $\lambda$ for all $x_0 \in \XX$.

\paragraph{Semi-continuous functions}
We say that a function $f$ is \term{lower semi-continuous}\footnote{Nice references on semi-continuity: \citet[Appendix 2]{ash2000a}, \citet[Ch.~2]{luenberger1969Book}, \citet[Sec.~2.1]{barbu2012ConvOptBanach}, \citet[Ch.~1]{penot2012CWOD}.} (LSC) at a point $x$ if for each $\varepsilon > 0$, there exists $\delta > 0$ such that $\|x - x^{\prime}\| < \delta$ implies $f(x^{\prime}) > f(x) - \varepsilon$. If $-g$ is LSC, then we say $g$ is \term{upper} semi-continuous (USC). The property that $f$ is LSC at a point $x$ is equivalent\footnote{\citet[Thm.~A2.2]{ash2000a}, \citet[Lem.~1.18]{penot2012CWOD}.} to the property that for any sequence $x_n \to x$, we have
\begin{align}\label{eqn:lsc_liminf_condition}
f(x) \leq \liminf_{n \to \infty} f(x_n).
\end{align}
Ordinary continuity is equivalent to being both USC and LSC, but the added generality of these weaker notions of continuity is often useful.

\paragraph{Differentiability}
We start by introducing some common notions of directional differentiability at a high level of generality.\footnote{We follow basic notation and terminology of the authoritative text by \citet{penot2012CWOD}.} Let $\XX$ and $\YY$ be normed linear spaces, $U \subset \XX$ an open set, and $f:\XX \to \YY$ a function of interest. The \term{radial derivative} of $f$ at $x \in U$ in direction $u$ is defined
\begin{align}\label{eqn:ddv_gateaux}
f_{r}^{\prime}(x;u) \defeq \lim\limits_{\alpha \to 0_{+}} \frac{f(x + \alpha u)-f(x)}{\alpha}.
\end{align}
A slight modification to this gives us the (Hadamard) \term{directional derivative} of $f$ at $x \in U$ in direction $u$:
\begin{align}\label{eqn:ddv_hadamard}
f^{\prime}(x;u) \defeq \lim\limits_{(\alpha,u^{\prime})\to(0_{+},u)} \frac{f(x + \alpha u^{\prime})-f(x)}{\alpha}
\end{align}
When $f_{r}^{\prime}(x;u)$ exists for all directions $u$, we say that $f$ is \term{radially differentiable} at $x$. Identically, when $f^{\prime}(x;u)$ exists for all directions $u$, we say that $f$ is \term{directionally differentiable} at $x$. When the map $u \mapsto f_{r}^{\prime}(x;u)$ is continuous and linear, we say that $f$ is \term{Gateaux differentiable} at $x$. When the map $u \mapsto f^{\prime}(x;u)$ is continuous and linear, we say $f$ is \term{Hadamard differentiable} at $x$. If $f$ is Hadamard differentiable, then it is Gateaux differentiable. The converse does not hold in general, but if $f$ is Lipschitz on a neighborhood of $x \in U$, then radial differentiability and directional differentiability (at $x$) are equivalent.\footnote{\citet[Prop.~2.25]{penot2012CWOD}.}

When we simply say that a function $f:\XX \to \YY$ is \term{differentiable} at $x \in U$, we mean that there exists a function $f^{\prime}(x)(\cdot):\XX \to \YY$ that is linear, continuous, and which satisfies
\begin{align}\label{eqn:deriv_frechet}
\lim\limits_{\|u\| \to 0} \frac{\|f(x+u)-f(u)-f^{\prime}(x)(u)\|}{\|u\|} = 0.
\end{align}
This property is often referred to as \term{Fr\'{e}chet differentiability}. When $f$ is differentiable at $x$, the map $f^{\prime}(x)$ is uniquely determined.\footnote{See for example \citet[Ch.~7]{luenberger1969Book} or \citet[Ch.~2]{penot2012CWOD}.} In the special case where $\YY \subset \RR$, the linear functional represented by $f^{\prime}(x) \in \XX^{\ast}$ is called the \term{gradient} of $f$ at $x$. Differentiability is also closely related to directional differentiability; if $f$ is Gateaux differentiable on $U$ and the map $x \mapsto f^{\prime}(x;\cdot)$ is continuous at $x$, then $f$ is differentiable at $x$.\footnote{\citet[Prop.~2.51]{penot2012CWOD}.}

\paragraph{Sub-differentials}
Let $\VV$ be any normed linear space. If $f:\VV \to \overbar{\RR}$ is a (proper) convex function, the \term{sub-differential} of $f$ at $x \in \dom f$ is defined
\begin{align}\label{eqn:subdif_convex}
\partial f(x) & \defeq \left\{ v^{\ast} \in \VV^{\ast}: f(u)-f(x) \geq \langle u-x, v^{\ast} \rangle, u \in \VV \right\}\\
\nonumber
& = \left\{ v^{\ast} \in \VV^{\ast}: f^{\prime}_{r}(x;u) \geq \langle u, v^{\ast} \rangle, u \in \VV \right\}.
\end{align}
The second characterization of $\partial f(x)$, given using the radial derivative (\ref{eqn:ddv_gateaux}), is useful and intuitive.\footnote{See \citet[Thm.~3.22]{penot2012CWOD} for this fact.} Some authors refer to this as the \term{Moreau-Rockafellar sub-differential} to emphasize the context of convex analysis.

More generally, however, if $f$ is not convex, then the strong global property used to define the MR sub-differential is so restrictive that most interesting functions are left out. A more general notion is that of the \term{Fr\'{e}chet sub-differential}.\footnote{We follow \citet[Sec.~4.1]{penot2012CWOD} for terms and notation here.} Denoted $\partial_{\frechet} f(x)$, the Fr\'{e}chet sub-differential of $f$ at $x$ is the set of all bounded linear functionals $v^{\ast} \in \VV^{\ast}$ such that for any $\varepsilon > 0$, there exists $\delta > 0$ such that
\begin{align}\label{eqn:subdif_frechet}
\|x-u\| \leq \delta \implies f(u)-f(x) \geq \langle u-x, v^{\ast}\rangle - \varepsilon\|x-u\|.
\end{align}
This local requirement is much weaker than the condition characterizing the MR-sub-differential, and clearly we have $\partial f(x) \subset \partial_{\frechet} f(x)$. When $f$ is assumed to be locally Lipschitz, another class of sub-differentials is often useful. Define the \term{Clarke directional derivative} of $f$ at $x$ in the direction $u$ by
\begin{align}
f_{\clarke}^{\prime}(x;u) \defeq \limsup_{(\alpha,x^{\prime}) \to (0_{+},x)} \frac{f(x^{\prime} + \alpha u
) - f(x^{\prime})}{\alpha}.
\end{align}
The corresponding \term{Clarke sub-differential} is defined as
\begin{align}\label{eqn:subdif_clarke}
\partial_{\clarke} f(x) \defeq \{ v^{\ast} \in \VV^{\ast}: f_{\clarke}^{\prime}(x;u) \geq \langle u, v^{\ast} \rangle, u \in \VV \}.
\end{align}
In the special case where $f$ is convex, all the sub-differentials coincide, i.e., $\partial f(x) = \partial_{\frechet} f(x) = \partial_{\clarke} f(x)$.\footnote{This follows from \citet[Prop.~5.34]{penot2012CWOD}. See also \citet[Sec.~4.1.1, Exercise 1]{penot2012CWOD}.} We say that a function $f$ is \term{sub-differentiable} at $x$ if its sub-differential (in any sense) at $x$ is non-empty. Finally, a remark on notation when using set-valued functions like $x \mapsto \partial_{\clarke} f(x)$. When we write something like ``we have $\langle u, \partial_{\clarke} f(x) \rangle \geq g(u)$,'' it is the same as writing ``we have $\langle u, v^{\ast} \rangle \geq g(u)$ for all $v^{\ast} \in \partial_{\clarke} f(x)$.'' This kind of notation will be used frequently.

\subsection{Generalized convexity}\label{sec:bg_convexity}

Let $\XX$ be a normed linear space. Take an open set $U \subset \XX$ and fix some point $x_0 \in U$. For a function $f:\XX \to \overbar{\RR}$ and parameter $\gamma \in \RR$, say that there exists $\delta > 0$ such that for all $x,x^{\prime} \in B(x_0;\delta)$ and $\alpha \in (0,1)$, we have
\begin{align}\label{eqn:weak_convexity_defn}
f(\alpha x + (1-\alpha)x^{\prime}) \leq \alpha f(x) + (1-\alpha)f(x^{\prime}) + \frac{\gamma \alpha(1-\alpha)}{2}\|x - x^{\prime}\|^{2}.
\end{align}
When $\gamma \geq 0$, we say $f$ is \term{$\gamma$-weakly convex} at $x_0$. When $\gamma \leq 0$, we say $f$ is \term{$(-\gamma)$-strongly convex} at $x_0$. When (\ref{eqn:weak_convexity_defn}) holds for all $x_0 \in U$, we say that $f$ is $\gamma$-weakly/strongly convex on $U$. The special case of $\gamma = 0$ is the traditional definition of convexity on $U$.\footnote{See for example \citet[Ch.~3]{nesterov2004ConvOpt}.}

The ability to construct a quadratic lower-bounding function for $f$ is closely related to notions of weak/strong convexity. Consider the following condition: given $\gamma \in \RR$, there exists $\delta > 0$ such that for all $x,x^{\prime} \in B(x_0;\delta)$ we have
\begin{align}\label{eqn:weak_convexity_quad_lowerbd}
f(x^{\prime}) \geq f(x) + \langle x^{\prime}-x, \partial_{\clarke}f(x) \rangle - \frac{\gamma}{2}\|x-x^{\prime}\|^{2}.
\end{align}
Here $\partial_{\clarke}f$ denotes the Clarke sub-differential of $f$, defined by (\ref{eqn:subdif_clarke}). Let us assume henceforth that $\XX$ is Banach, $f$ is locally Lipschitz, and $\partial_{\clarke}f(x)$ is non-empty for all $x \in U$.\footnote{\citet[Prop.~5.3]{penot2012CWOD}.} For any $\gamma \in \RR$, it is straightforward to show that $\text{(\ref{eqn:weak_convexity_defn})} \implies \text{(\ref{eqn:weak_convexity_quad_lowerbd})}$ holds.\footnote{See for example \citet[Thm.~3.1]{daniilidis2005a}; in particular their proof of $\text{(i)} \implies \text{(iii)}$. Their result is stated for $\XX = \RR^{d}$ and locally Lipschitz $f$, but the proof easily generalizes to Banach spaces. See also the remarks following their proof about how the local Lipschitz condition can be removed.} Since (\ref{eqn:weak_convexity_quad_lowerbd}) gives us a lower bound on both $f(x)$ and $f(x^{\prime})$ for any $x$ and $x^{\prime}$ close enough to $x_0$, adding up the inequalities immediately implies
\begin{align}\label{eqn:weak_convexity_hypomonotone}
\langle x-x^{\prime}, \partial_{\clarke}f(x) - \partial_{\clarke}f(x^{\prime}) \rangle \geq -\gamma\|x-x^{\prime}\|^{2}.
\end{align}
When $\XX$ is Banach and $f$ is locally Lipschitz, it is straightforward to show that $\text{(\ref{eqn:weak_convexity_hypomonotone})} \implies (\ref{eqn:weak_convexity_defn})$ is valid.\footnote{Just apply the argument for $\text{(ii)} \implies \text{(i)}$ employed by \citet[Thm.~3.1]{daniilidis2005a}, and strengthen their argument by using a more general form of Lebourg's mean value theorem \citep[Thm.~5.12]{penot2012CWOD}.} As such, for Banach spaces and locally Lipschitz functions, we have that the conditions (\ref{eqn:weak_convexity_defn}), (\ref{eqn:weak_convexity_quad_lowerbd}), and (\ref{eqn:weak_convexity_hypomonotone}) are all equivalent for the general case of $\gamma \in \RR$.

Let us consider one more closely related property on the same open set $U \subset \XX$:
\begin{align}\label{eqn:weak_convexity_regularize}
x & \mapsto f(x) + \frac{\gamma}{2}\|x\|^{2} \text{ is convex on } U.
\end{align}
In the special case where $\XX$ is a real Hilbert space and the norm $\|\cdot\|$ is induced by the inner product as $\|\cdot\| = \sqrt{\langle \cdot,\cdot \rangle}$, then for any $x,x^{\prime} \in U$ and $\alpha \in \RR$, the equality
\begin{align}\label{eqn:innerprod_norm_helper}
\alpha\|x\|^{2} + (1-\alpha)\|x^{\prime}\|^{2} = \|\alpha x + (1-\alpha)x^{\prime}\|^{2} + \alpha(1-\alpha)\|x-x^{\prime}\|^{2}
\end{align}
is easily checked to be valid.\footnote{\citet[Cor.~2.15]{bauschke2017CMH}.} In this case, the equivalence $\text{(\ref{eqn:weak_convexity_defn})} \iff \text{(\ref{eqn:weak_convexity_regularize})}$ follows from direct verification using (\ref{eqn:innerprod_norm_helper}).\footnote{See also \citet[Lem.~2.1]{davis2019b} for a similar result when $\XX = \RR^{d}$ and $f$ is LSC.}

The facts above are summarized in the following result:
\begin{prop}[Characterization of generalized convexity]\label{prop:weak_convexity_characterized}
Consider a function $f:\XX \to \overbar{\RR}$ on normed linear space $\XX$. When $\XX$ is Banach and $f$ is locally Lipschitz, then with respect to open set $U \subset \XX$ we have the following equivalence:
\begin{align*}
\text{(\ref{eqn:weak_convexity_defn})} \iff \text{(\ref{eqn:weak_convexity_quad_lowerbd})} \iff \text{(\ref{eqn:weak_convexity_hypomonotone})}.
\end{align*}
When $\XX$ is Hilbert, this equivalence extends to (\ref{eqn:weak_convexity_regularize}).
\end{prop}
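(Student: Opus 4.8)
The plan is to assemble the claimed equivalences from the four implications that the surrounding text has already sketched via footnotes, organizing them into a clean cycle plus the Hilbert-specific add-on. Concretely, I would first establish, under the hypotheses that $\XX$ is Banach and $f$ is locally Lipschitz (so that, by the cited consequence of \citet[Prop.~5.3]{penot2012CWOD}, $\partial_{\clarke}f(x)$ is non-empty on $U$), the chain $\text{(\ref{eqn:weak_convexity_defn})} \implies \text{(\ref{eqn:weak_convexity_quad_lowerbd})} \implies \text{(\ref{eqn:weak_convexity_hypomonotone})} \implies \text{(\ref{eqn:weak_convexity_defn})}$. The first implication is the adaptation of the proof of $\text{(i)}\implies\text{(iii)}$ in \citet[Thm.~3.1]{daniilidis2005a} from $\RR^d$ to a Banach space; the key point is that their argument only uses the one-dimensional restriction $t \mapsto f(x + t(x'-x))$ together with the mean-value inequality, so nothing genuinely finite-dimensional is invoked. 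The second implication is the trivial ``add the two lower bounds'' step already displayed in the text: writing (\ref{eqn:weak_convexity_quad_lowerbd}) once with the roles of $x,x'$ as given and once with them swapped, and summing, cancels the function values and yields (\ref{eqn:weak_convexity_hypomonotone}).

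The third implication, $\text{(\ref{eqn:weak_convexity_hypomonotone})} \implies \text{(\ref{eqn:weak_convexity_defn})}$, is the one I expect to be the main obstacle, since going from a monotonicity-type estimate on the sub-differential back to the defining midpoint inequality is where one typically needs a mean value theorem, and in a Banach space one must be careful about which version is available. The route I would take mirrors the $\text{(ii)}\implies\text{(i)}$ argument of \citet[Thm.~3.1]{daniilidis2005a}, but substituting Lebourg's mean value theorem in the general Banach form \citep[Thm.~5.12]{penot2012CWOD} for the classical one: fix $x,x'\in B(x_0;\delta)$ and $\alpha\in(0,1)$, set $x_\alpha = \alpha x + (1-\alpha)x'$, and apply Lebourg to $f$ on the segments $[x,x_\alpha]$ and $[x_\alpha,x']$ to produce intermediate points and Clarke sub-gradients controlling $f(x)-f(x_\alpha)$ and $f(x')-f(x_\alpha)$; then feed these into the hypomonotonicity bound (\ref{eqn:weak_convexity_hypomonotone}) applied along the segment to absorb the cross terms, and reassemble to recover (\ref{eqn:weak_convexity_defn}). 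Local Lipschitzness is what makes Lebourg's theorem applicable and keeps all the sub-differentials bounded; I would note explicitly (as the paper's footnote does) that the local Lipschitz hypothesis can in fact be relaxed, but for our purposes there is no need to pursue that.

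Finally, under the stronger assumption that $\XX$ is Hilbert with norm induced by the inner product, I would adjoin (\ref{eqn:weak_convexity_regularize}) to the list by a direct computation using the parallelogram-type identity (\ref{eqn:innerprod_norm_helper}): substituting $g(x) = f(x) + \tfrac{\gamma}{2}\|x\|^2$ into the ordinary convexity inequality and expanding the quadratic term via (\ref{eqn:innerprod_norm_helper}) shows that convexity of $g$ on $U$ is literally the same statement as (\ref{eqn:weak_convexity_defn}) for $f$, giving $\text{(\ref{eqn:weak_convexity_defn})} \iff \text{(\ref{eqn:weak_convexity_regularize})}$ with no appeal to differentiability at all. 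Chaining this with the Banach-case equivalences completes the proof. The only genuine care points are: ensuring the $\delta$'s produced in each implication can be taken uniform enough to close the cycle at a fixed $x_0$ (they can, since each implication preserves the neighborhood up to shrinking), and citing the correct Banach-valued mean value theorem rather than silently importing the finite-dimensional one.
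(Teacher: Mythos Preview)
Your proposal is correct and follows essentially the same route as the paper: the paper does not give a separate proof of this proposition but simply states it as a summary of the preceding discussion, which lays out exactly the cycle $\text{(\ref{eqn:weak_convexity_defn})} \Rightarrow \text{(\ref{eqn:weak_convexity_quad_lowerbd})} \Rightarrow \text{(\ref{eqn:weak_convexity_hypomonotone})} \Rightarrow \text{(\ref{eqn:weak_convexity_defn})}$ via the \citet{daniilidis2005a} arguments (with Lebourg's mean value theorem for the last implication), plus the direct Hilbert-space computation via (\ref{eqn:innerprod_norm_helper}) for the equivalence with (\ref{eqn:weak_convexity_regularize}). Your additional care about uniformity of the $\delta$'s and explicit invocation of the Banach-valued mean value theorem is appropriate and in the spirit of the paper's footnotes.
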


\subsection{Function composition on normed spaces}\label{sec:bg_composition}

Next we consider the properties of compositions involving functions which are smooth and/or convex. Let $\XX$ and $\YY$ be normed linear spaces. Let $g: \XX \to \YY$ and $h: \YY \to \overbar{\RR}$ be the maps used in our composition, and denote by $f \defeq h \circ g$ the composition, i.e., $f(x) = h(g(x))$ for each $x \in \XX$. Our goal will be to present sufficient conditions for the composition $f$ to be weakly convex on an open set $U \subset \XX$, in the sense of (\ref{eqn:weak_convexity_defn}). If we assume simply that $h$ is convex, fixing any point $x_0 \in U$ such that $h$ is sub-differentiable at $g(x_0)$, it follows that for any choice of $x \in \XX$ we have
\begin{align}\label{eqn:weak_convexity_0}
f(x) = h(g(x)) & \geq h(g(x_0)) + \langle g(x)-g(x_0), \partial h(g(x_0)) \rangle.
\end{align}
Let us further assume that $h$ is $\lambda_0$-Lipschitz, and $g$ is \term{smooth} in the sense that it is differentiable on $U$ and the map $x \mapsto g^{\prime}(x)$ is $\lambda_1$-Lipschitz. For readability, denote the derivative $g^{\prime}(x_0): \XX \to \YY$ by $g_{0}^{\prime}(\cdot) \defeq g^{\prime}(x_0)(\cdot)$. Taking any choice of $v_0 \in \partial h(g(x_0))$, we can write
\begin{align}
\nonumber
\langle g(x)-g(x_0), v_0 \rangle & = \langle g_{0}^{\prime}(x-x_0), v_0 \rangle + \langle g(x)-g(x_0)-g_{0}^{\prime}(x-x_0), v_0 \rangle\\
\nonumber
& \geq \langle g_{0}^{\prime}(x-x_0), v_0 \rangle - \|g(x)-g(x_0)-g_{0}^{\prime}(x-x_0)\|\|v_0\|\\
\nonumber
& \geq \langle g_{0}^{\prime}(x-x_0), v_0 \rangle - \frac{\lambda_1}{2}\|x-x_0\|^{2}\|v_0\|\\
\label{eqn:weak_convexity_1}
& \geq \langle g_{0}^{\prime}(x-x_0), v_0 \rangle - \frac{\lambda_0\lambda_1}{2}\|x-x_0\|^{2}
\end{align}
The first inequality follows from the definition of the norm for linear functionals and the fact that $\partial h(g(x_0)) \subset \YY^{\ast}$. The second inequality follows from a Taylor approximation for Banach spaces (Proposition \ref{prop:smooth_taylor_1st}), using the smoothness of $g$. The final equality follows from the fact that for convex functions, the Lipschitz coefficient implies a bound on all sub-gradients, see (\ref{eqn:lip_convex_subgrad_bound}). To deal with the remaining term, note that we can write
\begin{align}\label{eqn:weak_convexity_2}
\langle g_{0}^{\prime}(x-x_0), \partial h(g(x_0)) \rangle = \langle x-x_0, (g_{0}^{\prime})^{\ast}(\partial h(g(x_0))) \rangle = \langle x-x_0, \partial h(g(x_0)) \circ g_{0}^{\prime} \rangle.
\end{align}
To explain the notation here, we use $(\cdot)^{\ast}$ to denote the adjoint, namely $A^{\ast}(y^{\ast}) \defeq y^{\ast} \circ A$, induced by any continuous linear map $A: \XX \to \YY$, defined for each $y^{\ast} \in \YY^{\ast}$.\footnote{Written explicitly, for any $u \in \XX$, we have $\langle u, A^{\ast}(y^{\ast}) \rangle = A^{\ast}(y^{\ast})(u) = y^{\ast}(Au) = \langle Au, y^{\ast} \rangle$. If $A: \XX \to \YY$ is linear and continuous, this implies that the adjoint $A^{\ast}$ is a continuous linear map from $\YY^{\ast}$ to $\XX^{\ast}$. For more general background: \citet[Ch.~6]{luenberger1969Book}, \citet[Ch.~1]{penot2012CWOD}.} The special case we have considered here is where $Au = g_{0}^{\prime}(u)$, noting that differentiability means that the map $u \mapsto g_{0}^{\prime}(u)$ is continuous and linear. Recalling the desired form of (\ref{eqn:weak_convexity_quad_lowerbd}), we need to establish a connection with $\partial_{\clarke}f(x_0)$. If we further assume that $g$ is locally Lipschitz, then we have
\begin{align}\label{eqn:weak_convexity_3}
\partial_{\clarke}f(x_0) \subset \partial_{\clarke}h(g(x_0)) \circ g_{0}^{\prime} = \partial h(g(x_0)) \circ g_{0}^{\prime},
\end{align}
where the equality follows from the coincidence of sub-differentials in the convex case, and the key inclusion follows from direct application of a generalized chain rule.\footnote{See for example \citet[Thm.~5.13(b)]{penot2012CWOD}. This inclusion holds as long as $g$ is strictly differentiable \citep[Defn.~2.54]{penot2012CWOD}, a property implied by the smoothness we have assumed.} Taking these facts together yields the following result.
\begin{prop}[Weak convexity for composite functions]\label{prop:weak_convexity_comp}
Let $\XX$ and $\YY$ be Banach spaces. Let $g:\XX \to \YY$ be locally Lipschitz and $\lambda_1$-smooth on an open set $U \subset \XX$. Let $h:\YY \to \overbar{\RR}$ be convex and $\lambda_0$-Lipschitz on $g(U) \subset \YY$. Furthermore, let $g(U) \subset \dom h$. Then, the composite function $f \defeq h \circ g$ is $\gamma$-weakly convex on $U$, for $\gamma = \lambda_0\lambda_1$.
\end{prop}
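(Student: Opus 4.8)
The plan is to combine the pointwise lower bound coming from convexity of $h$ with a first-order Taylor estimate coming from smoothness of $g$, exactly as the excerpt's displayed inequalities (\ref{eqn:weak_convexity_0})--(\ref{eqn:weak_convexity_3}) lay out, and then package the result into the form (\ref{eqn:weak_convexity_quad_lowerbd}), which by Proposition \ref{prop:weak_convexity_characterized} is equivalent to $\gamma$-weak convexity (\ref{eqn:weak_convexity_defn}). So first I would fix an arbitrary $x_0 \in U$; since $g(U) \subset \dom h$ and $h$ is convex and $\lambda_0$-Lipschitz on $g(U)$, $h$ is sub-differentiable at $g(x_0)$, so I may pick $v_0 \in \partial h(g(x_0))$. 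Starting from the convexity inequality $f(x) = h(g(x)) \geq h(g(x_0)) + \langle g(x)-g(x_0), v_0\rangle$, which holds for all $x$, I then estimate the inner product term: splitting $g(x)-g(x_0) = g_0'(x-x_0) + [g(x)-g(x_0)-g_0'(x-x_0)]$, bounding the remainder by $\|v_0\|$ times its norm, using the first-order Taylor bound for smooth maps on Banach spaces (Proposition \ref{prop:smooth_taylor_1st}) to control $\|g(x)-g(x_0)-g_0'(x-x_0)\| \leq (\lambda_1/2)\|x-x_0\|^2$, and using that a $\lambda_0$-Lipschitz convex function has all sub-gradients of norm at most $\lambda_0$ (equation (\ref{eqn:lip_convex_subgrad_bound})). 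This yields $f(x) \geq h(g(x_0)) + \langle g_0'(x-x_0), v_0\rangle - (\lambda_0\lambda_1/2)\|x-x_0\|^2$, valid on a neighborhood of $x_0$ (indeed on all of $U$ where the hypotheses hold).

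Next I would rewrite the linear term using the adjoint: $\langle g_0'(x-x_0), v_0\rangle = \langle x-x_0, (g_0')^{\ast} v_0\rangle = \langle x-x_0, v_0 \circ g_0'\rangle$, so that as $v_0$ ranges over $\partial h(g(x_0))$ the functionals $v_0 \circ g_0'$ range over $\partial h(g(x_0)) \circ g_0'$. To connect this to $\partial_{\clarke} f(x_0)$, I invoke the generalized chain rule for Clarke sub-differentials of compositions with a strictly differentiable inner map, giving $\partial_{\clarke} f(x_0) \subset \partial_{\clarke} h(g(x_0)) \circ g_0' = \partial h(g(x_0)) \circ g_0'$ (the equality because sub-differentials coincide for the convex $h$); here I use that $g$ being $\lambda_1$-smooth implies strict differentiability and that $g$ is locally Lipschitz, as assumed. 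Hence for every $v^{\ast} \in \partial_{\clarke} f(x_0)$ we have $v^{\ast} = v_0 \circ g_0'$ for some $v_0 \in \partial h(g(x_0))$, and the bound above reads $f(x) \geq f(x_0) + \langle x-x_0, v^{\ast}\rangle - (\lambda_0\lambda_1/2)\|x-x_0\|^2$ for all such $v^{\ast}$ and all $x$ near $x_0$. This is precisely (\ref{eqn:weak_convexity_quad_lowerbd}) with $\gamma = \lambda_0\lambda_1$, and since $x_0 \in U$ was arbitrary, Proposition \ref{prop:weak_convexity_characterized} (applicable since $\XX$ is Banach and $f = h\circ g$ is locally Lipschitz as a composition of locally Lipschitz maps) gives $\gamma$-weak convexity of $f$ on $U$, completing the proof.

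The main obstacle, and the only place requiring real care, is the chain rule step (\ref{eqn:weak_convexity_3}): one must ensure the hypotheses genuinely license the inclusion $\partial_{\clarke} f(x_0) \subset \partial h(g(x_0)) \circ g_0'$ in the Banach setting — that $\lambda_1$-smoothness yields strict differentiability of $g$, that $h$ being convex and Lipschitz near $g(x_0)$ makes it Clarke-regular with $\partial_{\clarke} h = \partial h$ there, and that the cited generalized chain rule (e.g.\ \citet[Thm.~5.13(b)]{penot2012CWOD}) applies in infinite dimensions. A secondary technical point worth stating explicitly is that local Lipschitzness of $f$ — needed to invoke Proposition \ref{prop:weak_convexity_characterized} and to guarantee $\partial_{\clarke} f(x_0) \neq \emptyset$ — follows because $g$ is locally Lipschitz on $U$ and $h$ is $\lambda_0$-Lipschitz on $g(U)$, so the composite is locally Lipschitz on $U$ with local constant $\lambda_0$ times that of $g$. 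Everything else is the routine triangle-inequality and Taylor-remainder bookkeeping already displayed in the excerpt.
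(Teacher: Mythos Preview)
Your proposal is correct and follows essentially the same approach as the paper's own proof: assemble the displayed inequalities (\ref{eqn:weak_convexity_0})--(\ref{eqn:weak_convexity_3}) into the quadratic lower bound (\ref{eqn:weak_convexity_quad_lowerbd}), then invoke Proposition \ref{prop:weak_convexity_characterized}. The only cosmetic difference is that you argue sub-differentiability of $h$ at $g(x_0)$ directly from its convexity and Lipschitz property, whereas the paper deduces $\partial h(g(x_0)) \neq \emptyset$ from $\partial_{\clarke}f(x_0) \neq \emptyset$ together with the chain-rule inclusion (\ref{eqn:weak_convexity_3}); both routes are valid and the remainder of the argument is identical.
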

\begin{proof}
The desired result just requires us to piece together the key facts we have outlined in the main text. Local Lipschitz properties for $g$ and $h$ imply that $f$ is locally Lipschitz, and thus $\partial_{\clarke}f(x_0) \neq \emptyset$ for all $x_0 \in U$. Using (\ref{eqn:weak_convexity_3}), we have that $\partial h(g(x_0)) \neq \emptyset$ as well. With this in mind, linking up (\ref{eqn:weak_convexity_0})--(\ref{eqn:weak_convexity_2}), under the assumptions stated, for each $x,x_0 \in U$ we have
\begin{align}\label{eqn:weak_convexity_4}
f(x) \geq f(x_0) + \langle x-x_0, \partial h(g(x_0)) \circ g_{0}^{\prime} \rangle - \frac{\lambda_0\lambda_1}{2}\|x-x_0\|^{2}.
\end{align}
Using the inclusion (\ref{eqn:weak_convexity_3}) with (\ref{eqn:weak_convexity_4}), we have (\ref{eqn:weak_convexity_quad_lowerbd}) for $\gamma = \lambda_0\lambda_1$, and the desired result holds since (\ref{eqn:weak_convexity_quad_lowerbd}) implies (\ref{eqn:weak_convexity_defn}).
\end{proof}
\begin{rmk}
We note that Proposition \ref{prop:weak_convexity_comp} extends a result of \citet[Lem.~4.2]{drusvyatskiy2019a} from the case where $\XX$ and $\YY$ are finite-dimensional Euclidean spaces, to the general Banach space setting considered here. For the classical case of Euclidean spaces, exact chain rules are well-known \citep[Ch.~10.B]{rockafellar1998VA}.
\end{rmk}

\subsection{Proximal maps of weakly convex functions}\label{sec:bg_prox}

For normed linear space $\XX$ and function $f:\XX \to \overbar{\RR}$, the \term{Moreau envelope} $\env_{\beta}f$ and \term{proximal mapping} (or \term{proximity operator}) $\prox_{\beta}f$ are respectively defined for each $x \in \XX$ as follows:
\begin{align}
\label{eqn:env_defn}
\env_{\beta}f(x) & \defeq \inf\left\{ f(x^{\prime}) + \frac{1}{2\beta}\|x-x^{\prime}\|^{2} : x^{\prime} \in \XX \right\}\\
\label{eqn:prox_defn}
\prox_{\beta}f(x) & \defeq \argmin_{x^{\prime} \in \XX} \left[ f(x^{\prime}) + \frac{1}{2\beta}\|x-x^{\prime}\|^{2} \right].
\end{align}
Here $\beta > 0$ is a parameter. In the case where $f$ is convex, the basic properties of the proximal map and envelope are well-understood, particularly when $\XX$ is a Hilbert space.\footnote{See for example \citet[Ch.~12 and 24]{bauschke2017CMH}. For Banach spaces, modified notions of ``proximity'' measured using Bregman divergences have also been developed \citep{bauschke2003a,soueycatt2020a}. See also \citet{jourani2014a} for more analysis of the Moreau envelope in more general spaces.} These insights extend readily to the setting of weak convexity. Under the assumption that $\XX$ is Hilbert, let $f$ be $\gamma$-weakly convex on $\XX$. Trivially we can write
\begin{align*}
f(x^{\prime}) + \frac{1}{2\beta}\|x-x^{\prime}\|^{2} = \left( f(x^{\prime}) + \frac{\gamma}{2}\|x-x^{\prime}\|^{2} \right) + \frac{1}{2}\left(\frac{1}{\beta}-\gamma\right)\|x-x^{\prime}\|^{2}.
\end{align*}
If we write $f_{\gamma,x}(u) \defeq f(u) + (\gamma/2)\|x-u\|^{2}$ and $\beta_\gamma \defeq (\beta^{-1}-\gamma)^{-1}$ for readability, then as long as $\beta_\gamma > 0$ we have for all $x \in \XX$ that $\env_{\beta}f(x) = \env_{\beta_\gamma}f_{\gamma,x}(x)$ and $\prox_{\beta}f(x) = \prox_{\beta_\gamma}f_{\gamma,x}(x)$. By leveraging Proposition \ref{prop:weak_convexity_characterized} under the Hilbert space assumption, we have that for any $x \in \XX$, the function $f_{\gamma,x}(\cdot)$ is convex. This means that as long as $\beta_\gamma > 0$, namely whenever $\gamma < \beta^{-1}$, all the standard results available for the case of convex functions can be brought to bear on the problem.\footnote{For example, see \citet[Sec.~12.4]{bauschke2017CMH}.} Of particular importance to us is the fact that when $f$ is LSC and $\gamma$-weakly convex, the Moreau envelope is differentiable, with gradient
\begin{align}\label{eqn:env_gradient_form}
(\env_{\beta}f)^{\prime}(x) = \frac{1}{\beta}\left(x - \prox_{\beta}f(x) \right),
\end{align}
well-defined for all $\beta < \gamma^{-1}$ and $x \in \XX$.\footnote{One can just apply standard arguments such as given by \citet[Prop.~12.30]{bauschke2017CMH}, while utilizing the weak convexity property described. See also \citet[Lem.~4.3]{drusvyatskiy2019a}, \citet[Lem.~2.2]{davis2019b}, and \citet[Thm.~4.4]{poliquin1996a}.} We will be interested in finding \term{stationary points}  of $f$, namely those $x \in \XX$ such that $0 \in \partial_{\clarke}f(x)$. From the basic properties of the envelope and proximal mapping, for $\gamma$-weakly convex $f$ we have
\begin{align}
\dist(0;\partial_{\clarke}f(\prox_{\beta}f(x))) \leq \|(\env_{\beta}f)^{\prime}(x)\|.
\end{align}
That is, for any point $x \in \XX$, the point $\prox_{\beta}f(x) \in \XX$ is approximately stationary. The degree of precision is controlled by the gradient of $\env_{\beta}f$ evaluated at $x$. In addition, it follows immediately from (\ref{eqn:env_gradient_form}) that
\begin{align}\label{eqn:env_gradient_norm}
\|x - \prox_{\beta}f(x)\| = \beta\|(\env_{\beta}f)^{\prime}(x)\|.
\end{align}
Since one trivially also has $f(\prox_{\beta}f(x)) \leq f(x)$, the norm of the gradient of $\env_{\beta}f$ evaluated at $x$ also tells us how far we are from a point (namely $\prox_{\beta}f(x) \in \XX$) which is no worse than $x$ in terms of function value. These basic facts directly motivate the use of the Moreau envelope norm to quantify algorithm performance.\footnote{This is highlighted in works such as \citet{drusvyatskiy2019a} and \citet{davis2019b}.}

\section{Detailed proofs}\label{sec:proofs}

\subsection{Proofs for section \ref{sec:mrisk}}\label{sec:proofs_mrisk}

\begin{lem}[Lower semi-continuity]\label{lem:dev_lsc}
Let $\ZZ$ be a linear space of $\FF$-measurable random variables, and let $\dev:\RR \to \overbar{\RR}$ be any non-negative LSC function that is Borel-measurable. Then we have that the functional $(Z,\theta) \mapsto \exx_{\ddist}\dev(Z-\theta)$ is also LSC.
\end{lem}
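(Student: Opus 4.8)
The plan is to establish lower semi-continuity of the functional $\Phi(Z,\theta) \defeq \exx_{\ddist}\dev(Z-\theta)$ on $\ZZ \times \RR$ by working directly from the sequential characterization \eqref{eqn:lsc_liminf_condition}: take an arbitrary sequence $(Z_n,\theta_n) \to (Z,\theta)$ in $\ZZ \times \RR$ and show $\Phi(Z,\theta) \leq \liminf_{n} \Phi(Z_n,\theta_n)$. Convergence in the product norm forces $\theta_n \to \theta$ in $\RR$ and $\|Z_n - Z\| \to 0$ in whatever norm $\ZZ$ carries. First I would reduce to a pointwise-in-$\omega$ statement: since $\dev$ is LSC on $\RR$, for each fixed $\omega \in \Omega$ and any sequence of reals $u_n \to u$ we have $\dev(u) \leq \liminf_n \dev(u_n)$; applying this with $u_n = Z_n(\omega) - \theta_n$ and $u = Z(\omega) - \theta$ would be immediate \emph{if} we knew $Z_n(\omega) \to Z(\omega)$ for (almost) every $\omega$.

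The main obstacle is precisely that norm convergence $Z_n \to Z$ in $\ZZ$ (e.g.\ an $\LL_2$-type norm) does \emph{not} give pointwise convergence $Z_n(\omega) \to Z(\omega)$ a.e.; it only gives convergence of a subsequence. The standard fix is a subsequence argument combined with Fatou's lemma. Concretely: let $\ell \defeq \liminf_n \Phi(Z_n,\theta_n)$; passing to a subsequence $(n_k)$ along which $\Phi(Z_{n_k},\theta_{n_k}) \to \ell$, then passing to a further subsequence (not relabelled) along which $Z_{n_k} \to Z$ $\ddist$-almost everywhere (possible since norm convergence implies convergence in measure, which implies a.e.\ convergence of a subsequence — using e.g.\ \citet{ash2000a}), we have for $\ddist$-a.e.\ $\omega$ that $Z_{n_k}(\omega) - \theta_{n_k} \to Z(\omega) - \theta$, hence by LSC of $\dev$,
\begin{align*}
\dev(Z(\omega)-\theta) \leq \liminf_{k\to\infty} \dev\bigl(Z_{n_k}(\omega) - \theta_{n_k}\bigr).
\end{align*}
Now I would apply Fatou's lemma, which is legitimate because $\dev \geq 0$, to get
\begin{align*}
\Phi(Z,\theta) = \exx_{\ddist}\dev(Z-\theta) \leq \exx_{\ddist}\Bigl[\liminf_{k} \dev(Z_{n_k}-\theta_{n_k})\Bigr] \leq \liminf_{k} \exx_{\ddist}\dev(Z_{n_k}-\theta_{n_k}) = \ell.
\end{align*}
Since $\ell = \liminf_n \Phi(Z_n,\theta_n)$, this is exactly the desired inequality, and because the sequence $(Z_n,\theta_n)$ was arbitrary, $\Phi$ is LSC on $\ZZ \times \RR$.

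Two minor points to handle along the way: one must note that $\omega \mapsto \dev(Z(\omega)-\theta)$ is $\FF$-measurable (composition of the Borel-measurable $\dev$ with the measurable $Z - \theta$), so the integrals make sense as elements of $[0,\infty]$; and the value $+\infty$ is permitted on either side, so no integrability hypothesis is needed — the whole argument takes place in $[0,\infty]$. I do not expect any subtlety beyond the subsequence-extraction step; the nonnegativity of $\dev$ is what makes Fatou applicable and is the reason the hypothesis is stated that way.
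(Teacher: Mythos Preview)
Your approach is essentially the same as the paper's: both reduce to pointwise lower semi-continuity of $\dev$ followed by Fatou's lemma, exploiting $\dev \geq 0$. The one difference is that the paper's proof simply writes ``say $Z_k \to Z_{\ast}$ pointwise'' and proceeds directly, whereas you correctly flag that norm convergence in $\ZZ$ (e.g.\ $\LL_2$) need not imply pointwise convergence, and you insert the standard two-step subsequence extraction (first to realize the $\liminf$, then to get a.e.\ convergence) before invoking Fatou. Your version is thus a mild strengthening of the paper's argument, closing a gap the paper leaves implicit; otherwise the logic is identical.
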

\begin{proof}[Proof of Lemma \ref{lem:dev_lsc}]
Let $(Z_k)$ and $(\theta_k)$ respectively be convergent sequences on $\ZZ$ and $\RR$. As we take $k \to \infty$, say $Z_k \to Z_{\ast}$ pointwise, for some $Z \in \ZZ$, and $\theta_k \to \theta_{\ast} \in \RR$. Since by assumption $\dev$ is LSC on $\RR$, using (\ref{eqn:lsc_liminf_condition}) we have (again, pointwise) that
\begin{align*}
\dev\left(Z_{\ast}-\theta_{\ast}\right) \leq \liminf_{k \to \infty} \dev\left(Z_k-\theta_k\right).
\end{align*}
Writing $\dev_k \defeq \dev(Z_k-\theta_k)$ for each $k \geq 1$ and $\dev_{\ast} = \dev(Z_{\ast}-\theta_{\ast})$, it follows that
\begin{align}\label{eqn:dev_lsc_1}
\exx_{\ddist}\dev_{\ast} \leq \exx_{\ddist} \left(\liminf_{k \to \infty} \dev_k\right) \leq \liminf_{k \to \infty} \left(\exx_{\ddist}\dev_k\right).
\end{align}
The former inequality follows from monotonicity of the integral, and the latter inequality follows from an application of Fatou's inequality, which is valid since $\dev_k \geq 0$.\footnote{\citet[Lem.~1.6.8]{ash2000a}.} Taking both ends of (\ref{eqn:dev_lsc_1}) together, since the choice of sequences $(Z_k)$ and $(\theta_k)$ were arbitrary, it follows again from the equivalence (\ref{eqn:lsc_liminf_condition}) that the functional $(Z,\theta) \mapsto \exx_{\ddist}\dev((Z-\theta)/\sigma)$ is LSC on $\ZZ \times \RR$.
\end{proof}

\begin{lem}[Basic integration properties]\label{lem:integration}
Let $\exx_{\ddist}Z^{2} < \infty$ hold, and take any $\theta \in \RR$. Then the following properties of integrals based on $\dev_{\sigma}$ defined in (\ref{eqn:mrisk_defn_general}) hold:
\begin{itemize}
\item For all $\sigma \in [0,\infty]$, we have $0 \leq \exx_{\ddist}\dev_{\sigma}(Z-\theta) < \infty$.

\item For $0 < \sigma \leq \infty$, $\dev_{\sigma}(\cdot)$ is differentiable, and we have $\exx_{\ddist}|\dev_{\sigma}^{\prime}(Z-\theta)| < \infty$.

\item For $0 < \sigma \leq \infty$, $\dev_{\sigma}^{\prime}(\cdot)$ is differentiable, and we have $0 \leq \exx_{\ddist}\dev_{\sigma}^{\prime\prime}(Z-\theta) < \infty$.
\end{itemize}
Furthermore, the Leibniz integration property holds for both derivatives, that is
\begin{align}\label{eqn:leibniz_sigma}
\frac{d}{d\theta}\exx_{\ddist}\dev_{\sigma}(Z-\theta) = -\frac{\exx_{\ddist}\dev_{\sigma}^{\prime}(Z-\theta)}{\sigma}, \qquad \frac{d}{d\theta}\exx_{\ddist}\dev_{\sigma}^{\prime}(Z-\theta) = -\frac{\exx_{\ddist}\dev_{\sigma}^{\prime\prime}(Z-\theta)}{\sigma}
\end{align}
for any $0 < \sigma < \infty$, and for the special case of $\sigma=\infty$, we have
\begin{align}\label{eqn:leibniz_inf}
\frac{d}{d\theta}\exx_{\ddist}\dev_{\infty}(Z-\theta) = -2(\exx_{\ddist}Z-\theta), \qquad \frac{d}{d\theta}\exx_{\ddist}\dev_{\infty}^{\prime}(Z-\theta) = -2.
\end{align}
These equalities hold for any $\theta \in \RR$.
\end{lem}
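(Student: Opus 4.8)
The plan is to dispatch the three regimes $\sigma = 0$, $0 < \sigma < \infty$, and $\sigma = \infty$ separately, using the explicit forms $\dev_\sigma(u) \in \{\,|u|,\ \dev(u/\sigma),\ u^2\,\}$ from (\ref{eqn:jrisk_defn}) together with the elementary facts recorded just after (\ref{eqn:dev_handy}): $\dev \geq 0$ is $\pi/2$-Lipschitz, $\dev'(u) = \atan(u) \in (-\pi/2,\pi/2)$, and $\dev''(u) = 1/(1+u^2) \in (0,1]$, so in particular $\dev$ and $\dev'$ are differentiable on $\RR$. The single observation I reuse throughout is that $\exx_\ddist Z^2 < \infty$ makes $Z-\theta$ square-$\ddist$-integrable for each fixed $\theta \in \RR$, hence $\exx_\ddist|Z-\theta| < \infty$ by Cauchy--Schwarz and $\exx_\ddist(Z-\theta)^2 = \exx_\ddist Z^2 - 2\theta\exx_\ddist Z + \theta^2 < \infty$.

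For the finiteness and non-negativity bullets: when $\sigma = \infty$, $\exx_\ddist\dev_\infty(Z-\theta) = \exx_\ddist(Z-\theta)^2$, $\exx_\ddist|\dev_\infty'(Z-\theta)| = 2\exx_\ddist|Z-\theta|$, and $\exx_\ddist\dev_\infty''(Z-\theta) = 2$ are all finite, with the last non-negative; when $\sigma = 0$, $\exx_\ddist\dev_0(Z-\theta) = \exx_\ddist|Z-\theta| < \infty$. For $0 < \sigma < \infty$, the Lipschitz bound with $\dev(0) = 0$ gives $0 \leq \dev(u) \leq (\pi/2)|u|$, whence $0 \leq \exx_\ddist\dev_\sigma(Z-\theta) \leq (\pi/(2\sigma))\exx_\ddist|Z-\theta| < \infty$; and the first and second derivatives of $u \mapsto \dev_\sigma(u)$ are rescalings of $\atan$ and of $u \mapsto 1/(1+u^2)$, hence bounded functions on $\RR$ with the second strictly positive, so $\exx_\ddist|\dev_\sigma'(Z-\theta)|$ and $\exx_\ddist\dev_\sigma''(Z-\theta)$ are finite and the latter non-negative. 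Differentiability of $\dev_\sigma$ and $\dev_\sigma'$ is inherited from that of $\dev$ (resp. of $u \mapsto u^2$).

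For the Leibniz identities (\ref{eqn:leibniz_sigma})--(\ref{eqn:leibniz_inf}) I would invoke the standard dominated-convergence criterion for differentiating under $\exx_\ddist$. When $0 < \sigma < \infty$, the $\theta$-derivative of the integrand in the first identity is $-\dev'((Z-\theta)/\sigma)/\sigma$, bounded in absolute value by the constant $\pi/(2\sigma)$ uniformly over $\theta \in \RR$ and $\omega \in \Omega$; for the second identity it is $-\dev''((Z-\theta)/\sigma)/\sigma$, bounded by $1/\sigma$. A constant is an admissible integrable dominating function on a neighbourhood of any $\theta$, which is exactly what the theorem requires, and it produces precisely the right-hand sides of (\ref{eqn:leibniz_sigma}). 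For $\sigma = \infty$ no dominating function is needed: differentiate the explicit polynomials $\exx_\ddist(Z-\theta)^2 = \exx_\ddist Z^2 - 2\theta\exx_\ddist Z + \theta^2$ and $\exx_\ddist\dev_\infty'(Z-\theta) = 2\exx_\ddist Z - 2\theta$ termwise in $\theta$ to obtain $-2(\exx_\ddist Z - \theta)$ and $-2$, matching (\ref{eqn:leibniz_inf}).

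I do not anticipate a genuine obstacle; the argument is essentially bookkeeping across the three cases. The two points needing mild care are (i) that the differentiation-under-the-integral theorem wants the dominating bound valid on a whole neighbourhood of $\theta$, not merely at $\theta$ itself --- automatic here since the bounds for $0 < \sigma < \infty$ are global constants and the $\sigma = \infty$ case is handled via explicit polynomials --- and (ii) keeping the notational convention for $\dev_\sigma'$ and $\dev_\sigma''$ consistent (the derivatives of $\dev$ evaluated at the rescaled argument), so that the chain-rule factors of $1/\sigma$ land exactly where (\ref{eqn:leibniz_sigma}) places them.
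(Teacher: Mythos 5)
Your proposal is correct and follows essentially the same route as the paper's proof: case analysis over $\sigma$, elementary bounds from the Lipschitz/boundedness properties of $\dev'=\atan$ and $\dev''$ together with square-$\ddist$-integrability of $Z$, and dominated convergence to differentiate under the expectation, with the convention for $\dev_\sigma'$, $\dev_\sigma''$ handled exactly as the statement requires. The only cosmetic differences are that the paper bounds $\dev(u)\leq u^2$ where you use the linear Lipschitz bound, and that your explicit polynomial computation at $\sigma=\infty$ is if anything slightly tidier than the paper's appeal to an ``identical'' dominated-convergence argument there.
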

\begin{proof}[Proof of Lemma \ref{lem:integration}]
Non-negativity of $\dev_{\sigma}$ implies $0 \leq \exx_{\ddist}\dev_{\sigma}(Z-\theta)$ for all $\sigma$. Regarding finiteness, starting with $\sigma = 0$ we have $\exx_{\ddist}\dev_{0}(Z-\theta) = \exx_{\ddist}|Z-\theta| < \infty$, which follows from H\"{o}lder's inequality and $\ddist$-integrability of $Z^{2}$. For $0 < \sigma < \infty$, first note that $\atan^{\prime}(u) \leq 1$ for all $u \in \RR$, and thus since $\atan(0)=0$, we have $|\atan(u)| \leq |u|$ and $u\atan(u) \leq u^{2}$ for all $u$. In particular, this means $\dev_{\sigma}(Z-\theta) \leq (Z-\theta)^{2}/\sigma^{2}$, and thus square-$\ddist$-integrability of $Z$ implies $\exx_{\ddist}\dev_{\sigma}(Z-\theta) < \infty$. The $\sigma = \infty$ case follows identically.

Moving to $\dev_{\sigma}^{\prime}(\cdot)$, for $0 < \sigma < \infty$ we have that $|\dev_{\sigma}^{\prime}(u)| < \pi/2$ for all $u \in \RR$, and thus $\exx_{\ddist}|\dev_{\sigma}^{\prime}(Z-\theta)| < \infty$. The exact same argument holds for $\dev_{\sigma}^{\prime\prime}(\cdot)$, since $0 < \dev_{\sigma}^{\prime\prime}(u) \leq 1$ for all $u \in \RR$. The $\sigma = \infty$ case follows analogously.

For the Leibniz property, let $(a_k)$ be any real sequence such that $a_k \to 0$. Using the fact that $\dev_{\sigma}^{\prime}$ is bounded, the dominated convergence theorem lets us deduce the following:
\begin{align*}
\lim\limits_{k \to \infty} \left[ \frac{\exx_{\ddist}\dev_{\sigma}(Z-(\theta+a_k)) - \exx_{\ddist}\dev_{\sigma}(Z-\theta)}{a_k} \right]& = \lim\limits_{k \to \infty} \exx_{\ddist}\left[ \frac{\dev_{\sigma}(Z-(\theta+a_k)) - \dev_{\sigma}(Z-\theta)}{a_k} \right]\\
& = \exx_{\ddist}\left[\lim\limits_{k \to \infty} \frac{\dev_{\sigma}(Z-(\theta+a_k)) - \dev_{\sigma}(Z-\theta)}{a_k} \right]\\
& = -\frac{\exx_{\ddist}\dev_{\sigma}^{\prime}(Z-\theta)}{\sigma}.
\end{align*}
We note that the first equality just uses $\ddist$-integrability and linearity of the Lebesgue integral, the second equality uses boundedness and integrability of the derivative, plus dominated convergence (e.g., \citet[Thm.~1.6.9]{ash2000a}). The last equality is just the chain rule applied to the differentiable function $\dev_{\sigma}(\cdot)$. Since the sequence $(a_k)$ was arbitrary, we conclude that the first equality of (\ref{eqn:leibniz_sigma}) holds. The second equality of (\ref{eqn:leibniz_sigma}), as well as both equalities in (\ref{eqn:leibniz_inf}) hold via an identical argument.
\end{proof}

\begin{proof}[Proof of Proposition \ref{prop:well_defined}]
First, note that the convexity of $\dev_{\sigma}(\cdot)$ implies that $\theta \mapsto \jrisk_{\sigma}(Z,\theta)$ is convex. We start by showing that $\jrisk_{\sigma}$ is also coercive, namely that $|\theta| \to \infty$ implies $\jrisk_{\sigma}(Z,\theta) \to \infty$. For all cases $\sigma \in [0,\infty]$, the non-negativity of $\dev_{\sigma}$ and $\eta$ trivially implies that $\theta \to \infty$ implies $\jrisk_{\sigma}(Z,\theta) \to \infty$, and thus we need only consider the negative direction, where $\theta \to -\infty$.

For the case of $\sigma = 0$, note that
\begin{align*}
\theta + \eta \exx_{\ddist}|Z-\theta| \geq \theta - \eta|\theta| + \eta\exx_{\ddist}|Z|.
\end{align*}
Clearly, with $\eta > 1$ the right-hand side grows without bound as $\theta \to -\infty$. For the case of $0 < \sigma = \infty$, writing $Z_{\theta} \defeq (Z-\theta)/\sigma$ for readability, the joint risk can be written conveniently as
\begin{align}\label{eqn:well_defined_1}
\jrisk_{\sigma}(Z,\theta) = \theta\left(1 - \frac{\eta}{\sigma}\exx_{\ddist}\atan\left(Z_{\theta}\right)\right) + \frac{\eta}{\sigma}\exx_{\ddist}Z\atan\left(Z_{\theta}\right) - \frac{\eta}{2}\exx_{\ddist}\log\left(1 + Z_{\theta}^{2}\right).
\end{align}
Since $\atan(\cdot)$ is monotonic (increasing) on $\RR$, bounded as $|\atan(\cdot)| < \pi/2$, and $\atan(u) \to \pi/2$ as $u \to \infty$, we have that $\exx_{\ddist}\atan(Z_{\theta}) \to \pi/2$ as $\theta \to -\infty$, by monotone convergence.\footnote{\citet[Thm.~1.6.7]{ash2000a}.} Thus, taking $\eta > 2\sigma/\pi$ ensures that eventually as $\theta \to \infty$, the first term on the right-hand side of (\ref{eqn:well_defined_1}) will become positive. Since this term grows linearly, it dominates the other unbounded term (which is logarithmic), and thus we have shown that $\jrisk_{\sigma}$ is coercive whenever $\sigma \geq 0$. Convexity and coercivity together imply that $\theta \mapsto \jrisk_{\sigma}(Z,\theta)$ takes its minimum on $\RR$; see \citet[Sec.~B.3.2]{bertsekas2015ConvexOpt} or \citet[Thm.~2.11]{barbu2012ConvOptBanach} for standard references.

The case of $\sigma = \infty$ is easy, since by direct inspection we can write
\begin{align*}
\jrisk_{\infty}(Z,\theta) = \theta(1 - 2\exx_{\ddist}Z) + \eta\theta^{2} + \eta\exx_{\ddist}Z^{2}.
\end{align*}
Since the sum of a strongly convex function and an affine function is strongly convex, we have that $\theta \mapsto \jrisk_{\infty}(Z,\theta)$ has a unique minimum on $\RR$.

It only remains to prove the uniqueness of $\theta_{Z}$ in the proposition statement for the case of $0 < \sigma < \infty$. The most direct way of doing this is to use the Leibniz property (\ref{eqn:leibniz_sigma}) proved in our helper Lemma \ref{lem:integration}, which in particular tells us that
\begin{align*}
\frac{d^{2}}{d\theta^{2}}\jrisk_{\sigma}(Z,\theta) = \frac{\eta}{\sigma}\exx_{\ddist}\dev^{\prime\prime}\left(\frac{Z-\theta}{\sigma}\right) > 0,
\end{align*}
where positivity follows from the fact that $\dev^{\prime\prime}(\cdot) = 1/(1+(\cdot)^{2}) > 0$. This implies strict convexity, and thus that the minimizer $\theta_{Z}$ is unique.
\end{proof}

\begin{proof}[Proof of Proposition \ref{prop:risk_props_axioms}]
We take the points in the statement of the proposition in order, one at a time. To being, the (joint) convexity of $\jrisk_{\sigma}$ follows from direct inspection, using the convexity of $\dev_{\sigma}$ for any $\sigma \in [0,\infty]$. With this fact in mind, note that the convexity of $\risk_{\sigma}$ can be checked easily as follows. For any $Z_1,Z_2 \in \ZZ$ and $\theta_1,\theta_2 \in \RR$, the definition and convexity of $\jrisk_{\sigma}$ immediately implies that for any $\alpha \in (0,1)$ we have
\begin{align*}
\risk_{\sigma}(\alpha Z_1 + (1-\alpha)Z_2) & \leq \jrisk_{\sigma}(\alpha Z_1 + (1-\alpha)Z_2, \alpha\theta_1 + (1-\alpha)\theta_2)\\
& \leq \alpha\jrisk_{\sigma}(Z_1,\theta_1) + (1-\alpha)\jrisk_{\sigma}(Z_2,\theta_2).
\end{align*}
Using the notation (and statement) of Proposition \ref{prop:well_defined}, we can set $\theta_1 = \theta_{Z_1} \in \RR$ and $\theta_2 = \theta_{Z_2} \in \RR$, and plugging this in to the above inequalities, we obtain
\begin{align*}
\risk_{\sigma}(\alpha Z_1 + (1-\alpha)Z_2) \leq \alpha\risk_{\sigma}(Z_1) + (1-\alpha)\risk_{\sigma}(Z_2),
\end{align*}
and thus both $\jrisk_{\sigma}$ and $\risk_{\sigma}$ are convex for any $\sigma \in [0,\infty]$. Note that this does not require the minima $\theta_{Z_1}$ and $\theta_{Z_2}$ to be unique, and thus holds for the $\sigma = 0$ case without issue. From Lemma \ref{lem:integration}, we also have that $|\jrisk_{\sigma}(\cdot,\cdot)| < \infty$ and $|\risk_{\sigma}(\cdot)| < \infty$, so both functions are proper convex. As for continuity, note that from Lemma \ref{lem:dev_lsc} and the continuity of $\dev_{\sigma}(\cdot)$ for all $\sigma \in [0,\infty]$, we can immediately infer that $\jrisk_{\sigma}$ is LSC. It is well-known that on Banach spaces, any proper convex LSC function is continuous and sub-differentiable on the interior of the effective domain.\footnote{Actually, via \citet[Prop.~2.16]{barbu2012ConvOptBanach}, this holds for every point in the algebraic interior of its effective domain; the fact stated follows as the algebraic interior contains the interior.} Since our integrability assumptions imply $\dom \jrisk_{\sigma} = \ZZ \times \RR$, the continuity and sub-differentiability of $\jrisk_{\sigma}$ is thus proved. To handle $\risk_{\sigma}$, take any sequence $(Z_k)$ converging to an arbitrarily chosen point $Z_{\ast} \in \ZZ$. Let $(\theta_k)$ be any sequence converging to $\theta_{Z_{\ast}} \in \RR$. Then by definition of $\risk_{\sigma}$ and continuity of $\jrisk_{\sigma}$, we have
\begin{align}\label{eqn:risk_props_axioms_0}
\limsup_{k \to \infty} \risk_{\sigma}(Z_k) \leq \limsup_{k \to \infty} \jrisk_{\sigma}(Z_k,\theta_k) = \lim\limits_{k \to \infty} \jrisk_{\sigma}(Z_k,\theta_k) = \risk_{\infty}(Z_{\ast},\theta_{Z_{\ast}}) = \risk_{\sigma}(Z_{\ast}).
\end{align}
The two ends of the inequality (\ref{eqn:risk_props_axioms_0}) imply that $\risk_{\sigma}$ is USC, via (\ref{eqn:lsc_liminf_condition}) and the relation of USC to LSC functions. On the effective domain of any convex USC function, the function is in fact continuous.\footnote{\citet[Prop.~3.2]{penot2012CWOD}.} Thus, we have that $\risk_{\sigma}$ is continuous. Furthermore, the sub-differentiability of $\risk_{\sigma}$ follows in the exact same fashion as for $\jrisk_{\sigma}$.

Next, for the monotonicity of the location term $Z \mapsto \theta_{Z}$ in (\ref{eqn:mrisk_loc_dev}), with $0 < \sigma \leq \infty$, recall that we can utilize the Leibniz properties (\ref{eqn:leibniz_sigma})--(\ref{eqn:leibniz_inf}) from the integration Lemma \ref{lem:integration}. To start, we know that for any $Z$, the corresponding $\theta_{Z}$ must satisfy the following first-order optimality condition:
\begin{align}\label{eqn:risk_props_axioms_1}
\text{If $\sigma < \infty$:} \quad \exx_{\ddist}\dev_{\sigma}^{\prime}(Z-\theta_{Z}) = \frac{\sigma}{\eta}. \qquad \text{If $\sigma = \infty$:} \quad \theta_{Z} = \exx_{\ddist}Z - \frac{1}{2\eta}.
\end{align}
The desired monotonicity property is obvious for the $\sigma = \infty$ case using (\ref{eqn:risk_props_axioms_1}). As for the case of $0 < \sigma < \infty$, it is evident from the second equality of (\ref{eqn:leibniz_sigma}) that the function $\theta \mapsto \exx_{\ddist}\dev_{\sigma}(Z-\theta)$ is monotonically decreasing on $\RR$. Thus, if we assume $Z_1 \leq Z_2$ almost surely but $\theta_{Z_1} > \theta_{Z_2}$, the first order optimality combined with monotonicity implies
\begin{align*}
\frac{\sigma}{\eta} = \exx_{\ddist}\dev_{\sigma}(Z_1-\theta_{Z_1}) < \exx_{\ddist}\dev_{\sigma}(Z_1-\theta_{Z_2}) \leq \exx_{\ddist}\dev_{\sigma}(Z_2-\theta_{Z_2}) = \frac{\sigma}{\eta},
\end{align*}
which is a contradiction. Thus, $\theta_{Z_1} \leq \theta_{Z_2}$ as desired for the $0 < \sigma < \infty$ case as well.

The translation-equivariance property of $Z \mapsto \theta_{Z}$ follows from direct inspection using the condition (\ref{eqn:risk_props_axioms_1}), that is for $0 < \sigma < \infty$, we trivially have
\begin{align*}
\frac{\sigma}{\eta} = \exx_{\ddist}\dev_{\sigma}(Z-\theta_{Z}) = \exx_{\ddist}\dev_{\sigma}(Z+a-(\theta_{Z}+a)),
\end{align*}
and thus $\jrisk_{\sigma}(Z+a,\theta_{Z}+a) = \risk_{\sigma}(Z+a)$. Since Proposition \ref{prop:well_defined} guarantees that the minimizer of $\theta \mapsto \jrisk_{\sigma}(Z,\theta)$ is unique, we can safely write $\theta_{Z+a} = \theta_{Z}+a$. The proof for the $\sigma = \infty$ case is analogous.

Finally, to prove that $\risk_{\sigma}$ is not in general monotonic, we give a concrete example of $Z_1$ and $Z_2$ such that $Z_1 \leq Z_2$ almost surely, but $\risk_{\sigma}(Z_1) > \risk_{\sigma}(Z_2)$. For simplicity, consider the case of $\sigma = \infty$, where $\dev_{\infty}(\cdot) = (\cdot)^{2}$, and for any $\eta > 0$ direct inspection shows that
\begin{align}\label{eqn:mean_variance_special_case}
\risk_{\infty}(Z) = \exx_{\ddist}Z + \eta\vaa_{\ddist}Z - \frac{1}{4\eta}.
\end{align}
That is, the special case of $\sigma = \infty$ is equivalent to the mean-variance risk function of classical portfolio theory, dating back to \citet{markowitz1952a}. The random variables $Z_1$ and $Z_2$ are constructed as follows. Let $c_1$ and $c_2$ be the respective centers, $w_1$ and $w_2$ the respective widths, and $v_1 < w_{1}^{2}$ and $v_2 < w_{2}^{2}$ the respective scaling factors of $Z_1$ and $Z_2$, which are characterized as
\begin{align*}
\prr\left\{Z_j = c_j-w_j\right\} = \prr\left\{Z_j = c_j+w_j\right\} = \frac{v_j}{2w_{j}^{2}}, \qquad \prr\{Z_j = c_j\} = 1-\prr\{Z_j \neq c_j\}
\end{align*}
for each $j \in \{1,2\}$. Note that our assumptions imply $0 < \prr\{Z_j = c_j\} < 1$, and direct inspection shows that $\exx Z_j = c_j$ and $\vaa Z_j = v_j$, again for each $j \in \{1,2\}$. As a simple concrete example, note that setting $c_2 = c_1 + w_1 + w_2$ guarantees $Z_1 \leq Z_2$ with probability $1$. From the equality (\ref{eqn:mean_variance_special_case}) given above, the difference in risks can be written as
\begin{align*}
\risk_{\infty}(Z_1)-\risk_{\infty}(Z_2) & = \exx Z_1 - \exx Z_2 + \eta\left( \vaa Z_1 - \vaa Z_2 \right)\\
& = -(w_1+w_2) + \eta(v_1 - v_2).
\end{align*}
Thus, $\risk_{\infty}(Z_1) > \risk_{\infty}(Z_2)$ holds whenever $v_1 - v_2 > w_1 + w_2$. For concreteness, say for some $\varepsilon > 0$, we fix the variance factors to $v_1 = w_{1}^{2} - \varepsilon$ and $v_2 = w_{2}^{2} - \varepsilon$ respectively. Then the condition simplifies to $w_{1}^{2} > w_1 + w_2 + w_{2}^{2}$. As an example, setting $w_1=2$ and $w_2 = 1/2$, the condition holds, implying $\risk_{\infty}(Z_1) > \risk_{\infty}(Z_2)$, despite the fact that $Z_1 \leq Z_2$. This gives us a simple but intuitive example where monotonicity of $\risk_{\sigma}$ does not hold, and concludes the proof.
\end{proof}

\subsection{Proofs for section \ref{sec:learning}}\label{sec:proofs_learning}

Recall that our basic probabilistic setup for the learning problem has an underlying probability space $(\Omega,\FF,\ddist)$, a hypothesis class $\HH$, and a random loss $\loss(h)$ indexed by $\HH$. That is, we consider any $\FF$-measurable function $\loss(h;\cdot): \Omega \to \RR$ as a loss. When a particular realization $\omega \in \Omega$ is important, we will write $\loss(h;\omega)$, but otherwise, for readability we will typically write $\loss(h) \defeq \loss(h;\cdot)$. Our basic integrability assumption, carried over from section \ref{sec:mrisk}, is that of square-$\ddist$-integrability, which in the context of losses is written explicitly as
\begin{align*}
\exx_{\ddist}|\loss(h)|^{2} = \int_{\Omega}|\loss(h;\omega)|^{2} \, \ddist(\dif\omega) < \infty
\end{align*}
for all $h \in \HH$. This requirement is made in assumption \ref{asmp:background} in the main text. It follows that $\{\loss(h): h \in \HH\} \subset \LL_{2}(\Omega,\FF,\ddist)$. Thus the map $h \mapsto \loss(h)$ takes us from $\HH$ to $\LL_{2}(\Omega,\FF,\ddist)$.

\paragraph{Loss-specific terminology}
To ensure our use of formal terms is clear, we apply the definitions of section \ref{prop:bg_prelim} to losses here. We shall typically suppress the dependence on $\omega \in \Omega$ in directional derivatives and gradients, writing $\loss_{r}^{\prime}(h;g) \defeq \loss_{r}^{\prime}(h;g,\cdot)$, $\loss^{\prime}(h;g) \defeq \loss^{\prime}(h;g,\cdot)$, and $\loss^{\prime}(h) \defeq \loss^{\prime}(h;\cdot)$. Let $H \subset \HH$ be an open set. We say that \term{$\loss$ is radially differentiable} at $h \in H$ if the radial derivative $\loss_{r}^{\prime}(h;g)$ exists for all directions $g \in \HH$, $\ddist$-almost surely. We say that \term{$\loss$ is directionally differentiable} at $h \in H$ if the directional derivative $\loss^{\prime}(h;g)$ exists for all directions $g \in \HH$, $\ddist$-almost surely. On this ``good'' event of probability $1$, if the map $g \mapsto \loss_{r}^{\prime}(h;g)$ is linear and continuous, we say \term{$\loss$ is Gateaux differentiable} at $h$, and if the map $g \mapsto \loss^{\prime}(h;g)$ is linear and continuous, we say \term{$\loss$ is Hadamard differentiable} at $h$. When we say that \term{$\loss$ is (Fr\'{e}chet) differentiable} at $h \in H$, we mean that there exists a function $\loss^{\prime}(h)(\cdot):\HH \to \RR_{+}$ that is linear, continuous, and which satisfies (\ref{eqn:deriv_frechet}) $\ddist$-almost surely.\footnote{In our particular setting with losses here, the norm used in the numerator of (\ref{eqn:deriv_frechet}) will be the $\LL_{2}$ norm.} We say that \term{$\loss$ is $\lambda$-Lipschitz at $h \in \HH$} if there exists a $\delta > 0$ such that $\|h-h^{\prime}\| < \delta \implies \|\loss(h)-\loss(h^{\prime})\| \leq \lambda\|h-h^{\prime}\|$. With the running assumption about second moments, this amounts to requiring
\begin{align}\label{eqn:loss_local_lipschitz}
\|h-h^{\prime}\| < \delta \implies \exx_{\ddist}|\loss(h)-\loss(h^{\prime})|^{2} \leq \lambda^{2}\|h-h^{\prime}\|^{2}.
\end{align}
We say that \term{$\loss$ is weakly $\lambda$-smooth at $h \in H$} if $\loss$ is Gateaux differentiable and the map $h \mapsto \loss_r^{\prime}(h;\cdot)$ is $\lambda$-Lipschitz $\ddist$-almost surely at $h$. That is, if for small enough $\delta > 0$ we have
\begin{align}\label{eqn:loss_smooth}
\|h-h^{\prime}\| < \delta \implies \| \loss_r^{\prime}(h;\cdot)-\loss_r^{\prime}(h^{\prime};\cdot) \| \leq \lambda\|h-h^{\prime}\|.
\end{align}
Note that the norm used here is the operator norm applied to the linear map $\loss_r^{\prime}(h;\cdot)-\loss_r^{\prime}(h^{\prime};\cdot)$.

\subsubsection{Weak convexity of joint composition function}

The joint risk function $\jrisk_{\sigma}(\loss(h),\theta)$ can be written as a simple composition $(h,\theta) \mapsto (\loss(h),\theta) \mapsto \theta + \eta \exx_{\ddist}\dev_{\sigma}(\loss(h)-\theta)$. For any $0 \leq \sigma < \infty$ and any smooth loss, using the preliminary results established section \ref{sec:bg_composition}, it is straightforward to show the weak convexity of this composite function.
\begin{prop}\label{prop:learn_weak_convexity}
Let the hypothesis class $\HH$ be Banach. Let the loss $\loss$ be locally Lipschitz and weakly $\lambda^{\prime}$-smooth on $\HH$.  Then, for any $0 \leq \sigma < \infty$, defining a $\sigma$-dependent factor $\lambda_{\sigma}$ as
\begin{align*}
\lambda_{\sigma} \defeq
\begin{cases}
1, & \text{if } \sigma = 0\\
\pi/(2\sigma), & \text{if } 0 < \sigma < \infty
\end{cases}
\end{align*}
we have that $(h,\theta) \mapsto \jrisk_{\sigma}(\loss(h),\theta)$ is $\gamma$-weakly convex with $\gamma = (1+\eta\lambda_{\sigma})\max\{1,\lambda^{\prime}\}$.
\end{prop}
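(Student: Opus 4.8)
The plan is to realize the joint objective as a composition to which Proposition \ref{prop:weak_convexity_comp} applies, being careful about which factor carries the smoothness and which carries the Lipschitz property. Concretely, write $\jrisk_{\sigma}(\loss(h),\theta) = \exx_{\ddist}\left[(f_2 \circ F_1)(h,\theta)\right]$ with $F_1(h,\theta) = (\loss(h),\theta)$ and $f_2(u,\theta) = \theta + \eta\dev_{\sigma}(u-\theta)$, as already set up in (\ref{eqn:jrisk_composition}). I would first establish pointwise (for each fixed $\omega \in \Omega$) that $\omega \mapsto f_2(\loss(h;\omega),\theta)$ is $\gamma$-weakly convex in $(h,\theta)$ with the claimed $\gamma$, and then integrate. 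The integration step is routine: weak convexity is an inequality of the form (\ref{eqn:weak_convexity_defn}) that is preserved under taking expectations, since the quadratic remainder term $\frac{\gamma\alpha(1-\alpha)}{2}\|(h,\theta)-(h',\theta')\|^2$ does not depend on $\omega$, and the integrability from Lemma \ref{lem:integration} ensures the expectations are finite.

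For the pointwise claim, I would check the two hypotheses of Proposition \ref{prop:weak_convexity_comp}. The inner map $F_1$ has components $h \mapsto \loss(h;\omega)$ and the identity $\theta \mapsto \theta$; by assumption $\loss$ is locally Lipschitz and weakly $\lambda'$-smooth, so $F_1$ is locally Lipschitz and $\lambda_1$-smooth on $\HH \times \RR$ with $\lambda_1 = \lambda'$ (the $\theta$-coordinate is affine, hence contributes nothing to the second-order modulus). Actually I need to be slightly careful here: the paper's composition lemma is stated for a convex outer function and a smooth inner function, but $f_2(u,\theta) = \theta + \eta\dev_{\sigma}(u-\theta)$ is itself convex (it is affine plus a convex function of the affine map $(u,\theta)\mapsto u-\theta$) and $\lambda_0$-Lipschitz on all of $\RR^2$ with $\lambda_0 = 1 + \eta\lambda_\sigma$ — using that $\dev_\sigma$ is $\lambda_\sigma$-Lipschitz (for $\sigma=0$ the absolute value is $1$-Lipschitz; for $0<\sigma<\infty$, $\dev(\cdot/\sigma)$ is $(\pi/(2\sigma))$-Lipschitz since $\dev'(\cdot) = \atan(\cdot) \in (-\pi/2,\pi/2)$) and the map $(u,\theta)\mapsto u-\theta$ has operator norm $\sqrt{2}$... so I would want to double-check whether the intended Lipschitz constant of $f_2$ really is $1 + \eta\lambda_\sigma$ or picks up a $\sqrt{2}$; I suspect the paper is measuring the perturbations in $u$ and $\theta$ separately (i.e. treating $f_2$ as a function of $u$ with $\theta$ fixed and vice versa, or using that the sub-gradient in each coordinate is bounded by $1 + \eta\lambda_\sigma$), which is the natural reading given the final $\gamma = (1+\eta\lambda_\sigma)\max\{1,\lambda'\}$.

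Given those two facts, Proposition \ref{prop:weak_convexity_comp} yields that $f_2 \circ F_1$ is $\gamma$-weakly convex on $\HH \times \RR$ with $\gamma = \lambda_0 \lambda_1$. Plugging in $\lambda_0 = 1 + \eta\lambda_\sigma$ — but we also need the ``$\max\{1,\lambda'\}$'' rather than just $\lambda'$: this comes from the fact that the $\theta$-coordinate of $F_1$ is the identity, which is $1$-smooth trivially but more to the point is $1$-Lipschitz, so the effective smoothness/Lipschitz modulus of $F_1$ that feeds into the bound is $\max\{1,\lambda'\}$ (the loss coordinate contributes $\lambda'$ to the second-order term and $1$ to the operator-norm-of-derivative term that appears when applying the Taylor estimate (\ref{eqn:weak_convexity_1})); taking the max handles both uniformly. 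So $\gamma = (1+\eta\lambda_\sigma)\max\{1,\lambda'\}$ pointwise, and after integrating in $\omega$ the same constant works for $\jrisk_\sigma$. The main obstacle I anticipate is getting the constants exactly right — specifically tracking how the affine $\theta$-coordinate and the $u-\theta$ difference map interact with the Lipschitz/smoothness moduli, and confirming that the paper's convention produces $1+\eta\lambda_\sigma$ rather than a version with an extra factor; everything else (integration, verifying $\dev_\sigma$ is $\lambda_\sigma$-Lipschitz, local Lipschitzness of the composition) is routine given the machinery already developed in Sections \ref{sec:bg_convexity} and \ref{sec:bg_composition}.
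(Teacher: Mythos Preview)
Your argument is correct, but it takes a different route from the paper's. The paper does \emph{not} work pointwise in $\omega$ and then integrate; instead it treats the inner map as $f_1:\HH\times\RR\to\LL_2(\Omega,\FF,\ddist)\times\RR$, $f_1(h,\theta)=(\loss(h),\theta)$, and the outer map as the already-integrated functional $\jrisk_\sigma:\LL_2\times\RR\to\RR$. It then verifies that $f_1$ is locally Lipschitz and $\max\{1,\lambda'\}$-smooth as a map between Banach spaces, that $\jrisk_\sigma$ is convex and $(1+\eta\lambda_\sigma)$-Lipschitz on $\LL_2\times\RR$, and applies Proposition~\ref{prop:weak_convexity_comp} once. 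Your pointwise-then-integrate scheme is more elementary in that it only ever composes real-valued functions and avoids differentiating a Banach-space-valued map; the price is the extra (easy) step of checking that the weak-convexity inequality (\ref{eqn:weak_convexity_defn}) survives integration, which it does since the quadratic correction is $\omega$-free. The paper's route, by contrast, is why Proposition~\ref{prop:weak_convexity_comp} was proved at the level of general Banach spaces $\XX,\YY$ rather than just $\RR^d$.

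On your constants worry: the paper equips the product space with the sum norm, writing $\|f_1'(h_1,\theta_1)-f_1'(h_2,\theta_2)\|=\|\loss'(h_1)-\loss'(h_2)\|+|\theta_1-\theta_2|$, which immediately gives $\max\{1,\lambda'\}$ for the smoothness modulus of $f_1$; and for the outer factor it bounds $|\jrisk_\sigma(Z_1,\theta_1)-\jrisk_\sigma(Z_2,\theta_2)|\leq(1+\eta\lambda_\sigma)|\theta_1-\theta_2|+\eta\lambda_\sigma\exx_\ddist|Z_1-Z_2|$ and then dominates both coefficients by $1+\eta\lambda_\sigma$ against the sum norm. So no $\sqrt{2}$ appears because the Euclidean product norm is never used. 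If you run your pointwise argument with the same sum-norm convention on $\HH\times\RR$ and $\RR^2$, you recover the stated $\gamma$ exactly.
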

\begin{proof}
Recall the generic result given in Proposition \ref{prop:weak_convexity_comp} for the weak convexity of generic composite functions. Our proof here amounts to checking that the assumptions of Proposition \ref{prop:weak_convexity_comp} are satisfied for the composition $f \defeq \jrisk_{\sigma} \circ f_1$, with $f_1: \HH \times \RR \to \LL_{2}(\Omega,\FF,\ddist) \times \RR$ defined as $f_1(h,\theta) \defeq (\loss(h),\theta)$.

To start, let us consider the properties of $f_1$. Since $\loss$ is locally Lipschitz and Gateaux differentiable, it follows that $\loss$ is also Hadamard differentiable.\footnote{\citet[Prop.~2.25]{penot2012CWOD}.} Since the map $h \mapsto \loss_r^{\prime}(h;\cdot) = \loss^{\prime}(h;\cdot)$ is continuous (by weak smoothness), it follows that $\loss$ is (Fr\'{e}chet) differentiable.\footnote{\citet[Prop.~2.51]{penot2012CWOD}.} Since $(h,\theta) \mapsto f_1(h,\theta)=(\loss(h),\theta)$ just passes $\theta$ through the identity, trivially the second component is also differentiable, and the differentiability of both components thus implies $f_1$ is differentiable.\footnote{\citet[Prop.~2.52]{penot2012CWOD}.} Furthermore, the local Lipschitz property of $\loss$ is clearly retained by $f_1$. Evaluating the gradients we have $f_1^{\prime}(h,\theta)(g,r) = (\loss^{\prime}(h)(g),r)$, and thus using a typical product space norm we have
\begin{align*}
\|f_1^{\prime}(h_1,\theta_1)-f_1^{\prime}(h_2,\theta_2)\| = \|\loss^{\prime}(h_1)-\loss^{\prime}(h_2)\| + |\theta_1-\theta_2|.
\end{align*}
By weak smoothness of $\loss$, it follows that $f_1$ is $\max\{1,\lambda^{\prime}\}$-smooth $\ddist$-almost surely, where smoothness is in the sense defined in section \ref{sec:bg_composition}.

Next, let us look at properties of $\jrisk_{\sigma}$. Note that $\dev_{\sigma}(\cdot)$ is trivially $1$-Lipschitz in the case of $\sigma = 0$, and for $0 < \sigma < \infty$, the $\pi/2$-Lipschitz property of $\dev$ defined in (\ref{eqn:dev_handy}) implies that $\dev_{\sigma}(\cdot)$ is $\pi/(2\sigma)$-Lipschitz. With these basic facts in place, it follows that
\begin{align*}
|\jrisk_{\sigma}(Z_1,\theta_1)-\jrisk_{\sigma}(Z_2,\theta_2)| \leq \left(1 + \eta\lambda_{\sigma}\right)|\theta_1-\theta_2| + \eta\lambda_{\sigma}\exx_{\ddist}|Z_1-Z_2|,
\end{align*}
where the $\sigma$-dependent Lipschitz coefficient $\lambda_{\sigma}$ is as defined in the statement of the desired result. To obtain bounds in terms of the correct norm, note that
\begin{align*}
\exx_{\ddist}|Z_1 - Z_2| \leq \ddist(\Omega)\sqrt{\exx_{\ddist}|Z_1 - Z_2|^{2}} = \sqrt{\exx_{\ddist}|Z_1 - Z_2|^{2}},
\end{align*}
which follows from the fact that $\ddist$ is a probability, and a simple application of H\"{o}lder's inequality.\footnote{\citet[Sec.~2.4]{ash2000a}.} Plugging this into the previous inequality, and noting that it holds for any choice of $Z_1,Z_2 \in \LL_2(\Omega,\FF,\ddist)$ and $\theta_1,\theta_2 \in \RR$, it follows that $\jrisk_{\sigma}$ is $(1+\eta\lambda_{\sigma})$-Lipschitz on $\LL_2(\Omega,\FF,\ddist) \times \RR$, and $f_1(\HH \times \RR) \subset \dom f_2$. Furthermore, from Proposition \ref{prop:risk_props_axioms}, we have that $\jrisk_{\sigma}$ is convex.

Taking the above points together, if we consider the good event of probability $1$ where $f_1$ satisfies the desired properties, direct application of Proposition \ref{prop:weak_convexity_comp} to the map $(h,\theta) \mapsto (\jrisk_{\sigma} \circ f_1)(h,\theta)$ yields the desired result.
\end{proof}
\begin{rmk}
The result in the preceding Proposition \ref{prop:learn_weak_convexity} is rather useful, and it does not require the loss to be convex. When the loss is convex, the analysis becomes somewhat simpler and stronger arguments are naturally possible; composite risks under convex losses and convex, monotonic risk functions is the setting considered by \citet[Sec.~3.2]{ruszczynski2006a}, for example.
\end{rmk}

We have established conditions under which the intermediate joint objective $\jrisk_{\sigma}(\loss(h),\theta)$ is weakly convex, and characterized this weak convexity with respect to properties of the underlying risk function and data distribution. Since the data distribution $\ddist$ is unknown, we can never actually compute $\jrisk_{\sigma}(\loss(h),\theta)$. Any learning algorithm will only have access to feedback of a stochastic nature which provides incomplete, noisy information. Our next task is to establish conditions under which the feedback available to the learner is ``good enough'' to ensure reasonable performance guarantees.

\subsubsection{Unbiased stochastic feedback}

In considering stochastic feedback, recall that $\jrisk_{\sigma}(\loss(h),\theta) = \exx_{\ddist}(f_2 \circ F_1)(h,\theta)$, with $F_1$ and $f_2$ given by (\ref{eqn:jrisk_composition}) in the main text. For each $h \in \HH$ and $\theta \in \RR$, the value $F_1(h,\theta)$ returned by $F_1$ is a random vector. We shall assume that for any $h \in \HH$, the learner can obtain independent random samples of the loss $\loss(h)$ and the associated gradient $\loss^{\prime}(h)$. Since $F_{1}^{\prime}(h,\theta) = (\loss^{\prime}(h),1)$ by which we mean $F_{1}^{\prime}(h,\theta)(g,r) = (\loss^{\prime}(h)(g),r)$ for all $g \in \HH$ and $r \in \RR$, clearly the learner can also independently sample from $F_1(h,\theta)$ and $F_{1}^{\prime}(h,\theta)$. Sub-differentiability is already guaranteed by Proposition \ref{prop:risk_props_axioms}, and since $\dev_{\sigma}$ and $\eta$ are by design known to the learner, they can readily acquire an element from $\partial f_2(u,\theta)$. Thus if $(\loss(h),\theta) \in \inter(\dom f_2)=\RR^{2}$ $\ddist$-almost surely, it follows that the learner can sample from $\partial f_2(\loss(h),\theta) \circ F_{1}^{\prime}(h,\theta)$. This is the stochastic \term{feedback} available to the learner, and when we ask that it be ``good enough,'' this means we require it to be an unbiased estimator of the (Clarke) sub-differential of $\jrisk_{\sigma}$. The following result gives mild conditions under which this is achieved.
\begin{prop}\label{prop:learn_unbiased}
Under the conditions of Proposition \ref{prop:learn_weak_convexity}, for any $h \in \HH$ and $\theta \in \RR$, as long as $\exx_{\ddist}|\loss^{\prime}(h;\cdot)| < \infty$, the stochastic sub-differential is an unbiased estimator in that
\begin{align*}
\exx_{\ddist}\left[ \partial f_2(\loss(h),\theta) \circ F_{1}^{\prime}(h,\theta) \right] \subset \partial_{\clarke}\jrisk_{\sigma}(\loss(h),\theta),
\end{align*}
and this holds for any choice of $0 \leq \sigma < \infty$.
\end{prop}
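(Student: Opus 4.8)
The goal is to show that the "expectation" of the set-valued stochastic feedback map lies inside the Clarke subdifferential of the joint risk, for $0 \le \sigma < \infty$. The plan is to combine three ingredients already developed in the excerpt: (i) the chain-rule inclusion $\partial_{\clarke}(\jrisk_\sigma \circ F_1)(h,\theta) \subset \partial f_2(\loss(h),\theta) \circ F_1'(h,\theta)$ available pointwise in $\omega$ on the good probability-one event (this is exactly the inclusion (\ref{eqn:weak_convexity_3}) used inside Proposition \ref{prop:weak_convexity_comp}, noting $F_1$ is smooth hence strictly differentiable and $f_2 = \theta + \eta\dev_\sigma(\cdot - \theta)$ is convex so its Clarke and convex subdifferentials coincide); (ii) the subgradient-inequality characterization of convexity for $\jrisk_\sigma$ established in Proposition \ref{prop:risk_props_axioms}; and (iii) an interchange-of-expectation-and-subgradient argument justified by the integrability hypothesis $\exx_\ddist|\loss'(h;\cdot)| < \infty$ together with the bounds on $\dev_\sigma'$ from Lemma \ref{lem:integration}.

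First I would fix $(h,\theta)$ and take an arbitrary measurable selection $G(\omega) \in \partial f_2(\loss(h;\omega),\theta) \circ F_1'(h,\theta)$ on the good event; concretely, by the form of $f_2$, any such $G$ has the shape $G = (v\,\loss'(h), 1 - \eta\dev_\sigma'(\loss(h)-\theta) + v)$ — wait, more carefully: $\partial f_2(u,\theta)$ with respect to $(u,\theta)$ is $\{(\eta\dev_\sigma'(u-\theta),\, 1 - \eta\dev_\sigma'(u-\theta))\}$ when $\sigma>0$ (a singleton, since $\dev_\sigma$ is differentiable), and a compact interval's worth of selections when $\sigma = 0$ because of the kink in $|\cdot|$ at the origin. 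Composing with $F_1'(h,\theta) = (\loss'(h),\mathrm{id}_{\RR})$ gives $G = \eta\,s(\omega)\,\loss'(h;\omega) \oplus (1 - \eta\, s(\omega))$ where $s(\omega) \in \partial\dev_\sigma(\loss(h;\omega)-\theta)$. The bound $|s| \le \lambda_\sigma$ (namely $|s|\le 1$ if $\sigma=0$ and $|s|\le \pi/(2\sigma)$ if $\sigma>0$) combined with $\exx_\ddist|\loss'(h;\cdot)| < \infty$ shows $G$ is Bochner-integrable, so $\exx_\ddist G$ is well-defined in $(\HH\times\RR)^\ast \cong \HH\times\RR$.

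Next I would verify the defining subgradient inequality for $\exx_\ddist G$ at the point $(\loss(h),\theta)$ inside the space $\LL_2(\Omega,\FF,\ddist)\times\RR$. By the pointwise chain-rule inclusion, for each $\omega$ on the good event and every $(g,r)\in\HH\times\RR$ we have the weak-convexity lower bound (\ref{eqn:weak_convexity_4}) — or, since $f_2$ is genuinely convex and $F_1$ smooth, a sharper pointwise inequality of the form $f_2(F_1(h+g,\theta+r)) \ge f_2(F_1(h,\theta)) + \langle (g,r), G(\omega)\rangle - \frac{\gamma'}{2}\|(g,r)\|^2$. Taking $\exx_\ddist$ of both sides, using linearity of the integral on the right and Fubini/Bochner to pull expectation through the coupling $\langle\cdot,\cdot\rangle$, yields $\jrisk_\sigma(\loss(h+g),\theta+r) \ge \jrisk_\sigma(\loss(h),\theta) + \langle (g,r), \exx_\ddist G\rangle - \frac{\gamma'}{2}\|(g,r)\|^2$ for all $(g,r)$. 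Since this is the quadratic lower bound (\ref{eqn:weak_convexity_quad_lowerbd}) for the composite function, and since the Clarke subdifferential is precisely the set of functionals satisfying it locally (Proposition \ref{prop:weak_convexity_characterized}), we conclude $\exx_\ddist G \in \partial_{\clarke}\jrisk_\sigma(\loss(h),\theta)$. As $G$ was an arbitrary selection, this gives the claimed set inclusion.

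The main obstacle I anticipate is the measurable-selection / interchange step for the $\sigma = 0$ case, where $\partial f_2$ is genuinely set-valued: one must argue that every measurable selection $\omega\mapsto s(\omega)\in\partial|\loss(h;\omega)-\theta|$ produces, after integration, an element of $\partial_{\clarke}\jrisk_0$, and conversely that every element of $\exx_\ddist[\partial f_2\circ F_1']$ arises this way (this is where one needs a measurable-selection theorem and integrability of $\loss'(h)$ to guarantee the Bochner integral exists and commutes with the coupling functional). For $\sigma>0$ the feedback is a.s. a singleton and the argument is essentially just dominated convergence plus linearity, so the subtlety is entirely concentrated in the $\sigma=0$, non-differentiable regime; I would handle it by appealing to a standard measurable-selection result (e.g. for the closed-ball-valued, hence measurable, multifunction $\omega\mapsto\partial\dev_0(\loss(h;\omega)-\theta)$) and the Lebesgue dominated convergence theorem with dominating function $\lambda_\sigma(|\loss'(h;\cdot)| + 1)$.
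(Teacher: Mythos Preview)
Your overall strategy---establish a pointwise quadratic lower bound at each $\omega$ via (\ref{eqn:weak_convexity_4}) and then integrate---is sound, but the closing step contains a real gap. You assert that ``the Clarke subdifferential is precisely the set of functionals satisfying [the quadratic lower bound] locally (Proposition \ref{prop:weak_convexity_characterized}).'' That proposition says no such thing: it states that (\ref{eqn:weak_convexity_defn}), (\ref{eqn:weak_convexity_quad_lowerbd}), (\ref{eqn:weak_convexity_hypomonotone}) are equivalent \emph{as properties of the function $f$}, with (\ref{eqn:weak_convexity_quad_lowerbd}) formulated only for functionals already known to lie in $\partial_{\clarke}f(x)$. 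It supplies no converse allowing you to infer membership in $\partial_{\clarke}$ from a quadratic lower bound. The repair is short but must be made explicit: your integrated inequality shows that $\exx_{\ddist}G$ is a \emph{Fr\'echet} subgradient of the composite at $(h,\theta)$ in the sense of (\ref{eqn:subdif_frechet}) (given $\varepsilon>0$, take $\delta\le 2\varepsilon/\gamma$ so that $\tfrac{\gamma}{2}\|\cdot\|^2\le\varepsilon\|\cdot\|$ on the $\delta$-ball), and then one invokes $\partial_{\frechet}\subset\partial_{\clarke}$, which follows directly from the definition of the Clarke directional derivative.

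With that fix your route differs genuinely from the paper's. The paper never touches the quadratic lower bound; it stays entirely within the Clarke calculus. It first obtains the \emph{equality} $\partial_{\clarke}(f_2\circ F_1)(h,\theta)=\partial f_2(\loss(h),\theta)\circ F_1'(h,\theta)$ via a strong chain rule (using surjectivity of $F_1'(h,\theta)$ onto $\RR^2$), then bounds $|(f_2\circ F_1)_{\clarke}'(h,\theta;g,r)|$ by $\lambda(|\loss'(h;g)|+|r|)$ and applies dominated convergence to interchange $\exx_{\ddist}$ with the Clarke directional derivative, obtaining $\exx_{\ddist}(f_2\circ F_1)_{\clarke}'=(\jrisk_\sigma)_{\clarke}'$; the pointwise inequality $G\le (f_2\circ F_1)_{\clarke}'$ then integrates straight to the conclusion. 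Your approach is more explicit about the structure of the selections and makes the integrability check transparent; the paper's is cleaner in that it avoids the Fr\'echet detour. Finally, your concern about measurable selections in the $\sigma=0$ case is peripheral here: the claim is that \emph{every} integrable selection has expectation in $\partial_{\clarke}\jrisk_\sigma$, so once integrability of a given $G$ is verified (via $|s|\le\lambda_\sigma$ and $\exx_{\ddist}|\loss'(h;\cdot)|<\infty$, exactly as you argue), no further selection theorem is needed.
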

\begin{proof}
Using the weak smoothness of $\loss$, with probability $1$, the map $(h,\theta) \mapsto F_{1}^{\prime}(h,\theta)$ is continuous, plus $F_1$ is locally Lipschitz and $\partial_{\clarke}F_{1}(h,\theta) = \{F_{1}^{\prime}(h,\theta)\}$.\footnote{\citet[Prop.~5.6]{penot2012CWOD}.} Furthermore, since $F_{1}^{\prime}(h,\theta)(g,r) = (\loss^{\prime}(h)(g),r)$, the linearity of $\loss^{\prime}(h)(\cdot)$ implies that $F_{1}^{\prime}(h,\theta)(\HH \times \RR) = \RR^{2}$ $\ddist$-almost surely. Since $F_1$ and $f_2$ are (locally) Lipschitz, the facts we have just laid out imply a strong chain rule.\footnote{\citet[Prop.~5.13]{penot2012CWOD}.} That is, it holds $\ddist$-almost surely that
\begin{align}
\nonumber
\partial_{\clarke}(f_2 \circ F_1)(h,\theta) & = \partial_{\clarke}f_2(\loss(h),\theta) \circ F_{1}^{\prime}(h,\theta)\\
\label{eqn:joint_unbiased_0}
& = \partial f_2(\loss(h),\theta) \circ F_{1}^{\prime}(h,\theta),
\end{align}
where the second equality follows from the convexity of $f_2$.

Next, the Lipschitz property of $\dev_{\sigma}$ implies that $f_2$ is $\lambda$-Lipschitz for some $\lambda>0$; the actual value is not important for this proof. Using the Lipschitz continuity of $f_2$ and the fact that $\partial_{\clarke}F_{1}(h,\theta) = \{F_{1}^{\prime}(h,\theta)\}$ implies $\partial_{\clarke}\loss(h) = \{\loss^{\prime}(h)\}$, we have
\begin{align*}
|(f_2 \circ F_1)_{\clarke}^{\prime}(h,\theta)(g,r)| & \leq \limsup_{(\alpha,h^{\prime},\theta^{\prime}) \to (0_{+},h,\theta)} \frac{\left| (f_2 \circ F_1)(h^{\prime}+\alpha g,\theta^{\prime}+\alpha r)-(f_2 \circ F_1)(h^{\prime},\theta^{\prime}) \right|}{\alpha}\\
& \leq \lambda \left( \limsup_{(\alpha,h^{\prime}) \to (0_{+},h)}\frac{\left|\loss(h^{\prime}+\alpha g)-\loss(h^{\prime})\right|}{\alpha} + |r| \right)\\
& = \lambda \left( |\loss^{\prime}(h)(g)| + |r| \right).
\end{align*}
Thus, using $\jrisk_{\sigma}(\loss(h),\theta) = \exx_{\ddist}(f_2 \circ F_1)(h,\theta)$ and $\exx_{\ddist}|\loss^{\prime}(h)(g)| < \infty$ for all $h,g \in \HH$, we have
\begin{align}
\label{eqn:joint_unbiased_1}
\exx_{\ddist} \left( f_2 \circ F_1 \right)_{\clarke}^{\prime}(h,\theta;\cdot) = (\jrisk_{\sigma})_{\clarke}^{\prime}(\loss(h),\theta;\cdot)
\end{align}
by a direct application of dominated convergence.\footnote{\citet[Thm.~1.6.9]{ash2000a}.}

To conclude, taking $G(h,\theta) \in \partial f_2(\loss(h),\theta) \circ F_{1}^{\prime}(h,\theta)$, by (\ref{eqn:joint_unbiased_0}) it follows that we have $G(h,\theta) \in \partial_{\clarke}(f_2 \circ F_1)(h,\theta)$, and thus by definition of the Clarke sub-differential, monotonicity of the integral, and finally (\ref{eqn:joint_unbiased_1}), we obtain
\begin{align}
\label{eqn:joint_unbiased_2}
\exx_{\ddist}G(h,\theta)(\cdot) \leq \exx_{\ddist} (f_2 \circ F_1)_{\clarke}^{\prime}(h,\theta;\cdot) = (\jrisk_{\sigma})_{\clarke}^{\prime}(\loss(h),\theta;\cdot).
\end{align}
Linearity of $\exx_{\ddist}G(h,\theta)(\cdot)$ follows from the linearity of both $G(h,\theta)(\cdot)$ and the integral. Finally, applying (\ref{eqn:joint_unbiased_2}) we have
\begin{align*}
\sup_{\|(u,r)\|=1} \exx_{\ddist}G(h,\theta)(g,r) \leq \sup_{\|(g,r)\|=1} (\jrisk_{\sigma})_{\clarke}^{\prime}(\loss(h),\theta;u,r) < \infty,
\end{align*}
where finiteness holds because $\jrisk_{\sigma}$ is locally Lipschitz.\footnote{\citet[Prop.~5.2(b)]{penot2012CWOD}.} Thus $\exx_{\ddist}G(h,\theta) \in (\HH \times \RR)^{\ast}$, and with (\ref{eqn:joint_unbiased_2}) we have $\exx_{\ddist}G(h,\theta) \in \partial_{\clarke}\jrisk_{\sigma}(\loss(h),\theta)$ as desired.
\end{proof}
\begin{rmk}
The validity of interchanging the operations of (sub-)differentiation and expectation is a topic of fundamental importance in stochastic optimization and statistical learning theory. A useful, modern reference on this topic is included in \citet[Ch.~2]{ruszczynski2003a}. A classical reference is \citet{rockafellar1982a}; see also \citet{rockafellar1968a} for a look at measurability of convex integrands. The interchangeability problem appears in various places in the literature over the years, see for example \citet{shapiro1994a} as well as \citet[Ch.~3, Rmk.~2.2]{kall2005SLP}. See also \citet[Eqn.~(3.9)]{ruszczynski2006a}, who refer to generalized versions of a classic result due to \citet{strassen1965a}.
\end{rmk}

\subsubsection{Proximity to a nearly-stationary point}

\begin{proof}[Proof of Theorem \ref{thm:learn_stationary}]
With all the results established thus far, this proof has just two simple parts. First, we need to show that the objective function of interest is weakly convex, and that we have access to unbiased estimates of the sub-differential; this is done here. This is done using the critical preparatory results in Propositions \ref{prop:learn_weak_convexity} and \ref{prop:learn_unbiased}. Once this has been established, the remaining part just has us applying recent results from the literature for non-asymptotic control of the envelope gradient norm.

To begin, the assumptions of Proposition \ref{prop:learn_weak_convexity} are satisfied by \ref{asmp:background}, which ensures that $\jrisk_{\sigma}$ is $\gamma$-weakly convex for $\gamma = (1+\eta\lambda_{\sigma})\max\{1,\lambda\}$. Furthermore, the $\ddist$-integrability assumption on $\loss^{\prime}(h)$ lets us use Proposition \ref{prop:learn_unbiased} to ensure that feedback drawn from (\ref{eqn:sgd_feedback}) is such that $\exx_{\ddist}G_t \in \partial_{\clarke}\jrisk_{\sigma}(h_t,\theta_t)$ for all $t$. Furthermore, using \ref{asmp:feedback_cond_indep} implies that $\exx[G_t \cond G_{[t-1]}] \in \partial_{\clarke}\jrisk_{\sigma}(h_t,\theta_t)$ for all $t$, since Algorithm \ref{algo:sgd} uses $G_t$ sampled via (\ref{eqn:sgd_feedback}).

The desired result follows from an application of \citet[Thm.~3.1]{davis2019b}, where their objective function $f$ corresponds to our $\jrisk_{\sigma}$.\footnote{It also relies on the observation that a \emph{proximal} stochastic sub-gradient update using the indicator function of $C$ as a regularizer is equivalent to the \emph{projected} sub-gradient update we do here.} While their proof is given for the case of $\HH = \RR^{d}$, using assumption \ref{asmp:hilbert}, if we leverage our characterization of weak convexity (Proposition \ref{prop:weak_convexity_characterized}), and replace their Lemma 2.2 with our (\ref{eqn:env_gradient_form}), it is straightforward to see that their insights extend to arbitrary Hilbert spaces using the usual norm induced by the inner product. Thus with the moment bound \ref{asmp:feedback_mnt_bd} in hand, the generalized result can be applied to Algorithm \ref{algo:sgd}, for objective function $\jrisk_{\sigma}$, which has just been proved to be $\gamma$-weakly convex. The desired result follows immediately.
\end{proof}

\section{Helper results}\label{sec:helpers}

In this section, we provide some standard results that are leveraged in the main paper.

\subsection{Useful results based on Lipschitz properties}

Let $\XX$ be a normed linear space, and let $f:\XX \to \overbar{\RR}$ be convex and $\lambda$-Lipschitz. If $f$ is sub-differentiable at a point $x$, then using the definition of the sub-differential, we have that
\begin{align*}
|\langle x^{\prime}-x, \partial f(x) \rangle| \leq |f(x^{\prime})-f(x)| \leq \lambda\|x^{\prime}-x\|.
\end{align*}
It immediately follows that
\begin{align}\label{eqn:lip_convex_subgrad_bound}
\|\partial f(x)\| \leq \lambda.
\end{align}
That is, all sub-gradients of $f$ at $x$ have norm no greater than the Lipschitz coefficient $\lambda$.

Let $\XX$ and $\YY$ be Banach spaces, and let $f:\XX \to \YY$ be differentiable on $U \subset \XX$, an open set. Further, assume that the derivative is $\lambda$-Lipschitz on $U$, that is, for each $x, x^{\prime} \in U$, we have $\|f^{\prime}(x)-f^{\prime}(x^{\prime})\| \leq \lambda\|x-x^{\prime}\|$. First-order Taylor approximations have direct analogues in this general setting, as the following result shows.\footnote{See for example \citet[Sec.~7.3, Prop.~2--3]{luenberger1969Book} and \citet[Ch.~1]{nesterov2004ConvOpt}.}
\begin{prop}\label{prop:smooth_taylor_1st}
Let $f:\XX \to \YY$ be differentiable on an open set $U \subset \XX$, with $\XX$ and $\YY$ assumed to be Banach. If $f^{\prime}(\cdot)$ is $\lambda$-Lipschitz on $U$, then for any $x,u \in U$ such that $x+u \in U$, we have
\begin{align*}
\|f(x+u) - f(x) - f^{\prime}(x)(u)\| \leq \frac{\lambda}{2}\|u\|^{2}.
\end{align*}
\end{prop}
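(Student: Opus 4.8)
The plan is to reduce the Banach-space statement to a one-dimensional calculus fact by composing with the line segment joining $x$ and $x+u$, and then controlling the resulting scalar integral (or, to avoid Bochner-integral subtleties, a mean-value-type estimate) using the Lipschitz bound on $f^{\prime}$. Concretely, I would fix $x,u$ with $x+u \in U$; since $U$ is open and $x,x+u \in U$, I want the whole segment $\{x+tu : t \in [0,1]\}$ to lie in $U$ — this is a point worth a remark, since it is not literally implied by the hypotheses as stated, but it is the natural convexity assumption in this kind of Taylor result and is how the result is used in the paper (via \ref{eqn:weak_convexity_1}, where $x_0$ and $x$ are both in the open set $U$ and $g$ is smooth on $U$). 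Assuming this, define $\varphi:[0,1]\to\YY$ by $\varphi(t) \defeq f(x+tu)$, which is differentiable with $\varphi^{\prime}(t) = f^{\prime}(x+tu)(u)$ by the chain rule for Fr\'{e}chet derivatives along an affine path.

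The core of the argument is then: $f(x+u)-f(x)-f^{\prime}(x)(u) = \varphi(1)-\varphi(0)-\varphi^{\prime}(0) = \int_0^1 \bigl(\varphi^{\prime}(t)-\varphi^{\prime}(0)\bigr)\dif t$, where the fundamental theorem of calculus for Banach-space-valued functions of a real variable is the standard tool (this is exactly the content cited via \citet[Sec.~7.3]{luenberger1969Book}). Taking norms and pulling the norm inside the integral gives $\|f(x+u)-f(x)-f^{\prime}(x)(u)\| \leq \int_0^1 \|\varphi^{\prime}(t)-\varphi^{\prime}(0)\|\dif t = \int_0^1 \|\bigl(f^{\prime}(x+tu)-f^{\prime}(x)\bigr)(u)\|\dif t$. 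Bounding the operator action by $\|f^{\prime}(x+tu)-f^{\prime}(x)\|\,\|u\|$ and then applying the $\lambda$-Lipschitz hypothesis, $\|f^{\prime}(x+tu)-f^{\prime}(x)\| \leq \lambda\|tu\| = \lambda t\|u\|$, the integral becomes $\lambda\|u\|^{2}\int_0^1 t\dif t = \frac{\lambda}{2}\|u\|^{2}$, which is the claim.

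If one wishes to sidestep vector-valued integration entirely, an alternative is the Hahn--Banach dualization trick: for any $y^{\ast}\in\YY^{\ast}$ with $\|y^{\ast}\|\leq 1$, apply the scalar mean value theorem / fundamental theorem of calculus to $t\mapsto \langle \varphi(t), y^{\ast}\rangle$, obtain the bound $|\langle f(x+u)-f(x)-f^{\prime}(x)(u), y^{\ast}\rangle| \leq \frac{\lambda}{2}\|u\|^{2}$, and then take the supremum over such $y^{\ast}$ to recover the norm bound. This is clean and uses only scalar calculus plus $\|v\| = \sup_{\|y^{\ast}\|\leq 1}\langle v,y^{\ast}\rangle$.

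The main obstacle is really just the bookkeeping around differentiating $t\mapsto f(x+tu)$ and justifying the chain rule and the fundamental theorem of calculus in the Banach-valued setting — these are standard (and the paper already cites Luenberger for exactly this), so no genuine difficulty arises, but one should be slightly careful to note the segment $[x,x+u]\subset U$ assumption and to distinguish the operator norm of $f^{\prime}(x+tu)-f^{\prime}(x)$ from the norm of its value at $u$. The dualization variant makes even this routine. I expect the whole proof to be a half page at most.
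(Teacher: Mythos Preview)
Your proof is correct and is exactly the standard argument; the paper does not actually prove this proposition but merely cites \citet[Sec.~7.3, Prop.~2--3]{luenberger1969Book} and \citet[Ch.~1]{nesterov2004ConvOpt}, where the same line-segment integration (or equivalently the dualization variant you describe) is carried out. Your remark about needing the segment $\{x+tu:t\in[0,1]\}\subset U$ is well taken and is indeed the implicit assumption in those references and in the paper's use of the result.
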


\subsection{Radial derivatives of convex functions}

Say a function $f: \VV \to \overbar{\RR}$ is convex. Take any $u, v \in \dom f$, and any scalar $c \geq 0$ such that $v + c(v-u) \in \dom f$. Then writing $u^{\prime} \defeq v + c(v-u)$, note that
\begin{align*}
v = \frac{1}{1+c}\left(u^{\prime}+cu\right) = (1-\beta)u^{\prime} + \beta u,
\end{align*}
where $\beta \defeq c/(1+c) \in [0,1)$. By convexity we have $f((1-\beta)u^{\prime} + \beta u) \leq (1-\beta)f(u^{\prime}) + \beta f(u)$. Filling in definitions and rearranging we have
\begin{align}\label{eqn:convexity_chord_slopes}
f(v + c(v-u)) - f(v) \geq c(f(v)-f(u)).
\end{align}
Note that this can be done for any pair of $u,v$ and scalar $c$ that keeps the relevant points on the domain. Clearly this property is necessary for convexity, but it is in fact also sufficient.\footnote{For example, see \citet[Thm.~3.1.1]{nesterov2004ConvOpt}.}

For any function $f:\VV \to \overbar{\RR}$ and open set $U \subset \XX$, fix a point $x \in U$. We denote the difference quotient of $f$ at $x$, incremented in the direction $u$, modulated by scalar $\alpha \neq 0$ as
\begin{align}\label{eqn:diffquot_defn}
q(\alpha) \defeq q(\alpha;f,x,u) \defeq \frac{f(x + \alpha u)-f(x)}{\alpha}.
\end{align}
Consider the map $g(t) \defeq f(x + t u)-f(x)$, with all elements but $t \geq 0$ fixed. When $f$ is convex, direct inspection immediately shows that $t \mapsto g(t)$ is convex. For any $0 \leq t_{1} < t_{2}$, take some $t^{\prime} \in (t_1,t_2)$. Clearly, there exists a $\beta \in (0,1)$ such that $t^{\prime} = \beta t_1 + (1-\beta)t_2$. Then, we have
\begin{align*}
\frac{g(t^{\prime})-g(t_1)}{t^{\prime}-t_1} = \frac{g(\beta t_{1} + (1-\beta)t_2)-g(t_1)}{(1-\beta)(t_2-t_1)} & \leq \frac{\beta g(t_1) + (1-\beta)g(t_2)-g(t_1)}{(1-\beta)(t_2-t_1)} = \frac{g(t_2)-g(t_1)}{t_2-t_1},
\end{align*}
where the inequality follows from convexity of $g$. If we use this inequality in the special case of $t_1 = 0$, alongside the basic relation $q(\alpha) = (g(\alpha)-g(0))/\alpha$, it immediately follows that $\alpha \mapsto q(\alpha)$ is monotonic (non-decreasing) on the positive reals. Furthermore, the set $\{q(\alpha): \alpha > 0\}$ is bounded below. To see this, take some $\gamma > 0$ small enough that $x - \gamma u \in \dom f$, and note that by direct application of convexity and the basic property (\ref{eqn:convexity_chord_slopes}), it follows that
\begin{align*}
\left(\frac{\alpha}{\gamma}\right)(f(x)-f(x-\gamma u)) \leq f\left( x + \frac{\alpha}{\gamma}(x-(x-\gamma u)) \right) - f(x) = f(x + \alpha u) - f(x).
\end{align*}
That is, dividing both sides by $\alpha$, we have
\begin{align}\label{eqn:convex_diffquot_lowerbd}
\left(\frac{1}{\gamma}\right)(f(x)-f(x-\gamma u)) \leq \frac{f(x + \alpha u) - f(x)}{\alpha} = q(\alpha;f,u).
\end{align}
Since the choice of $\gamma > 0$ depends only on $x$ and $u$, and is free of $\alpha$, it follows that the set $\{q(\alpha): \alpha > 0\}$ is bounded below, as desired. Using this boundedness alongside the monotonicity of $\alpha \mapsto q(\alpha)$, we have that the infimum is finite. Thus, recalling the definition (\ref{eqn:ddv_gateaux}) of the radial derivative of $f$ at $x$ in the direction $u$, since we have
\begin{align}\label{eqn:convex_dder_upper_defn}
f_{r}^{\prime}(x;u) = \lim\limits_{\alpha \to 0_{+}} \frac{f(x + \alpha u)-f(x)}{\alpha} = \inf\left\{ q(\alpha;f,x,u) : \alpha > 0 \right\},
\end{align}
it follows immediately that the radial derivative always exists (i.e., $f^{\prime}(x;u) \in \RR$). Note also that using convexity, direct inspection shows that for all $u$ we have
\begin{align}\label{eqn:convex_dder_upperbd}
f(u) - f(x) \geq f^{\prime}(x;u-x).
\end{align}
Furthermore, it is easily verified that whenever $x \in \dom f$, the map $u \mapsto f^{\prime}(x;u)$ is sub-additive and positively homogeneous, i.e., a sub-linear functional.\footnote{This means that the Hahn-Banach theorem can be applied to construct a linear functional $g$ bounded above as $g(u) \leq f^{\prime}(x;u)$, for all $u$. See for example \citet[Sec.~5.4]{luenberger1969Book} or \citet[Thm.~3.4.2]{ash2000a}. This $g$ is not necessarily a sub-gradient of $f$ at $x$, since it need not be continuous in general; such functions are sometimes called \term{algebraic sub-gradients} \citep[Sec.~3]{ruszczynski2006a}.} The basic facts of interest here are summarized in the following proposition.
\begin{prop}[Difference quotients for convex functions]\label{prop:convex_diffquot_dder}
Let $\VV$ be a vector space. If function $f: \VV \to \overbar{\RR}$ is proper and convex, then it is radially differentiable on $\inter(\dom f)$.
\end{prop}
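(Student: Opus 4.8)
The plan is to assemble the observations already developed in the discussion preceding the statement, since they do essentially all the work; the only point requiring care is checking that the relevant points stay in $\dom f$, which is exactly what interiority buys us. Fix a point $x \in \inter(\dom f)$ and an arbitrary direction $u \in \VV$. Because $x$ lies in the interior of $\dom f$, there is some $\varepsilon > 0$ with $x + tu \in \dom f$ for all $t \in (-\varepsilon,\varepsilon)$; in particular the difference quotient $q(\alpha) = q(\alpha;f,x,u)$ from (\ref{eqn:diffquot_defn}) is finite for all sufficiently small $\alpha > 0$, and one may pick a single $\gamma \in (0,\varepsilon)$ with $x - \gamma u \in \dom f$.

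First I would establish monotonicity of $\alpha \mapsto q(\alpha)$ on the positive reals. Setting $g(t) \defeq f(x+tu) - f(x)$, convexity of $f$ makes $t \mapsto g(t)$ convex on a neighbourhood of $0$, and the chord-slope inequality for convex functions of one real variable worked out just above the statement (via (\ref{eqn:convexity_chord_slopes})), applied with left endpoint $t_1 = 0$, shows that $q(\alpha) = (g(\alpha)-g(0))/\alpha$ is non-decreasing in $\alpha$ wherever finite; elsewhere its value is $+\infty$, so monotonicity on all of $(0,\infty)$ is retained.

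Next I would produce a finite lower bound for $\{q(\alpha):\alpha>0\}$. Applying (\ref{eqn:convexity_chord_slopes}) with the pair $x$ and $x - \gamma u$ and scaling factor $\alpha/\gamma$ gives $f(x+\alpha u) - f(x) \geq (\alpha/\gamma)(f(x) - f(x-\gamma u))$, i.e. $q(\alpha) \geq \gamma^{-1}(f(x) - f(x-\gamma u))$ for every $\alpha > 0$, as recorded in (\ref{eqn:convex_diffquot_lowerbd}); the right-hand side is a finite constant free of $\alpha$. A non-decreasing function bounded below has a finite infimum attained as its right limit at $0$, so by (\ref{eqn:ddv_gateaux}) and (\ref{eqn:convex_dder_upper_defn}) the radial derivative $f_r'(x;u) = \lim_{\alpha \to 0_{+}} q(\alpha) = \inf_{\alpha > 0} q(\alpha)$ exists and is real. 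Since $u$ was arbitrary, $f$ is radially differentiable at $x$, and since $x \in \inter(\dom f)$ was arbitrary, $f$ is radially differentiable on $\inter(\dom f)$.

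As for difficulty, there is essentially no genuine obstacle: every ingredient is already at hand, so the "hard part" is purely bookkeeping — being explicit that interiority of $x$ simultaneously guarantees that $q(\alpha)$ is finite for small $\alpha$ and that a backward point $x - \gamma u$ lies in $\dom f$, the latter being what makes the lower bound in (\ref{eqn:convex_diffquot_lowerbd}) a genuine real number rather than $-\infty$.
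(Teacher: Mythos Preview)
Your proposal is correct and follows essentially the same approach as the paper's own proof: both simply assemble the monotonicity of $\alpha \mapsto q(\alpha)$ and the lower bound (\ref{eqn:convex_diffquot_lowerbd}) from the preceding discussion, and then observe that interiority of $x$ guarantees a backward point $x - \gamma u \in \dom f$ so that the lower bound is finite. The paper's version is terser, but the logical content is identical.
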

\begin{proof}
The desired result follows immediately from previous discussion leading up to (\ref{eqn:convex_dder_upper_defn}), and the fact that if $x$ is an interior point of the effective domain of $f$, it follows that for any $u \in \VV$, we can find a $\gamma > 0$ small enough that $x - \gamma u \in \dom f$, which means we can apply the lower bound of (\ref{eqn:convex_diffquot_lowerbd}) to the difference quotients $q(\alpha;f,x,u)$ indexed by $\alpha > 0$.
\end{proof}

\subsection{Loss example}\label{sec:theory_examples}

\begin{ex}
While stated with a somewhat high degree of abstraction, let us give a concrete example to emphasize that the assumptions of Proposition \ref{prop:learn_weak_convexity} are readily satisfied under natural and important learning settings. Consider the regression problem, where we observe random pairs $(X,Y) \sim \ddist$, assuming that $X$ is a finite-dimensional real-valued random vector, and $Y$ is a real-valued random variable, related to the inputs by the relation $Y = h^{\ast}(X) + \epsilon$, where $\epsilon$ is a zero-mean random noise term. For simplicity, let $h^{\ast}$ be a continuous linear map, and let $\HH$ be the set of all continuous linear maps on the space that $X$ is distributed over. Finally, let the loss by the squared error, such that
\begin{align*}
\loss(h) & = (h(X)-Y)^{2} = (\langle X, h-h^{\ast} \rangle - \epsilon)^{2}\\
\loss^{\prime}(h)(u) & = 2(\langle X, h-h^{\ast} \rangle - \epsilon)\langle u, X\rangle.
\end{align*}
Since we make almost no assumptions on the nature of the underlying noise distribution, clearly both the losses and the ``gradients'' can be unbounded and heavy-tailed. Fix any $h_0 \in \HH$, and note that for any $h \in \HH$, we have
\begin{align*}
\loss(h)-\loss(h_0) & = \langle X, h-h^{\ast} \rangle^{2} - \langle X, h_0-h^{\ast} \rangle^{2} - 2\epsilon \langle X, h-h_0 \rangle\\
& = \langle X, h-h_0 \rangle \langle X, (h-h^{\ast}) + (h_0-h^{\ast}) \rangle - 2\epsilon \langle X, h-h_0 \rangle.
\end{align*}
Absolute values can be bounded above as
\begin{align*}
|\loss(h)-\loss(h_0)| \leq \|X\|^{2}\|h-h_0\|\left( \|h-h^{\ast}\| + \|h_0-h^{\ast}\|\right) + 2|\epsilon|\|X\|\|h-h_0\|.
\end{align*}
It follows immediately that as long as $\exx_{\ddist}\|X\|^{4} < \infty$, we have that the local Lipschitz property (\ref{eqn:loss_local_lipschitz}) of the loss is satisfied, for arbitrary choice of $h_0$.

As for the weak smoothness requirement on the loss, note that
\begin{align*}
\|\loss^{\prime}(h)-\loss^{\prime}(h_0)\| = \sup_{\|u\|=1} \langle u, \loss^{\prime}(h)-\loss^{\prime}(h_0) \rangle = \sup_{\|u\|=1} 2(\langle X, h-h_0 \rangle)\langle u, X\rangle \leq 2\|X\|^{2}\|h-h_0\|.
\end{align*}
Thus, if the random inputs $X$ are $\ddist$-almost surely bounded, the desired smoothness condition (\ref{eqn:loss_smooth}) holds. Note that this does not preclude heavy-tailed losses and gradients since no additional assumptions have been made on the noise term.
\end{ex}

\section{Empirical supplement}\label{sec:empirical_supp}

Due to limited space, we could only include key details and a few representative results in section \ref{sec:empirical} of the main text. Here we fill in those additional details. To begin, all our numerical experiments have been implemented entirely in Python (v.~3.8) using the following additional open-source software: Jupyter notebook (for interactive demos),\footnote{\url{https://jupyter.org/}} matplotlib (v.~3.4.1, for all visuals),\footnote{\url{https://matplotlib.org/}} PyTables (v.~3.6.1, for dataset handling),\footnote{\url{https://www.pytables.org/}} NumPy (v.~1.20.0, for almost all computations),\footnote{\url{https://numpy.org/}} and SciPy (v.~1.6.2, for random variable statistics and special functions). In the following paragraphs, we provide information about the benchmark datasets used, as well as several figures including additional experimental results.

\paragraph{Dataset description}

The real-world benchmark datasets used in our classification tests are as follows: \texttt{adult},\footnote{\url{https://archive.ics.uci.edu/ml/datasets/Adult}} \texttt{australian},\footnote{\url{https://archive.ics.uci.edu/ml/datasets/statlog+(australian+credit+approval)}} \texttt{cifar10},\footnote{\url{https://www.cs.toronto.edu/~kriz/cifar.html}} \texttt{cod\_rna},\footnote{\url{https://www.csie.ntu.edu.tw/~cjlin/libsvmtools/datasets/binary.html}} \texttt{covtype},\footnote{\url{https://archive.ics.uci.edu/ml/datasets/covertype}} \texttt{emnist\_balanced},\footnote{\url{https://www.nist.gov/itl/products-and-services/emnist-dataset}} \texttt{fashion\_mnist},\footnote{\url{https://github.com/zalandoresearch/fashion-mnist}} and \texttt{mnist}.\footnote{\url{http://yann.lecun.com/exdb/mnist/}} See Table \ref{table:datasets} for a summary. Further background on all datasets is available at the URLs provided in the footnotes. Dataset size reflects the size after removal of instances with missing values, where applicable. For all datasets with categorical features, the ``input features'' given in Table \ref{table:datasets} represents the number of features after doing a one-hot encoding of all such features. The ``model dimension'' is just the product of the number of input features and the number of classes, since we are using multi-class logistic regression (one linear model for each class). All categorical features are given a one-hot representation. All input features are standardized to take values on the unit interval $[0,1]$. As mentioned in the main text, we have prepared a GitHub repository that includes code for both re-creating the empirical tests and pre-processing the data, that will be made public following the review phase.

\begin{table}[t!]
\begin{center}
\begin{tabular}{|l|l|l|l|l|}
\hline
Dataset & Size & Input features & Number of classes & Model dimension \\
\hline\hline
\texttt{adult} & 45,222 & 105 & 2 & 210\\
\hline
\texttt{australian} & 690 & 43 & 2 & 86\\
\hline
\texttt{cifar10} & 60,000 & 3,072 & 10 & 30,720\\
\hline
\texttt{cod\_rna} & 331,152 & 8 & 2 & 16\\
\hline
\texttt{covtype} & 581,012 & 54 & 7 & 378\\
\hline
\texttt{emnist\_balanced} & 131,600 & 784 & 47 & 36,848\\
\hline
\texttt{fashion\_mnist} & 70,000 & 784 & 10 & 7,840\\
\hline
\texttt{mnist} & 70,000 & 784 & 10 & 7,840\\
\hline
\end{tabular}
\end{center}
\caption{A summary of the benchmark datasets used for performance evaluation.}
\label{table:datasets}
\end{table}

\paragraph{Additional results}

In Figures \ref{fig:real_data_supp_trajectories}--\ref{fig:real_data_supp_hist_2}, we give additional results that complement Figure \ref{fig:real_data} in the main text. The trends in terms of the histograms of test loss distributions are essentially uniform across this wide variety of datasets. We also see that a sharply-concentrated logistic (test) loss tends to correlate with better classification error (average zero-one error), with \texttt{cifar10} being the only exception to this trend. As another point not raised in the main text, intuitively we would hope that Algorithm \ref{algo:sgd} performs well in terms of the risk $\risk_{\sigma}$ corresponding to its particular $\sigma$ setting; we have found this to be true across the benchmark datasets studied here. See Figure \ref{fig:real_data_supp_levels} for an example from the \texttt{adult} dataset. Moving from top to bottom, the order of colors shows a rather clear reversal. Very similar trends can be observed on the other datasets as well. In estimating $\risk_{\sigma}$ on the test set, we use an empirical mean estimate of $\jrisk_{\sigma}$, and then minimize with respect to $\theta$ using the \texttt{minimize\_scalar} function of the SciPy (v.~1.6.2) \texttt{optimize} module.

\begin{figure}[t]
\centering
\includegraphics[width=0.7\textwidth]{legend_plot_test_values}\\
\includegraphics[width=0.25\textwidth]{avesmall_adult_linreg_multi_SGD_Ave_zero_one}\includegraphics[width=0.25\textwidth]{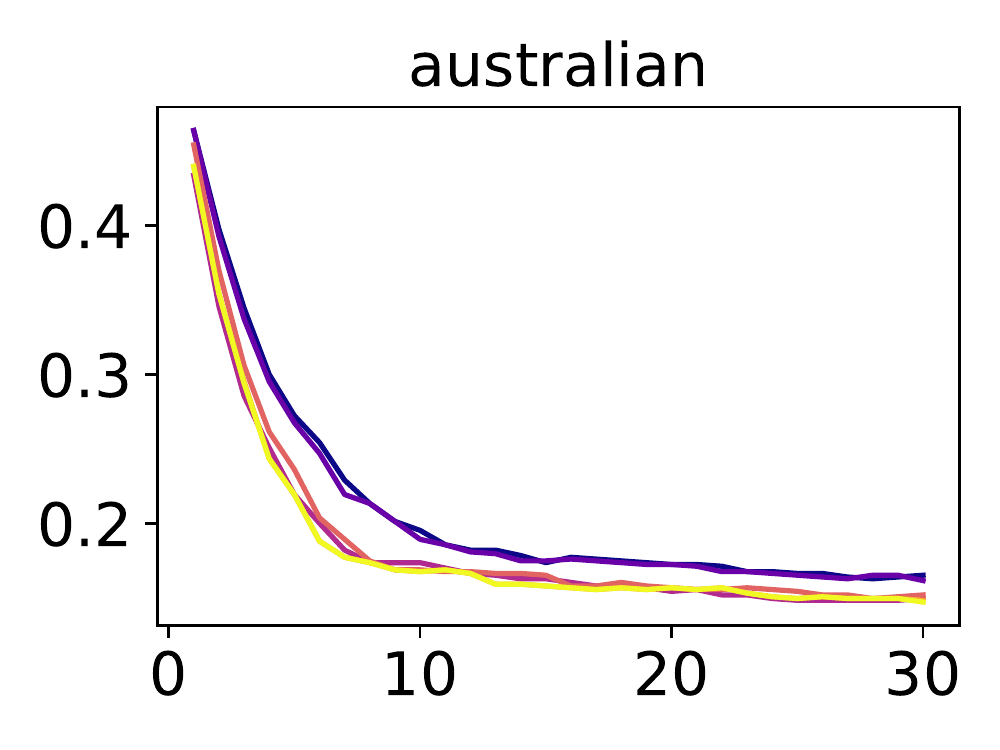}\includegraphics[width=0.25\textwidth]{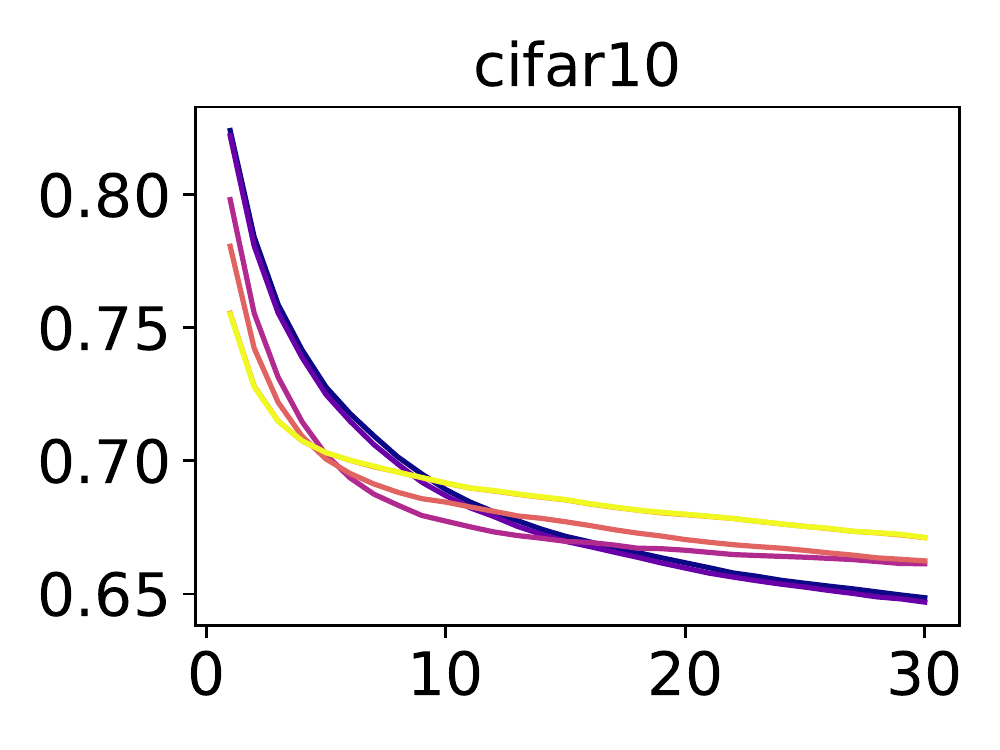}\includegraphics[width=0.25\textwidth]{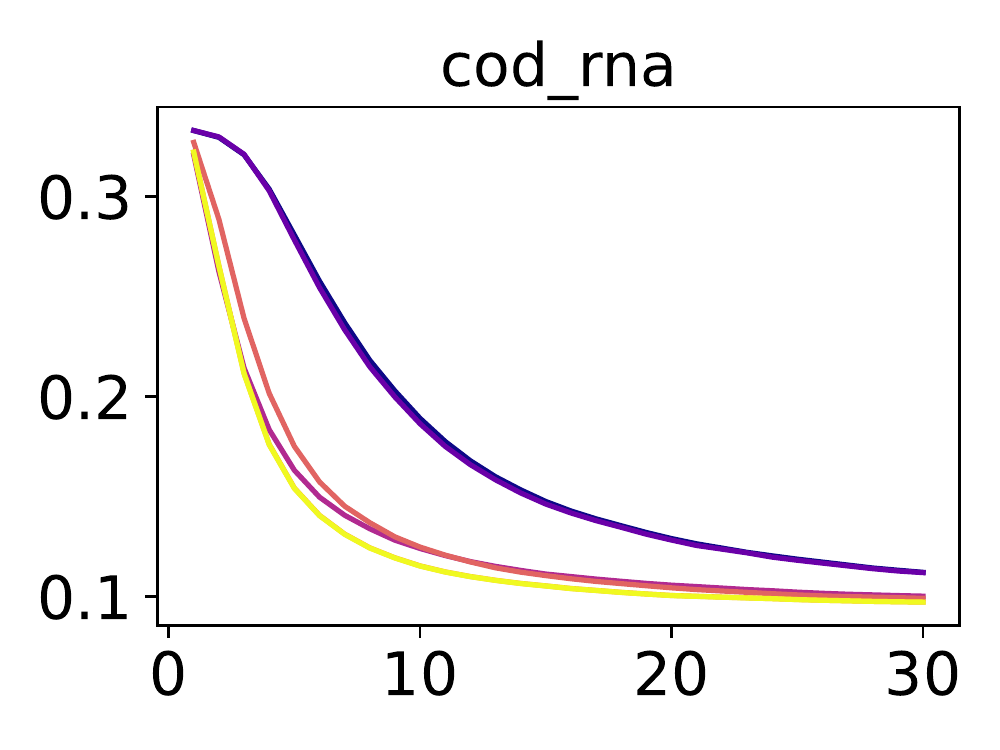}\\
\includegraphics[width=0.25\textwidth]{avesmall_covtype_linreg_multi_SGD_Ave_zero_one}\includegraphics[width=0.25\textwidth]{avesmall_emnist_balanced_linreg_multi_SGD_Ave_zero_one}\includegraphics[width=0.25\textwidth]{avesmall_fashion_mnist_linreg_multi_SGD_Ave_zero_one}\includegraphics[width=0.25\textwidth]{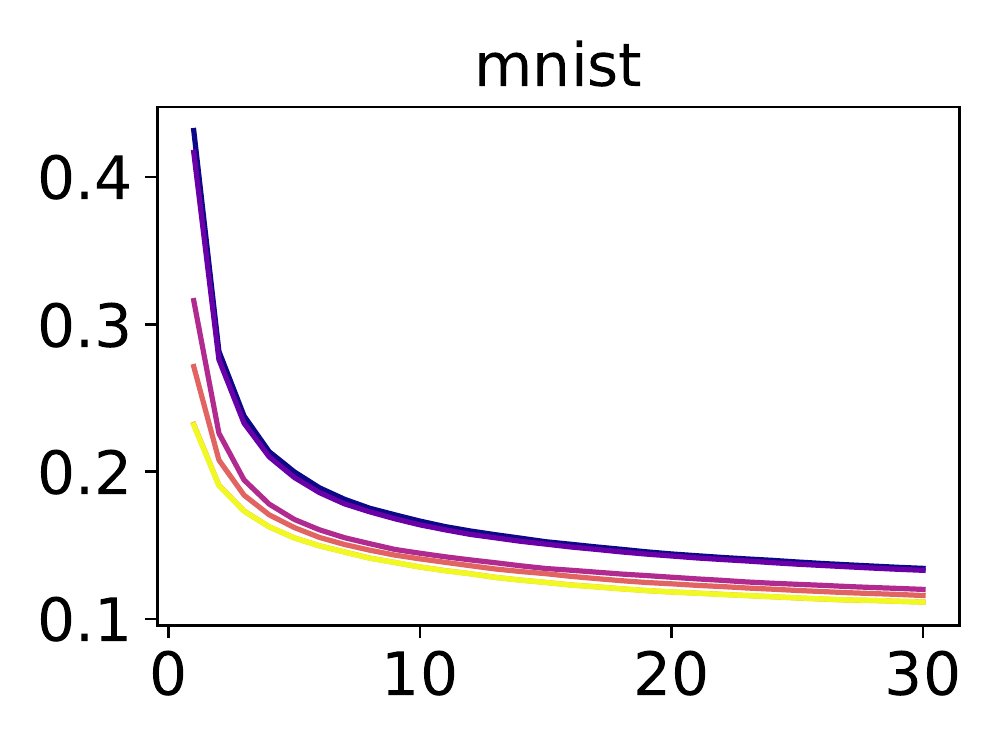}
\caption{Additional average classification error trajectories over epochs.}
\label{fig:real_data_supp_trajectories}
\end{figure}

\begin{figure}[t]
\centering
\includegraphics[width=1.0\textwidth]{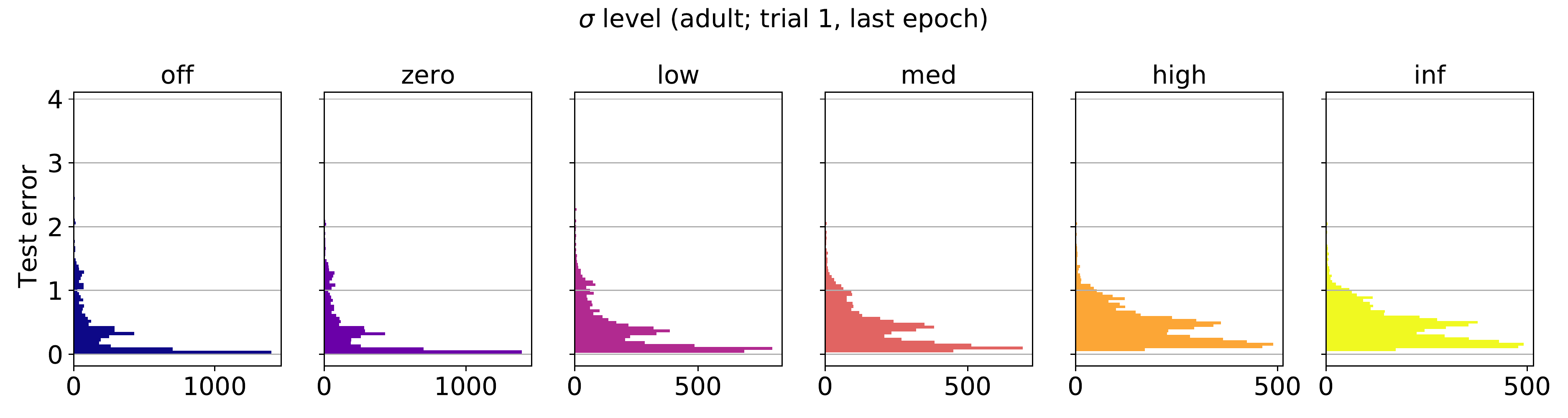}\\
\includegraphics[width=1.0\textwidth]{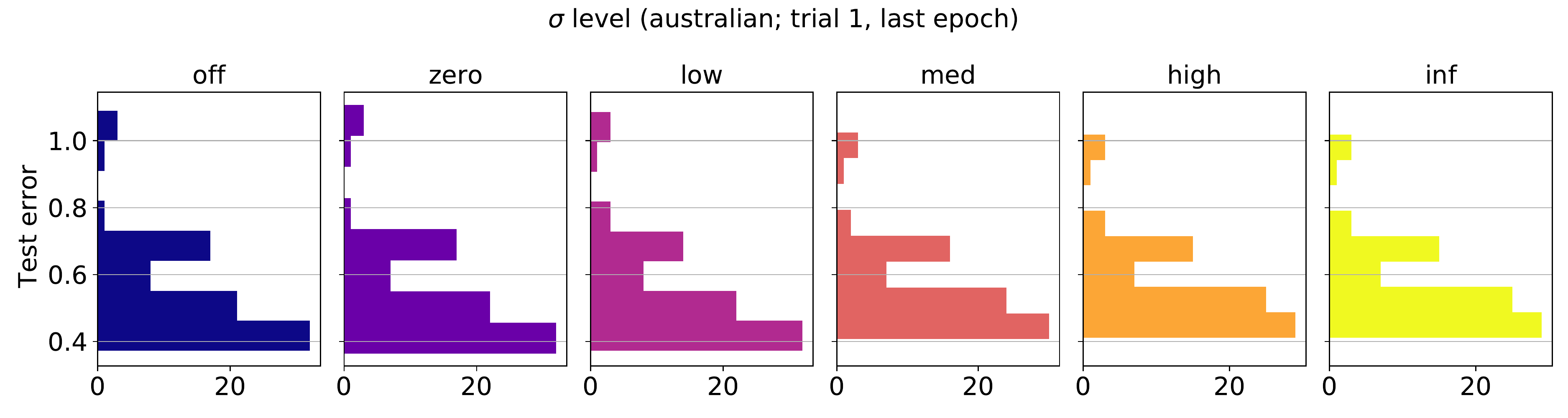}\\
\includegraphics[width=1.0\textwidth]{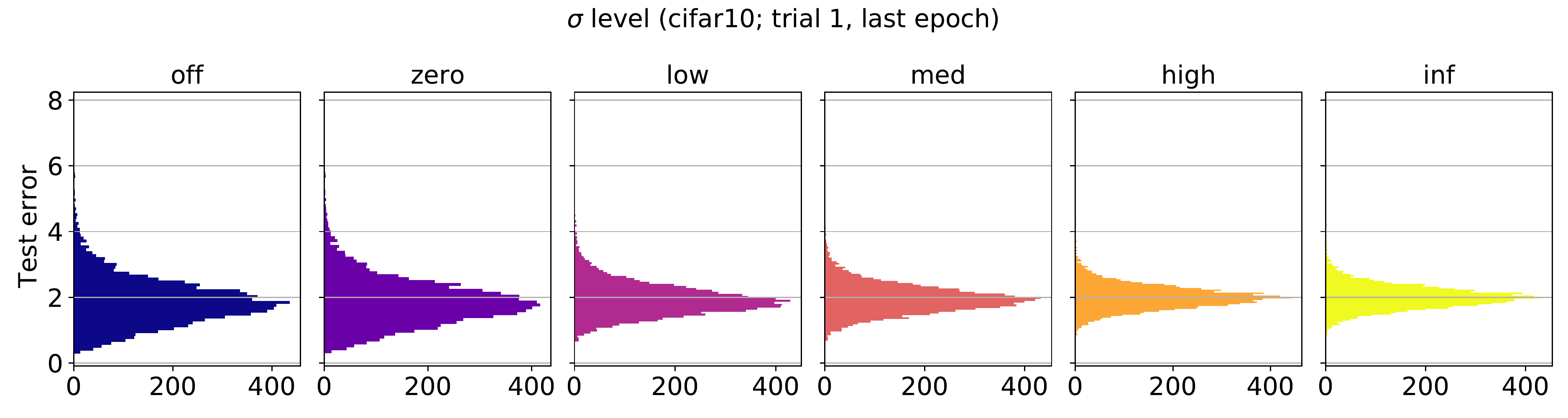}\\
\includegraphics[width=1.0\textwidth]{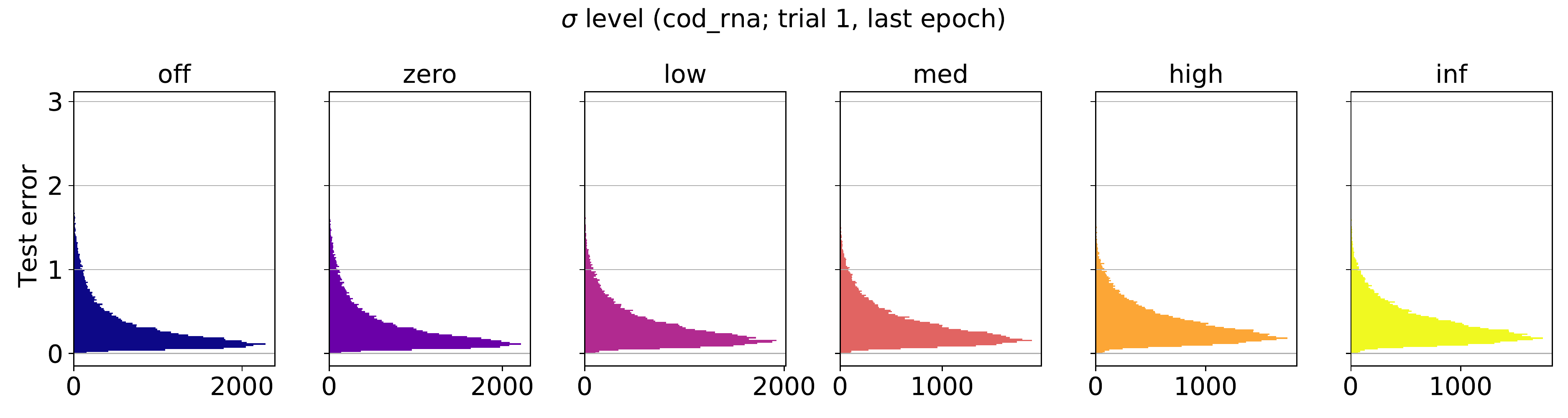}
\caption{Additional test error histograms.}
\label{fig:real_data_supp_hist_1}
\end{figure}

\begin{figure}[t]
\centering
\includegraphics[width=1.0\textwidth]{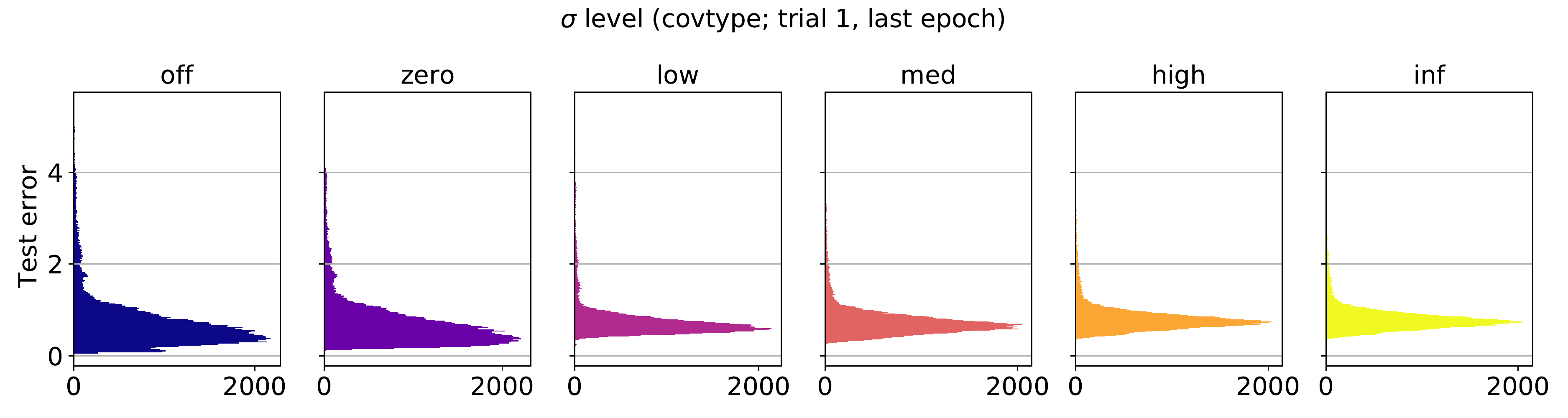}\\
\includegraphics[width=1.0\textwidth]{hist_emnist_balanced_linreg_multi_SGD_Ave_logistic}\\
\includegraphics[width=1.0\textwidth]{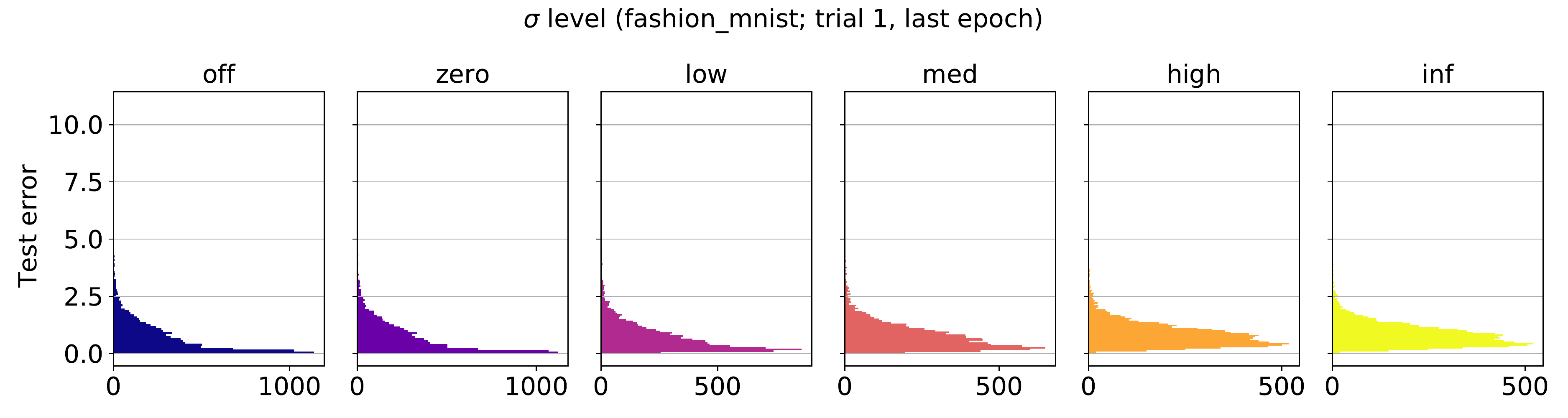}\\
\includegraphics[width=1.0\textwidth]{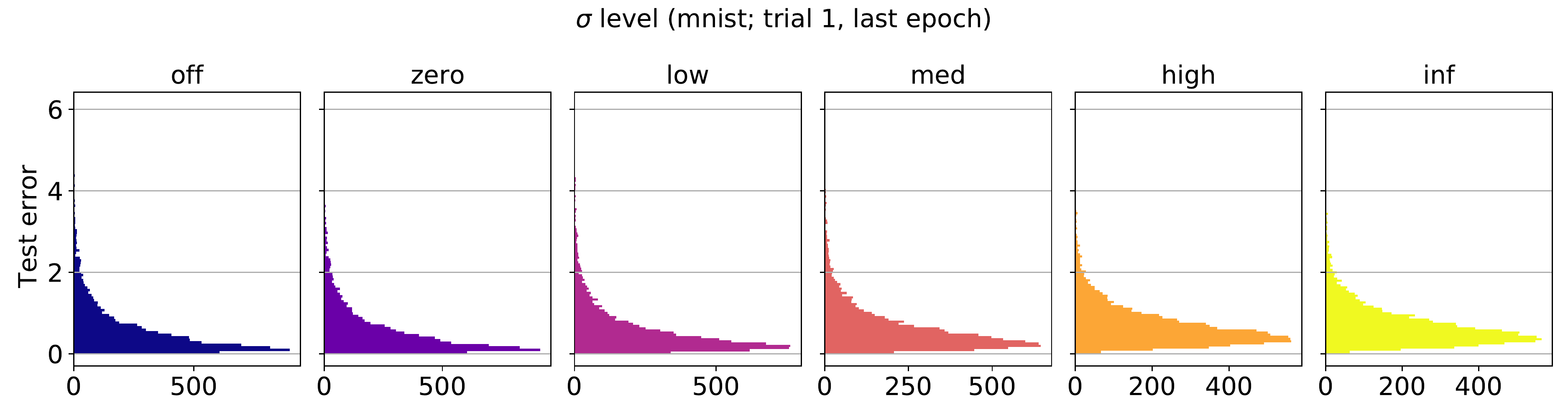}
\caption{Additional test error histograms.}
\label{fig:real_data_supp_hist_2}
\end{figure}

\begin{figure}[t]
\centering
\includegraphics[width=0.6\textwidth]{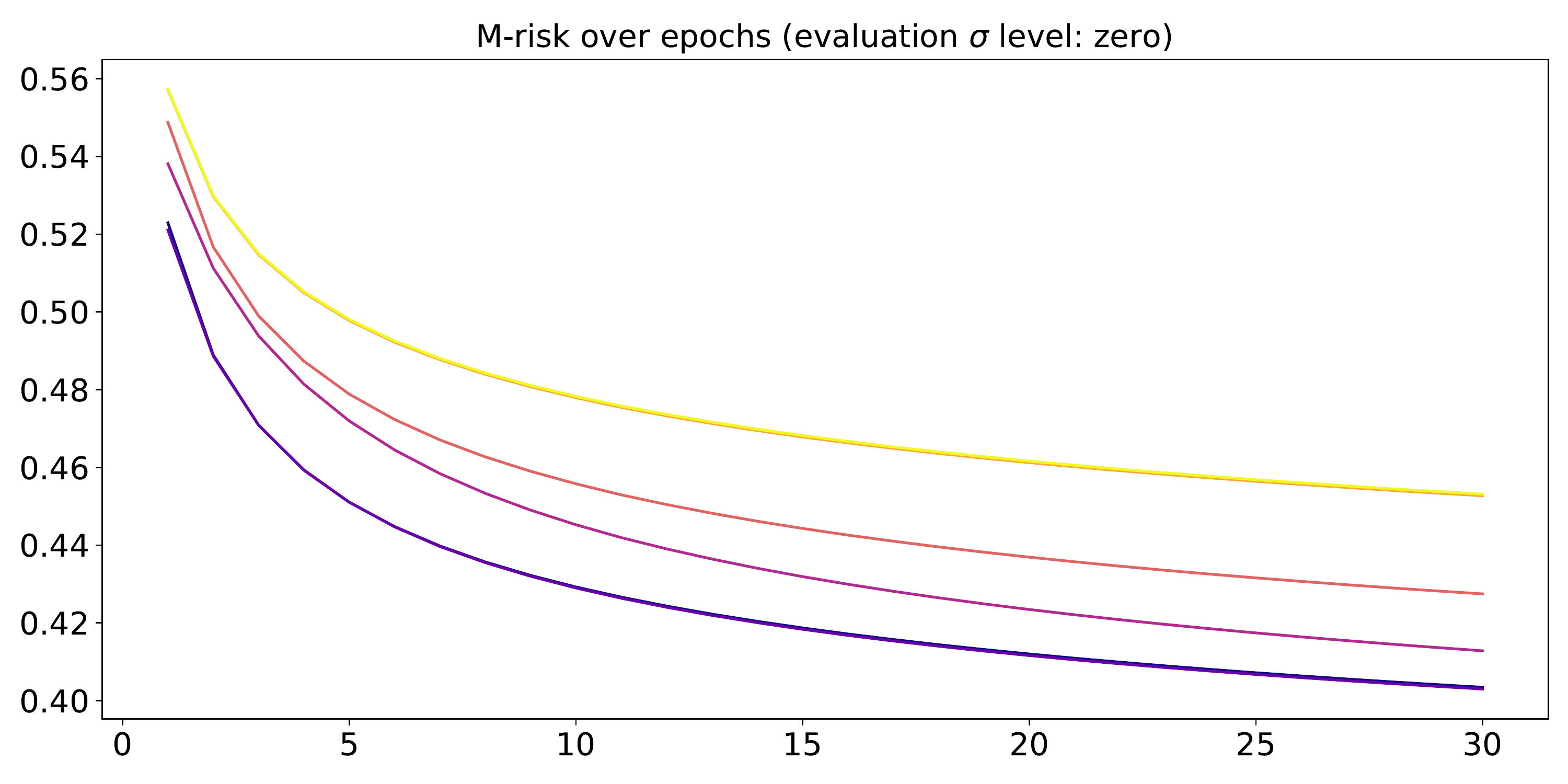}\\
\includegraphics[width=0.6\textwidth]{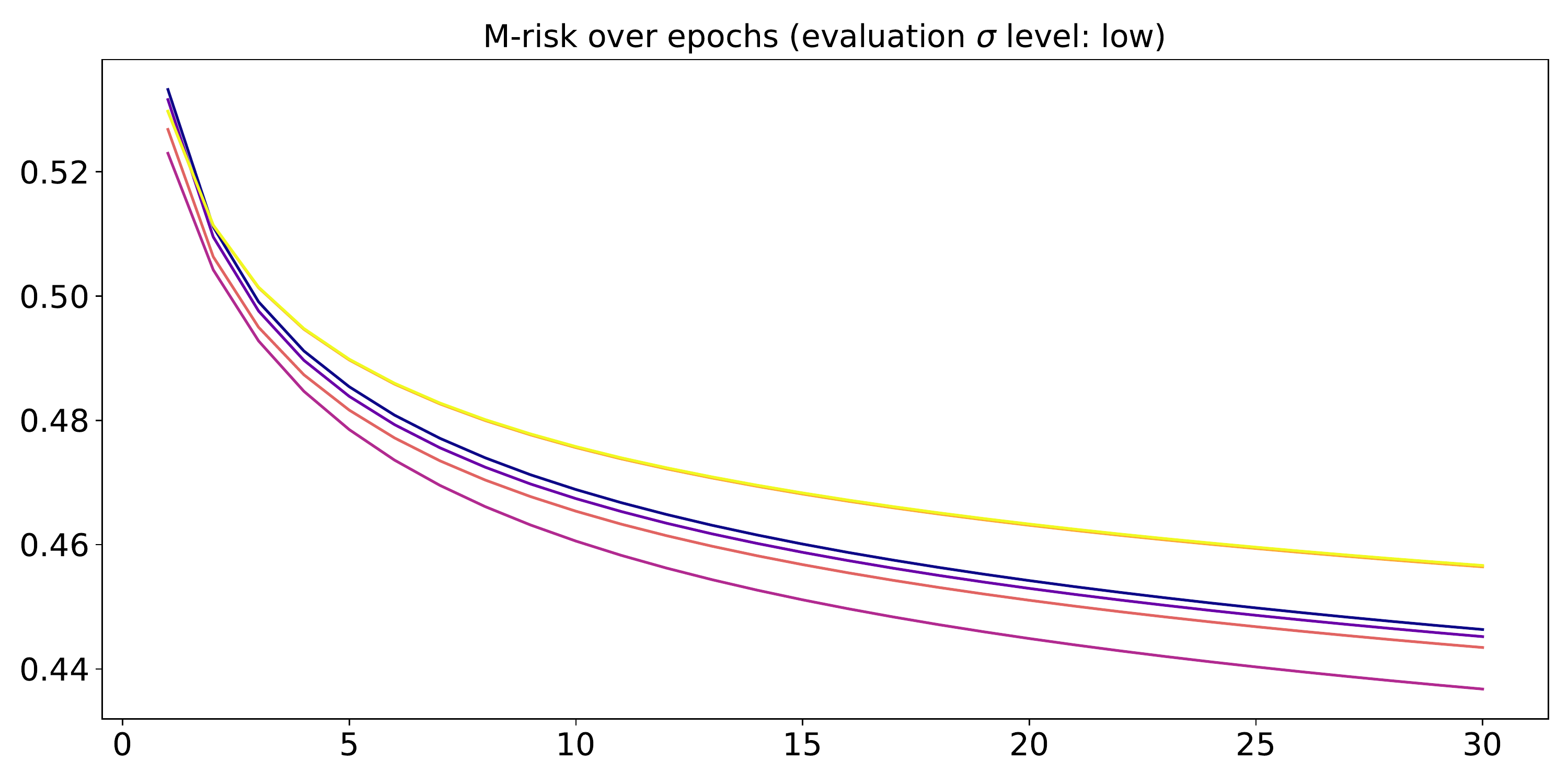}\\
\includegraphics[width=0.6\textwidth]{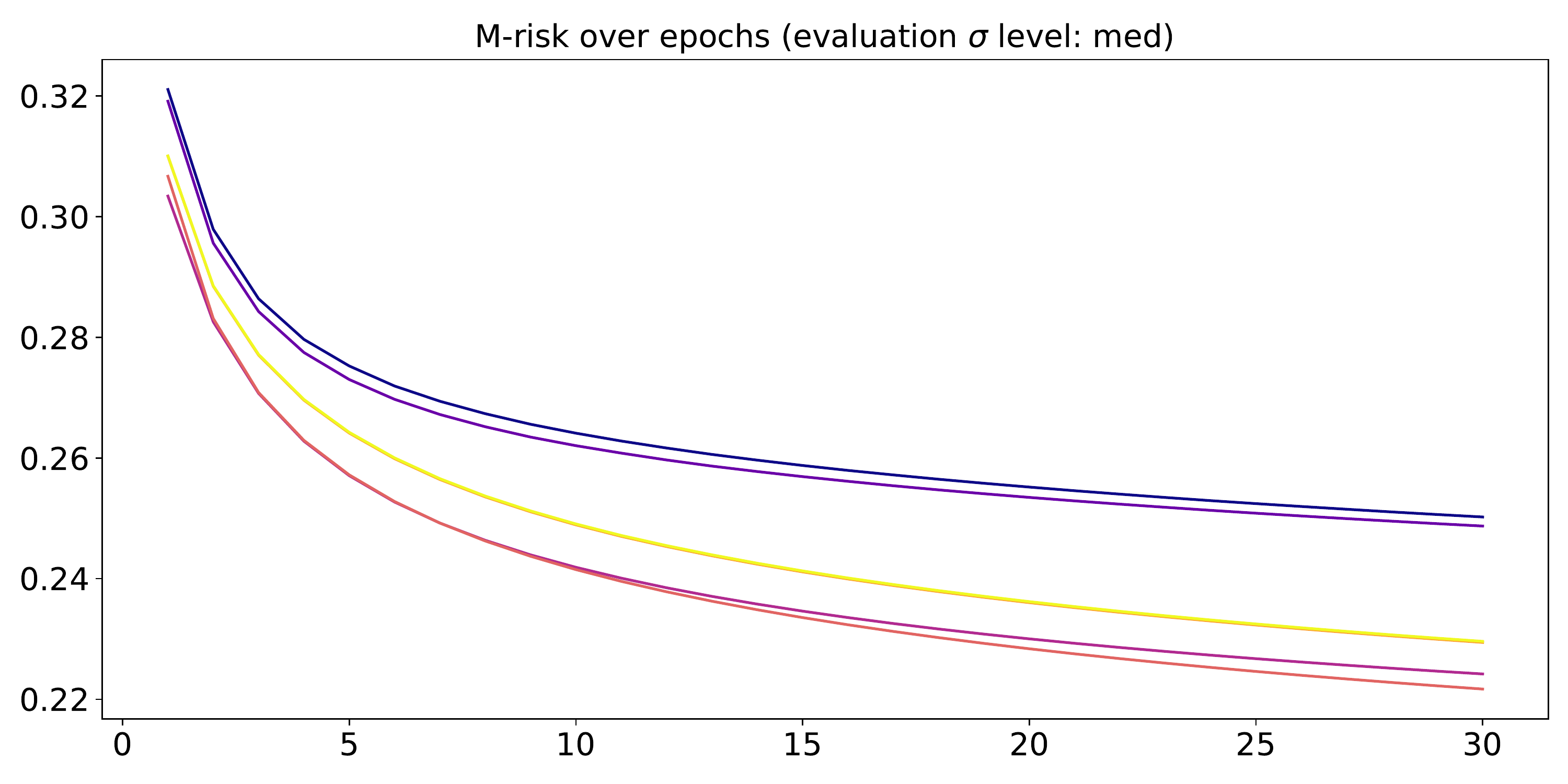}\\
\includegraphics[width=0.6\textwidth]{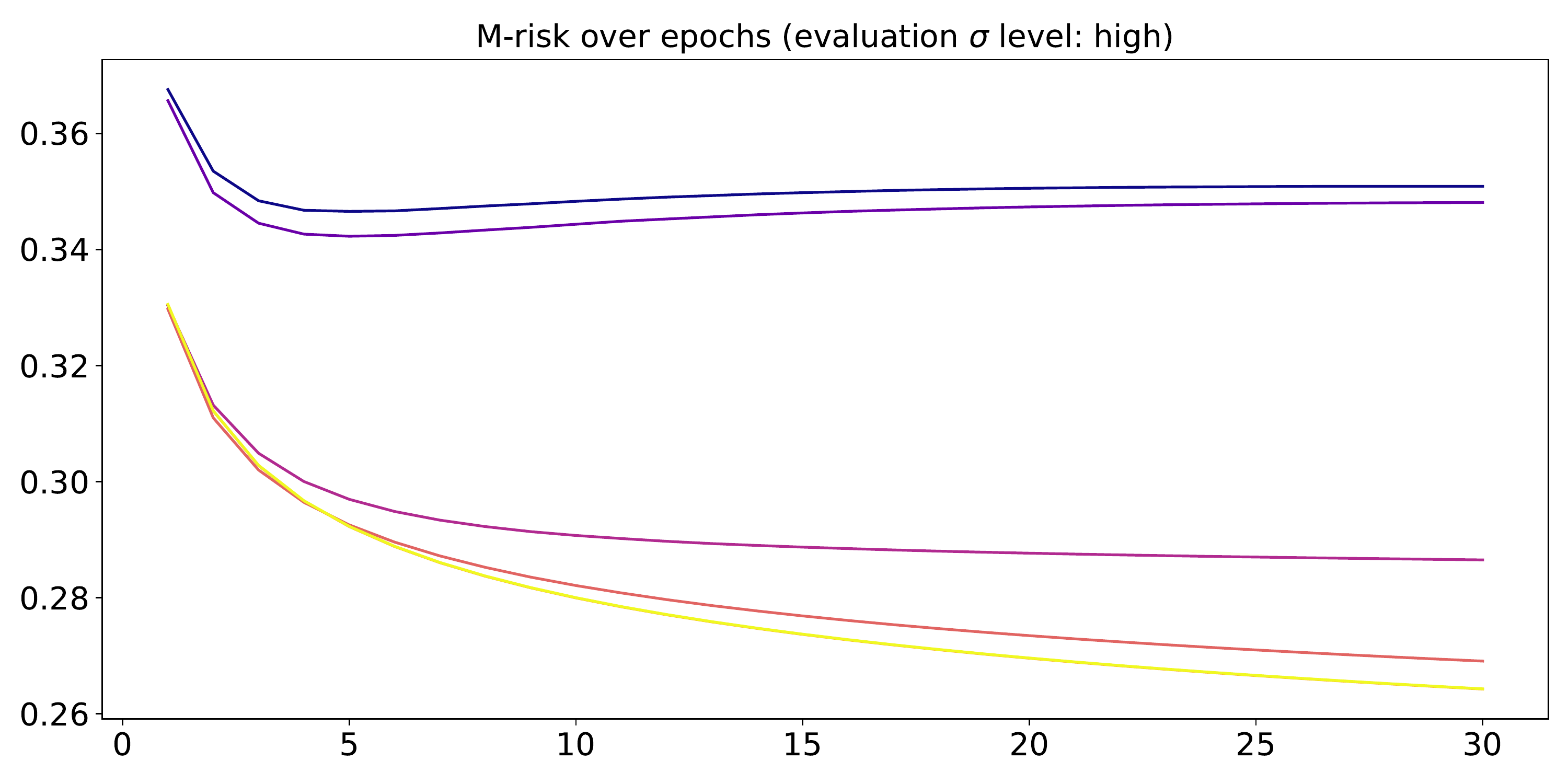}\\
\includegraphics[width=0.6\textwidth]{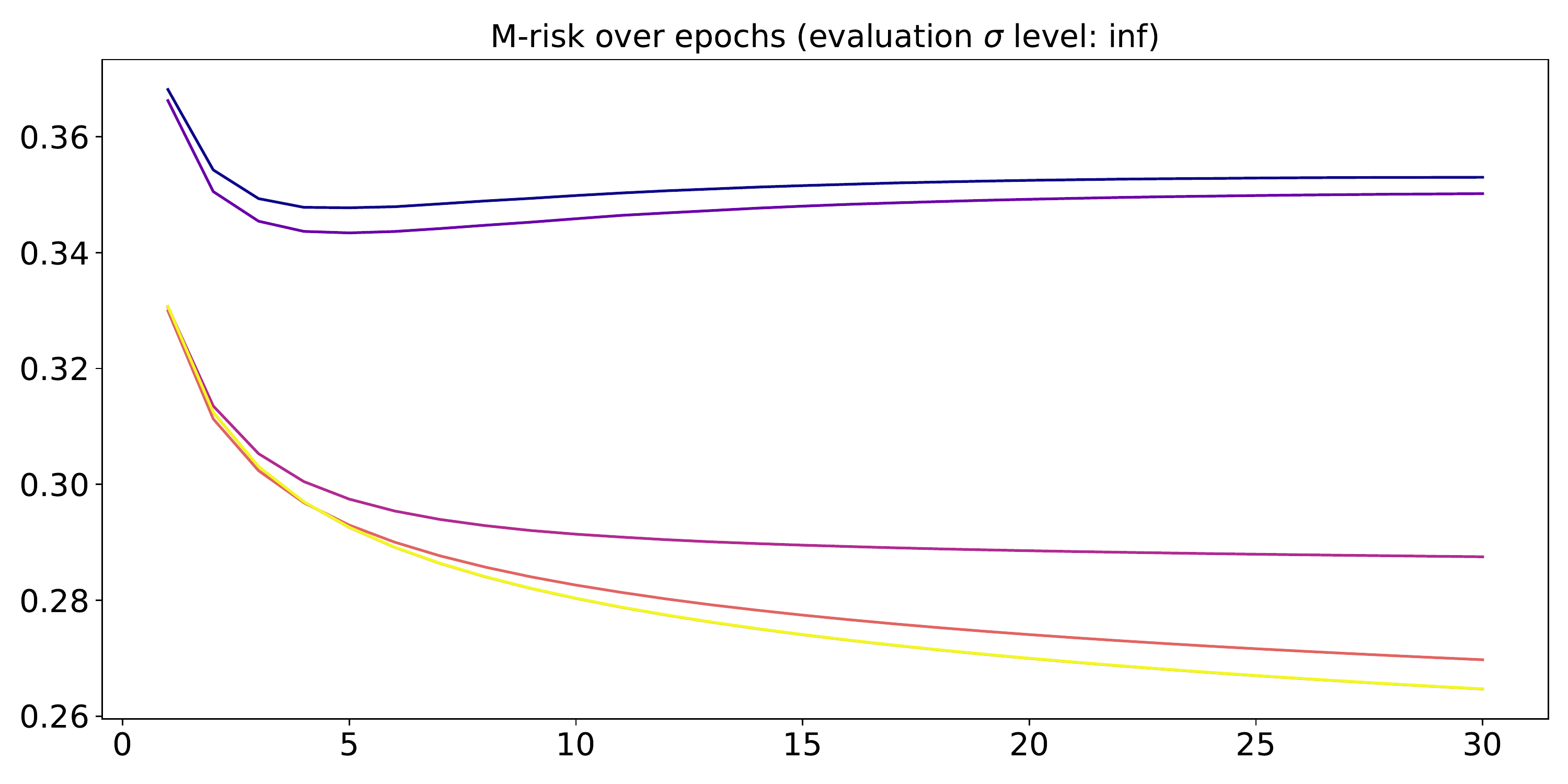}
\caption{Empirical mean estimates of $\risk_{\sigma}$ for a variety of $\sigma \in [0,\infty]$ settings. The colored curves correspond to different $\sigma$ settings in running Algorithm \ref{algo:sgd}, just as in previous plots, whereas the distinct plots correspond to the different $\sigma$ used in \emph{evaluation}.}
\label{fig:real_data_supp_levels}
\end{figure}

\bibliographystyle{../refs/apalike}
\bibliography{../refs/refs.bib}

\end{document}